\documentclass{article}

     \PassOptionsToPackage{numbers, compress}{natbib}



     \usepackage[final]{neurips_2019}


\usepackage[utf8]{inputenc} 
\usepackage[T1]{fontenc}    
\usepackage{hyperref}       
\usepackage{url}            
\usepackage{booktabs}       
\usepackage{amsfonts}       
\usepackage{nicefrac}       
\usepackage{diagbox}    
\usepackage{microtype}
\usepackage{graphicx}
\usepackage{subfigure}
\usepackage{booktabs} 
\usepackage{amsmath,amssymb}
\usepackage{amsthm}
\newtheorem{lemma}{Lemma}
\newtheorem*{proof*}{Proof}
\newtheorem{theorem}{Theorem}
\newtheorem{example}{Example}
\newtheorem*{condition*}{Monotonicity Condition}
\newtheorem{con}{Condition}

\newtheorem{assumption}{Assumption}

\usepackage{mwe} 
\usepackage{xcolor}
\usepackage{enumitem}
\usepackage{float}
\usepackage[font=footnotesize,skip=2pt,figurename=Fig.]{caption}

\ifodd 0

\newcommand{\rev}[1]{{\color{blue}#1}} 
\newcommand{\mrev}[1]{{\color{magenta}#1}} 
\newcommand{\com}[1]{\textbf{\color{red}(COMMENT: #1)}} 
\newcommand{\mcom}[1]{\textbf{\color{purple}(Response: #1)}} 
\newcommand{\edt}[1]{\textbf{\color{magenta}#1}} 
\newcommand{\clar}[1]{\textbf{\color{green}(NEED CLARIFICATION: #1)}}

\else
\newcommand{\rev}[1]{#1}

\newcommand{\mrev}[1]{#1}
\newcommand{\com}[1]{}
\newcommand{\mcom}[1]{}
\newcommand{\edt}[1]{}
\newcommand{\clar}[1]{}
\fi

\DeclareMathOperator*{\argmin}{arg\,min}

\title{Group Retention when Using Machine Learning in Sequential Decision Making: the Interplay between User Dynamics and Fairness }

%

\author{%
   Xueru Zhang\thanks{Equal contribution} \\
  University of Michigan, AnnArbor, USA \\
   \texttt{xueru@umich.edu} \\
   \And
   Mohammad Mahdi Khalili\footnotemark[1]\\
   University of Michigan, AnnArbor, USA \\
   \texttt{khalili@umich.edu} \\
   \AND
   Cem Tekin \\
   Bilkent University, Ankara, Turkey\\
   \texttt{cemtekin@ee.bilkent.edu.tr} \\
   \And
   Mingyan Liu\\
   University of Michigan, AnnArbor, USA \\
   \texttt{mingyan@umich.edu} \\
}

\begin{document}

\maketitle

\begin{abstract}
 Machine Learning (ML) models trained on data from multiple demographic groups can inherit representation disparity \cite{pmlr-v80-hashimoto18a} that may exist in the data: the model may be less favorable to groups contributing less to the training process; this in turn can degrade population retention in these groups over time, and exacerbate representation disparity in the long run. In this study, we seek to understand the interplay between ML decisions and the underlying group representation, how they evolve in a sequential framework, and how the use of fairness criteria plays a role in this process. We show that the representation disparity can easily worsen over time under a natural user dynamics (arrival and departure) model when decisions are made based on a commonly used objective and fairness criteria, resulting in some groups diminishing entirely from the sample pool in the long run. It highlights the fact that fairness criteria have to be defined while taking into consideration the impact of decisions on user dynamics. Toward this end, we explain how a proper fairness criterion can be selected based on a general user dynamics model. 
 
\end{abstract}

\section{Introduction}
Machine learning models developed from real-world data can inherit pre-existing bias in the dataset. When
 these models are used to inform decisions involving humans, it may exhibit similar discrimination against sensitive attributes (e.g., gender and race) \cite{accent_bias,Face++,Score}. Moreover, these decisions can influence human actions, such that bias in the decision is then captured in the dataset used to train future models. This closed feedback loop becomes self-reinforcing and can lead to highly undesirable outcomes over time by allowing biases to perpetuate. For example, speech recognition products such as Amazon's Alexa and Google Home are shown to have accent bias against non-native speakers \cite{accent_bias}, with native speakers experience much higher quality than non-native speakers. If this difference leads to more native speakers using such products while driving away non-native speakers, then over time the data used to train the model may become even more skewed toward native speakers, with fewer and fewer non-native samples. Without intervention, the resulting model becomes even more accurate for the former and less for the latter, which then reinforces their respective user experience~\cite{pmlr-v80-hashimoto18a}.

To address the fairness issues, one commonly used approach is to impose fairness criteria such that certain statistical measures (e.g., positive classification rate, false positive rate, etc.) across different demographic groups are (approximately) equalized \cite{barocas-hardt-narayanan}. However, their effectiveness is studied mostly in a static framework,
where only the immediate impact of the learning algorithm is assessed but not their long-term
consequences. Consider an example where a lender decides whether or not to approve a loan application based on the applicant's credit score.  \rev{Decisions satisfying an identical true positive rate (equal opportunity) across different racial groups can make the outcome seem fairer \cite{hardt2016equality}.} However, this can potentially result in more loans issued to less qualified applicants in the group whose score distribution skews toward higher default risk. The lower repayment among these individuals causes their future credit scores to drop, which moves the score distribution of that group further toward high default risk \cite{pmlr-v80-liu18c}. This shows that intervention by imposing seemingly fair decisions in the short term can lead to undesirable results in the long run. 

In this paper we are particularly interested in understanding what happens to group representation over time when models with fairness guarantee are used, and how it is affected when the underlying feature distributions are also affected/reshaped by decisions. 
%
Toward this end, we introduce a user retention model to capture users' reaction (stay or leave) to the decision. We show that under relatively mild and benign conditions, group representation disparity exacerbates over time and eventually the disadvantaged groups may diminish entirely from the system. This condition unfortunately can be easily satisfied when decisions are made based on a typical algorithm (e.g., taking objective as minimizing the total loss) under some commonly used fairness criteria (e.g., statistical parity, equal of opportunity, etc.). Moreover, this exacerbation continues to hold and can accelerate when feature distributions are affected and change over time. A key observation is that if the factors equalized by the fairness criterion do not match what drives user retention, then the difference in (perceived) treatment will exacerbate representation disparity over time. Therefore, fairness has to be defined with a good understanding of how users are affected by the decisions, which can be challenging in practice as we typically have only incomplete/imperfect information. However, we show that if a model for the user dynamics is available, then it is possible to find the proper fairness criterion that mitigates representation disparity. 

The impact of fairness intervention on both individuals and society has been studied in \cite{pmlr-v80-hashimoto18a,heidari2019on,hu2018short,kannan2019downstream,pmlr-v80-liu18c} and \cite{pmlr-v80-hashimoto18a,heidari2019on,pmlr-v80-liu18c} are the most relevant to the present study.  Specifically, \cite{heidari2019on,pmlr-v80-liu18c} focus on the impact on reshaping features over two time steps, while we study the impact on group representation over an infinite horizon. \cite{pmlr-v80-hashimoto18a} studies group representation disparity in a sequential framework but without inspecting the impact of fairness criteria \rev{or considering feature distributions reshaped by decision}. More on related work can be found in Appendix \ref{app_related}.  

The remainder of this paper is organized as follows. Section \ref{sec:problem} formulates the problem. The impact of various fairness criteria on group representation disparity is analyzed and presented in Section \ref{sec:exacerbation}, as well as potential mitigation. Experiments are presented in Section \ref{sec:result}. Section \ref{sec:conclusion} concludes the paper. All proofs and a table of notations can be found in the \rev{appendices.}

\vspace{-0.2cm}
\section{Problem Formulation}\label{sec:problem}
\vspace{-0.2cm}
Consider two demographic groups $G_a$, $G_b$ distinguished based on some sensitive attribute $K\in\{a,b\}$ (e.g., gender, race).  An individual from either group has feature $X\in \mathbb{R}^d$ and label $Y\in \{0,1\}$, both \rev{can be time varying}. Denote by $G_k^j \subset G_k$ the subgroup with label $j$, $j\in \{0,1\}$, $k \in \{a,b\}$, $f_{k,t}^j(x)$ its feature distribution and $\alpha_k^j(t)$ the size of $G_k^j$ as a fraction of the entire population at time $t$. 
Then $\overline{\alpha}_k(t) := \alpha_k^0(t) + \alpha_k^1(t)$ is the size of $G_k$ as a fraction of the population and the difference between $\overline{\alpha}_a(t)$ and $\overline{\alpha}_b(t)$ measures the representation disparity between two groups at time step $t$. Denote by $g_{k,t}^j= \frac{\alpha_k^j(t)}{\overline{\alpha}_k(t)}$ the fraction of label $j\in\{0,1\}$ in group $k$ at time $t$, then the distribution of $X$ over $G_k$ is given by $f_{k,t}(x) = g_{k,t}^1 f_{k,t}^1(x)+g_{k,t}^0 f_{k,t}^0(x)$ and $f_{a,t}\neq f_{b,t}$.

Consider a sequential setting where the decision maker at each time makes a decision on each individual based on feature $x$. Let $h_{\theta}(x)$ be the decision rule parameterized by $\theta\in\mathbb{R}^d$ and $\theta_k(t)$ be the decision parameter for $G_k$ at time $t$, $k\in\{a,b\}$. The goal of the decision maker at time $t$ is to find the best parameters $\theta_a(t)$, $\theta_b(t)$ such that the corresponding decisions about individuals from $G_a$, $G_b$ maximize its utility (or minimize its loss) in the current time. 
Within this context, the commonly studied fair machine learning problem is the one-shot problem stated as follows, at time step $t$:
\begin{eqnarray}
\min_{\theta_a, \theta_b} ~~~ \pmb{O}_t(\theta_a,\theta_b;\overline{\alpha}_a(t),\overline{\alpha}_b(t)) = \overline{\alpha}_a(t) O_{a,t}(\theta_a) + \overline{\alpha}_b(t) O_{b,t}(\theta_b)~~~
\text{s.t.} ~~~ \Gamma_{\mathcal{C},t}(\theta_a,\theta_b)= 0~,\label{eq:opt}
\end{eqnarray}
where $\pmb{O}_t(\theta_a,\theta_b;\overline{\alpha}_a(t),\overline{\alpha}_b(t))$ is the overall objective of the decision maker at time $t$, which consists of sub-objectives from two groups weighted by their group proportions.\footnote{This is a typical formulation if the objective $\pmb{O}_t$ measures the average performance of decisions over all samples, i.e., $\pmb{O}_t = \frac{1}{|G_a|+|G_b|}(\sum_{i\in G_a}O_{t}^i + \sum_{i\in G_b}O_{t}^i) = \frac{1}{|G_a|+|G_b|}(|G_a|O_{a,t} + |G_b|O_{b,t})$, where $O_{t}^i$ measures the performance of each sample $i$ and $O_{k,t} = \frac{1}{|G_k|}\sum_{i\in G_k}O_{t}^i$ is the average performance of $G_k$.} $\Gamma_{\mathcal{C},t}(\theta_a,\theta_b)=0$ characterizes fairness constraint $\mathcal{C}$, which requires the parity of certain statistical measure (e.g., positive classification rate, false positive rate, etc.) across different demographic groups. 
Some commonly used criteria will be elaborated in Section \ref{subsec:prob}. 
Both $O_{k,t}(\theta_k)$ and  $\Gamma_{\mathcal{C},t}(\theta_a,\theta_b)=0$ depend on $f_{k,t}(x)$.  
The resulting solution $(\theta_a(t),\theta_b(t))$ will be referred to as the one-shot fair decision under fairness $\mathcal{C}$, where the optimality only holds for a single time step $t$. 


In this study, we seek to understand how the group representation evolves in a sequential setting over the long run when different fairness criteria are imposed. To do so, the impact of the current decision on the size of the underlying population is modeled by the following discrete-time retention/attrition dynamics. Denote by $N_k(t)\in\mathbb{R}_+$ the expected number of users in group $k$ at time $t$:
\begin{align}\label{eq:dynamic}
N_k(t+1) &= N_k(t) \cdot \pi_{k,t} (\theta_k(t)) + \beta_k ~,\forall k\in\{a,b\}, 
\end{align}
where $\pi_{k,t} (\theta_k(t))$ is the retention rate, i.e., the probability of a user from $G_k$ who was in the system at time $t$ remaining in the system at time $t+1$. \rev{This is assumed to be a  function of the user experience, which could be the actual accuracy of the algorithm or their perceived (mis)treatment.} 
This \rev{experience} is determined by the application and is different under different contexts. For instance, in domains of speaker verification and medical diagnosis, it can be considered as the average loss, i.e., a user stays if he/she can be classified correctly; in loan/job application scenarios, it can be the rejection rates, i.e., user stays if he/she gets approval. $\beta_k$ is the expected number of exogenous arrivals to $G_k$ and is treated as a constant in our analysis, though our main conclusion  holds when this is modeled as a random variable.  Accordingly, the relative group representation for time step $t+1$ is updated as
$\overline{\alpha}_k(t+1)= \frac{N_k(t+1)}{N_a(t+1) + N_b(t+1)} ,\forall k\in\{a,b\}.$

For the remainder of this paper, $\frac{\overline{\alpha}_a(t)}{\overline{\alpha}_b(t)}$ is used to measure the group representation disparity at time $t$. As $\overline{\alpha}_k(t)$ and $f_{k,t}(x)$ change over time, the one-shot problem \eqref{eq:opt} is also time varying.  
In the next section, we examine what happens to $\frac{\overline{\alpha}_a(t)}{\overline{\alpha}_b(t)}$ when one-shot fair decisions are applied in each step.

\section{Analysis of Group Representation Disparity in the Sequential Setting}\label{sec:exacerbation}

Below we present results on the monotonic change of $\frac{\overline{\alpha}_a(t)}{\overline{\alpha}_b(t)}$ when applying one-shot fair decisions in each step. It shows that the group representation disparity can worsen over time and may lead to the extinction of one group under a monotonicity condition stated as follows.

\begin{condition*}\label{con1}
Consider two one-shot problems defined in \eqref{eq:opt} with objectives $\widehat{\pmb{O}}(\theta_a,\theta_b;\widehat{\overline{\alpha}}_{a},\widehat{\overline{\alpha}}_{b})$ and $\widetilde{\pmb{O}}(\theta_a,\theta_b;\widetilde{\overline{\alpha}}_{a},\widetilde{\overline{\alpha}}_{b})$ over distributions $\widehat{f}_{k}(x)$, $\widetilde{{f}}_{k}(x)$ respectively. 
Let $(\widehat{\theta}_{a},\widehat{\theta}_{b})$, $(\widetilde{\theta}_{a},\widetilde{\theta}_{b})$ be the corresponding fair decisions. We say that two problems $\widehat{\pmb{O}}$ and $\widetilde{\pmb{O}}$ satisfy the monotonicity condition given a dynamic model if for any $\widehat{\overline{\alpha}}_{a}+\widehat{\overline{\alpha}}_{b}=1$ and $\widetilde{\overline{\alpha}}_{a}+\widetilde{\overline{\alpha}}_{b}=1$ such that 
$ \frac{\widehat{\overline{\alpha}}_{a}}{\widehat{\overline{\alpha}}_{b}}< \frac{\widetilde{\overline{\alpha}}_{a}}{\widetilde{\overline{\alpha}}_{b}}$,
the resulting retention rates satisfy $\widehat{\pi}_{a}(\widehat{\theta}_{a})<\widetilde{\pi}_{a}(\widetilde{\theta}_{a}) \text{ and }\widehat{\pi}_{b}(\widehat{\theta}_{b})>\widetilde{\pi}_{b}(\widetilde{\theta}_{b})$.
\end{condition*}
Note that this condition is defined over two one-shot problems and a given dynamic model. \rev{It is not limited to specific families of objective or constraint functions; nor is it limited to one-dimensional features.} 
 The only thing that matters is the group proportions within the system and the retention rates determined by the decisions and the dynamics. It characterizes a situation where when one group's representation increases, the decision becomes more in favor of this group and less favorable to the other, so that the retention rate is higher for the favored group and lower for the other.

\begin{theorem}\label{thm1}
[Exacerbation of representation disparity] Consider a sequence of one-shot problems \eqref{eq:opt} with objective $\pmb{O}_{t} (\theta_a,\theta_b;\overline{\alpha}_a(t), \overline{\alpha}_b(t))$ at each time $t$. Let $(\theta_a(t),\theta_b(t))$ be the corresponding solution and 
 $\pi_{k,t}(\theta_k(t))$ be the resulting retention rate of $G_k$, $k\in\{a,b\}$ under a dynamic model \eqref{eq:dynamic}. If the initial states satisfy $\frac{N_a(1)}{N_b(1)} = \frac{\beta_a}{\beta_b}$, $N_k(2)>N_k(1)$,\footnote{This condition will always be satisfied when the system starts from a near empty state.} and one-shot problems in any two consecutive time steps, i.e.,  $\pmb{O}_{t}$, $\pmb{O}_{t+1}$, satisfy monotonicity condition under the given dynamic model, then the following holds.  Let
 $\diamond$ denote
either $``<"$ or $``="$ or $``>"$, if $\pi_{a,1}(\theta_a(1))\diamond \pi_{b,1}(\theta_b(1))$, then $\frac{\overline{\alpha}_a(t+1)}{\overline{\alpha}_b(t+1)}\diamond\frac{\overline{\alpha}_a(t)}{\overline{\alpha}_b(t)}$ and $\pi_{a,t+1}(\theta_a(t+1))\diamond \pi_{a,t}(\theta_a(t))\diamond\pi_{b,t}(\theta_b(t))\diamond\pi_{b,t+1}(\theta_b(t+1)) $, $\forall t$.
\end{theorem}

Theorem \ref{thm1} says that once a group's proportion starts to change (increase or decrease), it will continue to change in the same direction. This is because under the monotonicity condition, there is a feedback loop between representation disparity and the one-shot decisions: the former drives the latter which results in different user retention rates in the two groups, which then drives future representation.

The monotonicity condition can be satisfied under some commonly used objectives, dynamics and fairness criteria.  This is characterized in the following theorem.

\begin{theorem}\label{thm:MC}
\mrev{[A case satisfying monotonicity condition]} Consider two one-shot problems defined in \eqref{eq:opt} with objectives $\widetilde{\mathbf{O}}(\theta_a,\theta_b;\widehat{\overline{\alpha}}_a , \widehat{\overline{\alpha}}_b) = \widehat{\overline{\alpha}}_a O_a(\theta_a)+\widehat{\overline{\alpha}}_b O_b(\theta_b)$ and $\widehat{\mathbf{O}}(\theta_a,\theta_b;\widetilde{\overline{\alpha}}_a , \widetilde{\overline{\alpha}}_b)= \widetilde{\overline{\alpha}}_a O_a(\theta_a)+\widetilde{\overline{\alpha}}_b O_b(\theta_b)$ \mrev{over the same} distribution $f_k(x)$ with $\widehat{\overline{\alpha}}_a + \widehat{\overline{\alpha}}_b=1$ and $\widetilde{\overline{\alpha}}_a + \widetilde{\overline{\alpha}}_b=1$. Let $(\widehat{\theta}_a,\widehat{\theta}_b)$, $(\widetilde{\theta}_a,\widetilde{\theta}_b)$ be the corresponding solutions. Under the condition that $O_k(\widehat{\theta}_k) \neq O_k(\widetilde{\theta}_k)$ for all possible \mrev{$\widehat{\overline{\alpha}}_k\neq  \widetilde{\overline{\alpha}}_k$}, if the dynamics satisfy $\pi_k(\theta_k) = h_k(O_k(\theta_k))$ for some decreasing function $h_k(\cdot)$, then $\widetilde{\mathbf{O}}$ and $\widehat{\mathbf{O}}$ satisfy the monotonicity condition. 
\end{theorem}
\mrev{
The above theorem identifies a class of cases satisfying the monotonicity condition; these are cases where whenever the group proportion changes, the decision will cause the sub-objective function value to change as well, and  the sub-objective function value drives user departure.  

}

\rev{
%
}

For the rest of the paper we will focus on the one-dimensional setting. 
Some of the cases we consider are special cases of Thm \ref{thm:MC} (Sec. \ref{subsec:dynamic}). Others such as the time-varying feature distribution $f_{k,t}(x)$ considered in Sec. \ref{subsec:reshape} also satisfy the monotonicity condition but are not captured by Thm \ref{thm:MC}.



\subsection{The one-shot problem}\label{subsec:prob}

Consider a binary classification problem based on feature $X\in\mathbb{R}$. 
Let decision rule $h_{\theta}(x) = \textbf{1}(x\geq \theta)$ be a threshold policy parameterized by $\theta\in\mathbb{R}$ and $L(y,h_{\theta}(x)) = \textbf{1}(y\neq h_{\theta}(x))$ the 0-1 loss incurred by applying decision $\theta$ on individuals with data $(x,y)$. 

The goal of the decision maker at each time is to find a pair $(\theta_a(t),\theta_b(t))$ subject to criterion $\mathcal{C}$ such that the total expected loss is minimized,
 i.e., $\pmb{O}_t(\theta_a,\theta_b;\overline{\alpha}_a(t),\overline{\alpha}_b(t)) = \overline{\alpha}_a(t) L_{a,t}(\theta_a) + \overline{\alpha}_b(t) L_{b,t}(\theta_b)$, where $L_{k,t}(\theta_k) =
  g_{k,t}^1\int_{-\infty}^{\theta_k}f_{k,t}^1(x)dx + g_{k,t}^0\int_{\theta_k}^{\infty}f_{k,t}^0(x)dx $ is the expected loss $G_k$ experiences at time $t$. Some examples of $\Gamma_{\mathcal{C},t}(\theta_a,\theta_b)$ are as follows and illustrated in Fig. \ref{fig1}.  
 \begin{enumerate}[noitemsep,topsep=0pt]
 	\item 
 	Simple fair (\texttt{Simple}): $\Gamma_{\texttt{Simple},t} = \theta_a-\theta_b$.  Imposing this criterion simply means we ensure the same decision parameter is used for both groups. 
 	\item 
 	Equal opportunity (\texttt{EqOpt}): $\Gamma_{\texttt{EqOpt},t} =	\int_{\theta_a}^{\infty} f_{a,t}^0(x) dx  -  \int_{\theta_b}^{\infty} f_{b,t}^0(x) dx. 
 	$ This requires the false positive rate (FPR) be the same for different groups (Fig. \ref{fig1:c}),\footnote{Depending on the context, this criterion can also refer to equal false negative rate (FNR), true positive rate (TPR), or true negative rate (TNR), but the analysis is essentially the same.} i.e., $\text{Pr}(h_{\theta_a}(X)=1|Y=0,K=a) = \text{Pr}(h_{\theta_b}(X)=1|Y=0,K=b)$. 
 	\item 
 	Statistical parity (\texttt{StatPar}): $	\Gamma_{\texttt{StatPar},t} = \int_{\theta_a}^{\infty} f_{a,t}(x) dx - \int_{\theta_b}^{\infty} f_{b,t}(x) dx$. This requires different groups be given equal probability of being labelled $1$ (Fig. \ref{fig1:b}), i.e., $\text{Pr}(h_{\theta_a}(X) = 1|K=a) = \text{Pr}(h_{\theta_b}(X) = 1|K=b)$. 
 	\item 
 	Equalized loss (\texttt{EqLos}): $\Gamma_{\texttt{EqLos},t} = L_{a,t}(\theta_a) - L_{b,t}(\theta_b)$. This requires that the expected loss across different groups be equal (Fig. \ref{fig1:d}). 
 \end{enumerate} 
 Notice that for \texttt{Simple}, \texttt{EqOpt} and \texttt{StatPar} criteria, \rev{the following holds:} $\forall t$, $(\theta_a,\theta_b)$, and $(\theta_a',\theta_b')$ that satisfy $\Gamma_{\mathcal{C},t}(\theta_a,\theta_b)=\Gamma_{\mathcal{C},t}(\theta_a',\theta_b')= 0$, \rev{we have} $\theta_a \geq \theta_a'$ if and only if $\theta_b \geq \theta_b'$.
\vspace{-0.2cm}
\begin{minipage}{0.72\textwidth}
Some technical assumptions on the feature distributions are in order. We assume
$f_{a,t}^0(x), f_{a,t}^1(x), f_{b,t}^0(x), f_{b,t}^1(x)$ have bounded support on $[\underline{a}_t^{0},\overline{a}_t^{0}]$, $[\underline{a}_t^{1},\overline{a}_t^{1}]$, $[\underline{b}_t^{0},\overline{b}_t^{0}]$ and $[\underline{b}_t^{1},\overline{b}_t^{1}]$ respectively, and that $f_{k,t}^1(x)$ and $f_{k,t}^0(x)$ overlap, i.e., $\underline{a}_t^{0}<\underline{a}_t^{1}<\overline{a}_t^{0}<\overline{a}_t^{1}$ and $\underline{b}_t^{0}<\underline{b}_t^{1}<\overline{b}_t^{0}<\overline{b}_t^{1}$. The main technical assumption is stated as follows.
\end{minipage}
\hspace{0.1cm}
\begin{minipage}{0.25\textwidth}
\includegraphics[trim={0cm 0cm 0cm 0.cm},clip=true,width=\textwidth]{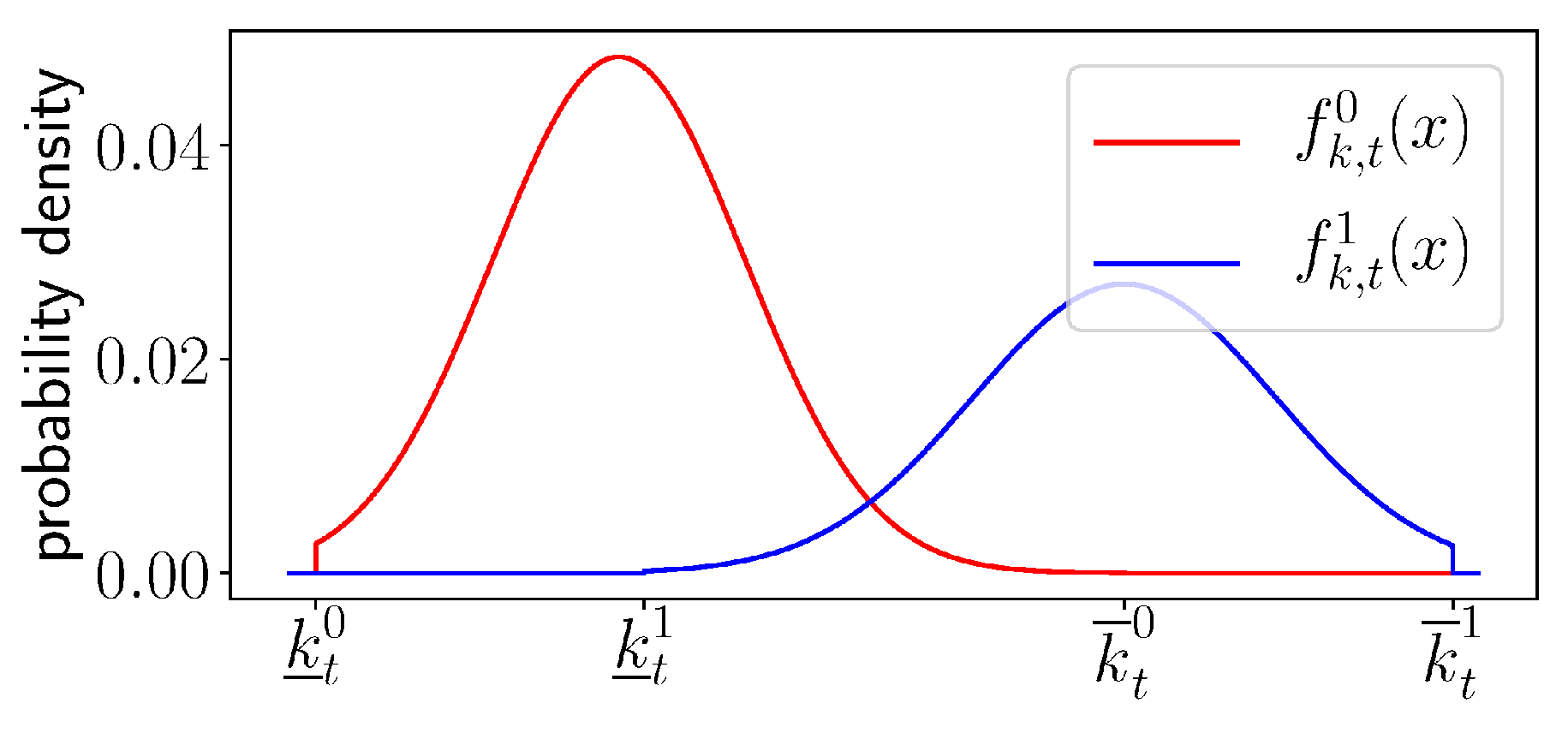}
\captionof{figure}{$f_{k,t}^j(x)$, $k\in\{a,b\}$\label{fig2}}
%
\end{minipage}
\begin{figure}[h]
	\centering   
	\subfigure[each $f_k^j(x)$ for $G_k^j$]{\label{fig1:a}\includegraphics[trim={1.95cm 0cm 2.3cm 0.8cm},clip=true,width=0.245\textwidth]{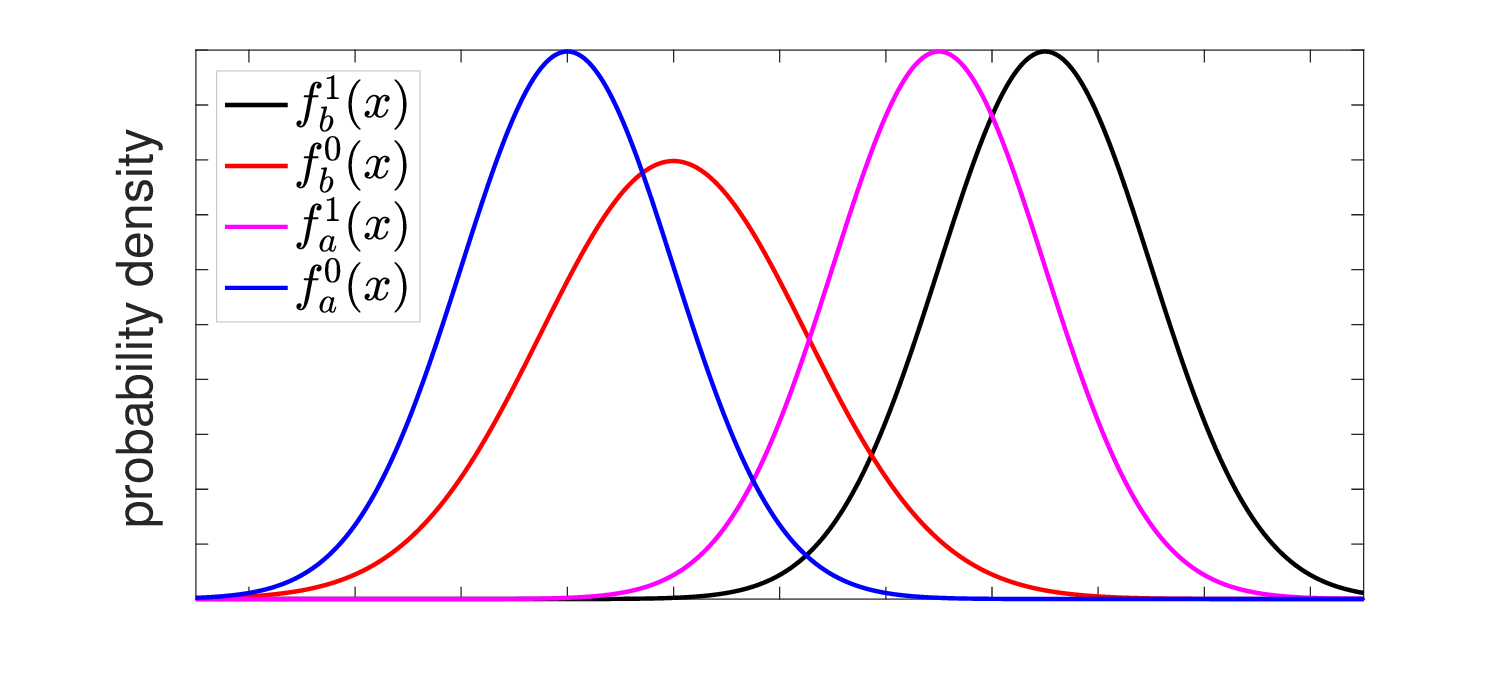}}
	\subfigure[Statistical parity]{\label{fig1:b}\includegraphics[trim={1.95cm 0cm 2.3cm 0.8cm},clip=true,width=0.245\textwidth]{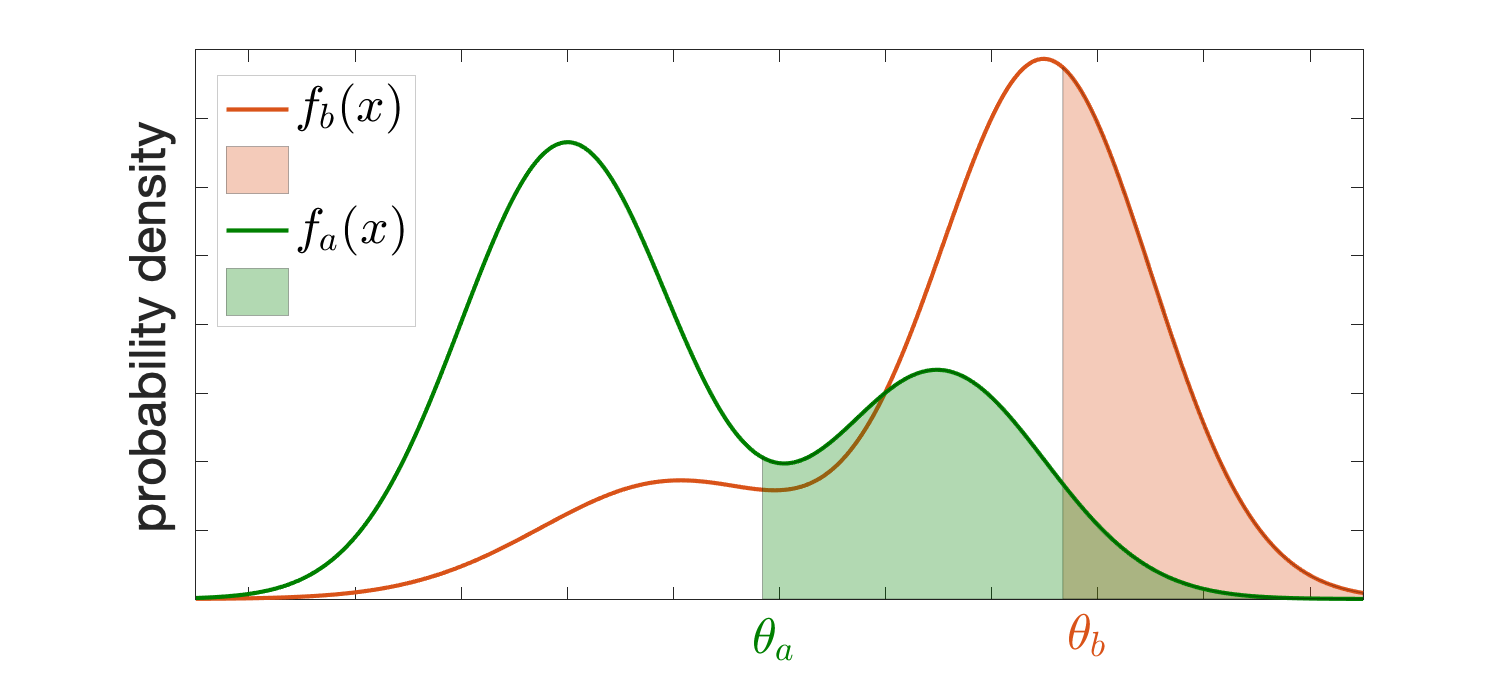}}
	\subfigure[Equal opportunity]{\label{fig1:c}\includegraphics[trim={1.95cm 0cm 2.3cm 0.8cm},clip=true, width=0.245\textwidth]{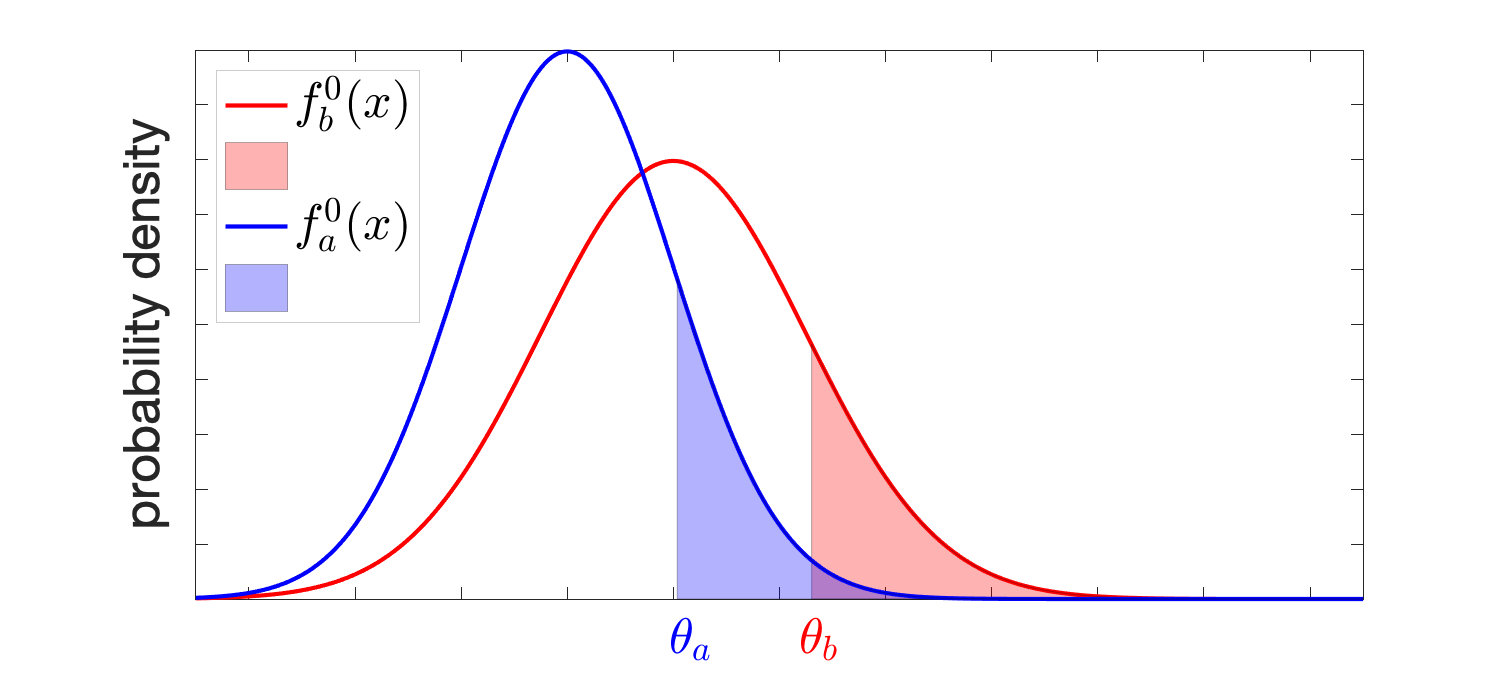}}
	\subfigure[Equalized Loss]{\label{fig1:d}\includegraphics[trim={1.95cm 0cm 2.3cm 0.8cm},clip=true,width=0.245\textwidth]{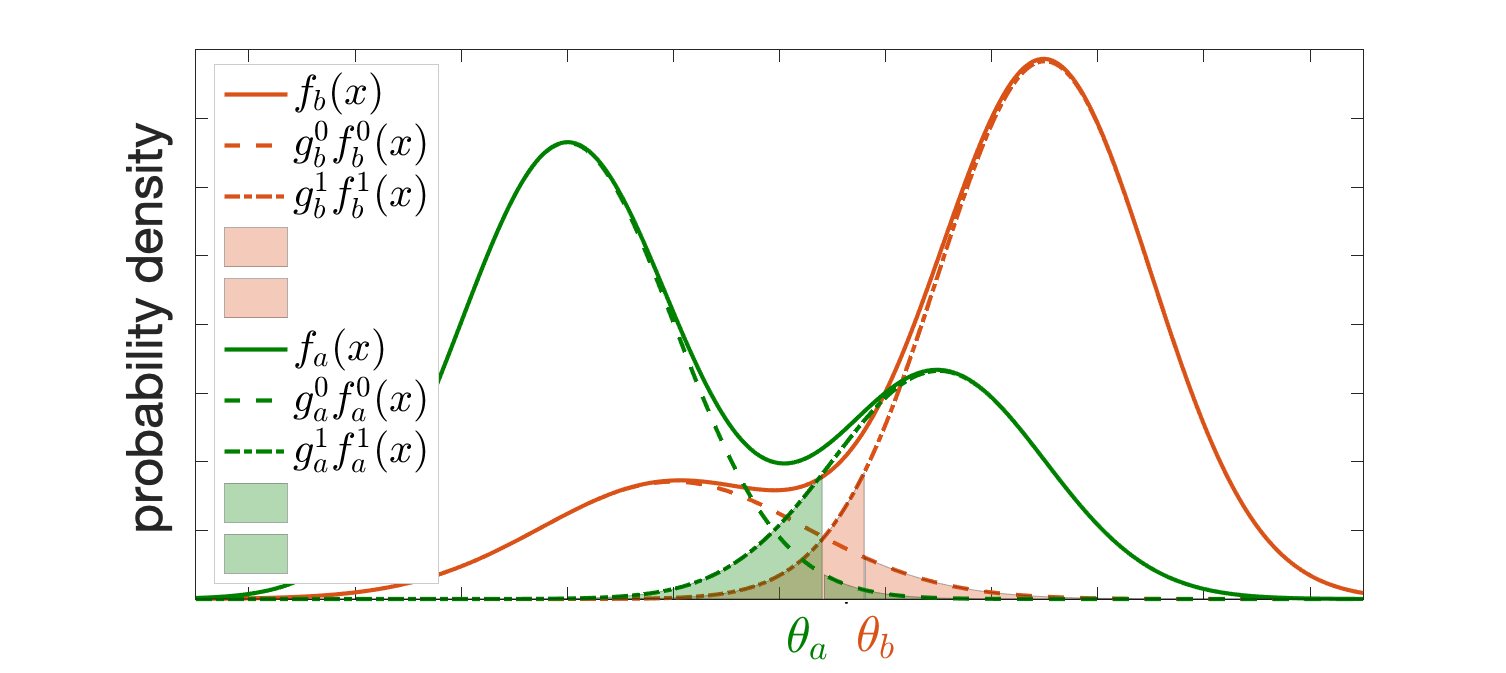}}
	\captionof{figure}{For $G_a$, $G_b$ with group proportions $\alpha_a^1 = 0.55, \alpha_a^0 = 0.15, \alpha_b^1 = 0.1, \alpha_b^0 = 0.2$, a pair of $(\theta_a,\theta_b)$ is fair under each criterion stated in Fig. \ref{fig1:b}-\ref{fig1:d} requires the corresponding colored areas be equal.} 
	\label{fig1}
\end{figure}
\begin{assumption}\label{assumption2}
	Let 
	$\mathcal{T}_{a,t} = [\underline{a}_t^{1},\overline{a}_t^{0}]$ (resp. $\mathcal{T}_{b,t} = [\underline{b}_t^{1},\overline{b}_t^{0}]$) be the overlapping interval  between $f_{a,t}^0(x)$ and $f_{a,t}^1(x)$ (resp. $f_{b,t}^0(x)$ and $f_{b,t}^1(x)$).  Distribution  $f_{k,t}^1(x)$ is strictly increasing and $f_{k,t}^0(x)$ is strictly decreasing over $\mathcal{T}_{k,t}$, $\forall k\in\{a,b\}$.
\end{assumption} 

For bell-shaped feature distributions (e.g., Normal, Cauchy, etc.), Assumption \ref{assumption2} implies that $f_{k,t}^1(x)$ and $f_{k,t}^0(x)$ are sufficiently separated. An example is shown in Fig. \ref{fig2}. \rev{As we show later, this assumption helps us establish the monotonic convergence of decisions $(\theta_a(t),\theta_b(t))$ but is not necessary for the convergence of group representation.} We next find the one-shot decision to this problem under \texttt{Simple}, \texttt{EqOpt}, and \texttt{StatPar} fairness criteria. 
\begin{lemma}\label{lemma_new1}
Under Assumption \ref{assumption2}, $\forall k\in\{a,b\}$, the optimal decision at time $t$ for $G_k$ without considering fairness is 
\begin{eqnarray*}
\theta_k^*(t) =  \argmin_{\theta_k}L_{k,t}(\theta_k) = \begin{cases}
\underline{k}_{t}^1,\text{ if }g_{k,t}^1f_{k,t}^1(\underline{k}_{t}^1)\geq g_{k,t}^0f_{k,t}^0(\underline{k}_{t}^1)\\
\delta_{k,t},\text{ if }g_{k,t}^1f_{k,t}^1(\underline{k}_{t}^1)<g_{k,t}^0f_{k,t}^0(\underline{k}_{t}^1) ~\& ~g_{k,t}^1f_{k,t}^1(\overline{k}_{t}^0)>g_{k,t}^0f_{k,t}^0(\overline{k}_{t}^0) \\
\overline{k}_{t}^0,\text{ if }g_{k,t}^1f_{k,t}^1(\overline{k}_{t}^0)\leq g_{k,t}^0f_{k,t}^0(\overline{k}_{t}^0) 
\end{cases}
\end{eqnarray*}
 where $\delta_{k,t}\in\mathcal{T}_{k,t}$ is defined such that $ g_{k,t}^1f_{k,t}^1(\delta_{k,t})=g_{k,t}^0f_{k,t}^0(\delta_{k,t})$. Moreover, $L_{k,t}(\theta_k)$ is decreasing in $\theta_k$ over $[\underline{k}_{t}^0,\theta_k^*(t)]$ and increasing over $[\theta_k^*(t),\overline{k}_{t}^1]$.  
\end{lemma}
Below we will focus on the case when $\theta_a^*(t) = \delta_{a,t}$ and $\theta_b^*(t) = \delta_{b,t}$, while analysis for the other cases are essentially the same. For \texttt{Simple}, \texttt{StatPar} and \texttt{EqOpt} fairness, $\exists$ a strictly increasing function $\phi_{\mathcal{C},t}$, such that $\Gamma_{\mathcal{C},t}(\phi_{\mathcal{C},t}(\theta_b),\theta_b) = 0$. Denote by $\phi_{\mathcal{C},t}^{-1}$ the inverse of $\phi_{\mathcal{C},t}$. Without loss of generality, we will assign group labels $a$ and $b$ such that $ \phi_{\mathcal{C},t}(\delta_{b,t})< \delta_{a,t}$ and $ \phi_{\mathcal{C},t}^{-1}(\delta_{a,t}) > \delta_{b,t}$, $\forall t$. \footnote{If the change of $f_{a,t}(x)$ and $f_{b,t}(x)$ w.r.t. the decisions follows the same rule (e.g., examples given in Section \ref{subsec:reshape}), \rev{then this relationship holds} $\forall t$.}

\begin{lemma}\label{lemma6} 
	Under \texttt{Simple}, \texttt{EqOpt}, \texttt{StatPar} fairness criteria, one-shot fair decision at time $t$ satisfies $(\theta_a^*(t),\theta_b^*(t)) = \argmin_{\theta_a, \theta_b} \overline{\alpha}_a(t) L_{a,t}(\theta_a) + \overline{\alpha}_b(t) L_{b,t}(\theta_b) \in \{(\theta_a,\theta_b)|\theta_a\in [\phi_{\mathcal{C},t}(\delta_{b,t}),\delta_{a,t}], \theta_b \in [\delta_{b,t},\phi_{\mathcal{C},t}^{-1}(\delta_{a,t})], \Gamma_{\mathcal{C},t}(\theta_a,\theta_b) = 0\}\neq \emptyset$ regardless of group proportions $\overline{\alpha}_a(t),\overline{\alpha}_b(t)$. 
\end{lemma}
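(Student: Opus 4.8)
The plan is to remove the fairness constraint by using the constraint curve $\theta_a=\phi(\theta_b)$ guaranteed for \texttt{Simple}, \texttt{EqOpt} and \texttt{StatPar}, thereby reducing \eqref{eq:opt1} to the one-dimensional problem of minimizing
\[
g(\theta_b) \;=\; \overline{\alpha}_a\, L_a(\phi(\theta_b)) + \overline{\alpha}_b\, L_b(\theta_b)
\]
over the feasible range of $\theta_b$. Since $\phi$ is strictly increasing, monotonicity in $\theta_b$ translates directly into monotonicity in $\theta_a$ and back. The structural facts I will need are that each single-group loss is strictly unimodal with a unique interior minimizer: from the piecewise expression for $L_a$ already given, $L_a$ is decreasing on $[\underline{a}_0,\underline{a}_1]$ and increasing on $[\overline{a}_0,\overline{a}_1]$, while on the overlap $[\underline{a}_1,\overline{a}_0]$ Assumption~\ref{assumption2} makes $g_a^1 f_a^1 - g_a^0 f_a^0$ strictly increasing, so $L_a'$ changes sign exactly once, at $\delta_a$. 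Hence $L_a$ is strictly decreasing on the part of the support left of $\delta_a$ and strictly increasing to its right, and symmetrically $L_b$ is strictly unimodal with minimum at $\delta_b$.

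With these facts, the argument splits the feasible range into three regions according to the position of $\theta_b$ relative to $\delta_b$ and $\phi^{-1}(\delta_a)$. First consider $\theta_b<\delta_b$; then $\theta_a=\phi(\theta_b)<\phi(\delta_b)<\delta_a$, so $L_b$ is decreasing at $\theta_b$ (being left of its minimizer) and $L_a(\phi(\cdot))$ is also decreasing in $\theta_b$ (since $\theta_a$ is left of $\delta_a$ and $\phi$ is increasing). Because $\overline{\alpha}_a,\overline{\alpha}_b\ge 0$, the nonnegative combination $g$ is strictly decreasing there, so no minimizer can have $\theta_b<\delta_b$. Symmetrically, for $\theta_b>\phi^{-1}(\delta_a)$ we get $\theta_a>\delta_a$, both $L_a(\phi(\cdot))$ and $L_b$ are increasing in $\theta_b$, hence $g$ is strictly increasing and no minimizer can have $\theta_b>\phi^{-1}(\delta_a)$. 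A minimizer exists by continuity of $g$ on the compact feasible interval (the distributions have bounded support), so it must lie in $[\delta_b,\phi^{-1}(\delta_a)]$; applying $\theta_a^*=\phi(\theta_b^*)$ and monotonicity of $\phi$ gives $\theta_a^*\in[\phi(\delta_b),\delta_a]$, which is the claim.

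The crucial point to emphasize is that the two exclusions above hold \emph{uniformly} in $(\overline{\alpha}_a,\overline{\alpha}_b)$: outside the band $[\delta_b,\phi^{-1}(\delta_a)]$ there is no genuine trade-off between the groups — moving $\theta_b$ toward the band decreases $L_a$ and $L_b$ simultaneously — so whatever the weights, $g$ is monotone and the optimum is driven inward. The real trade-off, where increasing $\theta_b$ helps one group while hurting the other, occurs only inside the band, which is precisely why the proportions cannot move the feasible band containing the optimum, only the optimum's location within it.

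I expect the main obstacle to be the careful bookkeeping that establishes strict unimodality of each loss on both sides of its interior minimizer: this is where Assumption~\ref{assumption2} is essential and where the piecewise branch structure of $L_k$ must be handled cleanly; once unimodality is in hand, the region argument itself is short. A secondary item to check is that the degenerate-minimizer cases $\theta_k^*\in\{\underline{b}_1,\overline{b}_0\}$ (rather than $\delta_k$) are covered by the same monotonicity reasoning, as the text already notes they are essentially the same.
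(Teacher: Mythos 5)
Your proposal is correct and is essentially the paper's own argument: the paper also excludes the two outer regions $\theta_b<\delta_b$ (with $\theta_a<\phi(\delta_b)$) and $\theta_b>\phi^{-1}(\delta_a)$ (with $\theta_a>\delta_a$) by combining the strict unimodality of $L_a$, $L_b$ around $\delta_a$, $\delta_b$ under Assumption~\ref{assumption2} with the monotone coupling $\theta_a=\phi(\theta_b)$, so that outside the band both group losses worsen simultaneously regardless of $(\overline{\alpha}_a,\overline{\alpha}_b)$. The only difference is presentational — you phrase it as strict monotonicity of the reduced objective $g(\theta_b)$ along the constraint curve (plus an explicit compactness argument for existence), while the paper runs a contradiction comparing any point outside $\mathcal{V}$ against the nearer band endpoint $(\phi(\delta_b),\delta_b)$ or $(\delta_a,\phi^{-1}(\delta_a))$.
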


Lemma \ref{lemma6} shows that \mrev{given feature distributions $f_{a,t}(x)$, $f_{b,t}(x)$, }
 although one-shot fair decisions can be different \mrev{under} different group proportions $\overline{\alpha}_a(t),\overline{\alpha}_b(t)$, these solutions are all bounded by the same compact intervals (Fig. \ref{fig3_1}). Theorem \ref{theorem5} below describes the more specific relationship between group representation $\frac{\overline{\alpha}_a(t)}{\overline{\alpha}_b(t)}$ and the corresponding one-shot decision $(\theta_a(t),\theta_b(t))$.

\begin{theorem}\label{theorem5}
	[Impact of group representation disparity on the one-shot decision] Consider the one-shot problem with group proportions $\overline{\alpha}_a(t), \overline{\alpha}_b(t)$ at time step $t$, let $(\theta_a(t),\theta_b(t))$ be the corresponding one-shot decision under either \texttt{Simple}, \texttt{EqOpt} or \texttt{StatPar} criterion. Under Assumption \ref{assumption2}, $(\theta_a(t),\theta_b(t))$ is unique and satisfies the following: 
	\begin{eqnarray}\label{eq:relation}
	 \Psi_{\mathcal{C},t}(\theta_a(t),\theta_b(t)) = \frac{\overline{\alpha}_a(t)}{\overline{\alpha}_b(t)}, 
	\end{eqnarray}  
	where $\Psi_{\mathcal{C},t}$ is some function increasing in $\theta_a(t)$ and $\theta_b(t)$, with details illustrated in Table \ref{table1}.

\small{
\begin{center}
	\begin{tabular}{ | @{\hskip4pt}c@{\hskip4pt} |@{\hskip4pt}c@{\hskip4pt} |@{\hskip4pt}c@{\hskip4pt} | @{\hskip4pt}c@{\hskip4pt}|}
		\hline
		\rule{0pt}{1.2ex} & $\theta_a \in [\underline{a}_t^0,\underline{a}_t^1]$, $\theta_b \in \mathcal{T}_{b,t}$ &  $\theta_a \in \mathcal{T}_{a,t}$, $\theta_b \in\mathcal{T}_{b,t}$ &  $\theta_a \in \mathcal{T}_{a,t}$, $\theta_b \in [\overline{b}_t^0,\overline{b}_t^1]$\\[1pt] \hline
		\texttt{EqOpt}& $ \Big(\frac{g_{b,t}^1}{g_{b,t}^0}\frac{f_{b,t}^1(\theta_b)}{f_{b,t}^0(\theta_b)} - 1\Big)\frac{g_{b,t}^0}{g_{a,t}^0} $ &\rule{0pt}{6ex} 
		$\frac{ \frac{g_{b,t}^1}{g_{b,t}^0}\frac{f_{b,t}^1(\theta_b)}{f_{b,t}^0(\theta_b)}-1}{1-\frac{g_{a,t}^1}{g_{a,t}^0}\frac{f_{a,t}^1(\theta_a)}{f_{a,t}^0(\theta_a)}}\frac{g_{b,t}^0}{g_{a,t}^0} $
		&  \diagbox[width=93pt,height=42pt]{}{} \\[14pt] \hline
		\texttt{StatPar} & $ 1-\frac{2}{\frac{g_{b,t}^1}{g_{b,t}^0}\frac{f_{b,t}^1(\theta_b)}{f_{b,t}^0(\theta_b)}+1}$  &\rule{0pt}{3ex} $ \Big(1-\frac{2}{\frac{g_{b,t}^1}{g_{b,t}^0}\frac{f_{b,t}^1(\theta_b)}{f_{b,t}^0(\theta_b)}+1}\Big)\Big(\frac{2}{1-\frac{g_{a,t}^1f_{a,t}^1(\theta_a)}{g_{a,t}^0f_{a,t}^0(\theta_a)}}-1 \Big)$& $ \frac{2}{1-\frac{g_{a,t}^1}{g_{a,t}^0}\frac{f_{a,t}^1(\theta_a)}{f_{a,t}^0(\theta_a)}}-1 $ \\[13pt] \hline
		\texttt{Simple}&  \diagbox[width=94pt,height=27pt]{}{}  &\rule{0pt}{4ex} $ \frac{g_{b,t}^1f_{b,t}^1(\theta_b)-g_{b,t}^0f_{b,t}^0(\theta_b)}{g_{a,t}^0f_{a,t}^0(\theta_a)-g_{a,t}^1f_{a,t}^1(\theta_a)}$ &\diagbox[width=93pt,height=27pt]{}{}  \\[6pt]  \hline
	\end{tabular}
\captionof{table}[]{The form of $\Psi_{\mathcal{C},t}(\theta_a,\theta_b)$ for $\mathcal{C} = \texttt{EqOpt}, \texttt{StatPar}, \texttt{Simple}$.\footnotemark}\label{table1}
\footnotetext{\mrev{The cases represented by blank cells cannot happen. When $\mathcal{C}=\texttt{Simple}$, the table only illustrates the result when $\delta_{a,t}, \delta_{b,t} \in \mathcal{T}_{a,t}\cap \mathcal{T}_{b,t}\neq\emptyset$.}} 
\end{center}}
\end{theorem}
\mrev{Note that under Assumption \ref{assumption2}, both $\frac{g_{k,t}^1f_{k,t}^1(\theta_k)}{g_{k,t}^0f_{k,t}^0(\theta_k)}$ and $g_{k,t}^1f_{k,t}^1(\theta_k)-g_{k,t}^0f_{k,t}^0(\theta_k)$ are strictly increasing in $\theta_k\in\mathcal{T}_{k,t}$, $k\in\{a,b\}$, and $\theta_a(t) = \phi_{\mathcal{C},t}(\theta_b(t))$ for some strictly increasing function. According to $\Psi_{\mathcal{C},t}(\theta_a,\theta_b)$ given in Table \ref{table1}, the larger $\frac{\overline{\alpha}_a(t)}{\overline{\alpha}_b(t)}$ results in the larger $\frac{g_{k,t}^1f_{k,t}^1(\theta_k)}{g_{k,t}^0f_{k,t}^0(\theta_k)}$ and  $g_{k,t}^1f_{k,t}^1(\theta_k)-g_{k,t}^0f_{k,t}^0(\theta_k)$, thus the larger $\theta_a(t)$ and $\theta_b(t)$.}
The above theorem characterizes the impact of the underlying population on the one-shot decisions. Next we investigate how the one-shot decision impacts the underlying population. 

\subsection{Participation dynamics}\label{subsec:dynamic}
How a user reacts to the decision is captured by the retention dynamics \eqref{eq:dynamic} which is fully characterized by the retention rate. Below we introduce two types of (perceived) mistreatment as examples when the monotonicity condition is satisfied.

\textbf{(1) User departure driven by model accuracy:}
Examples include discontinuing the use of products viewed as error-prone, e.g., speech recognition software, or medical diagnostic tools. 
In these cases, the determining factor is the classification error, i.e., users who experience low accuracy have a higher probability of leaving the system. The retention rate at time $t$ can be modeled as $\pi_{k,t}(\theta_k) = \nu(L_{k,t}(\theta_k))$ for some strictly \textit{decreasing} function $\nu(\cdot): [0,1]\rightarrow[0,1]$. 

\textbf{(2) User departure driven by intra-group disparity:}
Participation can also be affected by intra-group disparity, that between users from the same demographic group but with different labels, i.e., $G_k^j$ for $j\in\{0,1\}$. 
An example is in making financial assistance decisions where one expects to see more awards given to those qualified than to those unqualified. 
Denote by \mrev{$D_{k,t}(\theta_k) = \text{Pr}(Y = 1,h_{\theta_k}(X) = 1|K=k)-\text{Pr}(Y =  0,h_{\theta_k}(X) = 1|K=k) =
	 \int_{\theta_k}^{\infty}\big(g_k^1f_{k,t}^1(x) - g_k^0f_{k,t}^0(x)\big)dx$} as intra-group disparity of $G_k$ at time $t$, then the retention rate can be modeled as $\pi_{k,t}(\theta_k) = w(D_{k,t}(\theta_k))$ for some strictly \textit{increasing} function $w(\cdot)$ mapping to $[0,1]$.

\begin{theorem}\label{proposition1}
 Consider the one-shot problem \eqref{eq:opt} defined in Sec. \ref{subsec:prob} under either \texttt{Simple}, \texttt{EqOpt} or \texttt{StatPar} criterion, and assume distributions $f_{k,t}(x) = f_{k}(x)$ are fixed over time. Then the one-shot problems in any two consecutive time steps, i.e., $\pmb{O}_{t}, \pmb{O}_{t+1}$, satisfy the monotonicity condition under dynamics \eqref{eq:dynamic} with $\pi_{k}(\cdot)$ being either $\nu(L_{k}(\cdot))$ or $w(D_{k}(\cdot))$.\footnote{When $f_{k,t}(x)=f_k(x)$, $\forall t$, subscript $t$ is omitted in some notations ($\phi_{\mathcal{C},t}, \delta_{k,t}, \pi_{k,t}$, etc.) for simplicity. }  This implies that Theorem \ref{thm1} holds and $(\theta_a(t),\theta_b(t))$ converges monotonically to a constant decision $(\theta_a^{\infty},\theta_b^{\infty})$. Furthermore, $\underset{t\rightarrow \infty}{\lim}\frac{\overline{\alpha}_a(t) }{\overline{\alpha}_b(t) }=\frac{\beta_a}{\beta_b}\frac{1-\pi_b(\theta_b^{\infty})}{1-\pi_a(\theta_a^{\infty})}$.
\end{theorem}


When distributions are fixed, the discrepancy between $\pi_a(\theta_a(t))$ and $\pi_b(\theta_b(t))$ increases over time as $(\theta_a(t),\theta_b(t))$ changes. The process is illustrated in Fig. \ref{fig3_1}, where
 $\theta_a(t)\in [\phi_{\mathcal{C}}(\delta_b),\delta_a], \theta_b(t) \in [\delta_b,\phi_{\mathcal{C}}^{-1}(\delta_a)]$ are constrained by the same interval $\forall t$. Left and right plots illustrate cases when $\pi_{k}(\theta_k) = \nu(L_{k}(\theta_k))$ and $\pi_{k}(\theta_k) = w(D_{k}(\theta_k))$ respectively. 
 
Note that the case considered in Theorem \ref{proposition1} is a special case of Theorem \ref{thm:MC}, with distributions $f_{k,t}(x) = f_k(x)$ fixed, $O_k(\theta_k) = L_k(\theta_k)$ and both dynamics $\pi_k(\cdot) = \nu(L_k(\cdot))$ and $\pi_k(\cdot) = w(D_k(\cdot))$ some decreasing functions of $L_k(\cdot)$.\footnote{By Fig. \ref{fig3_1}, we have $D_k(\theta) = g_k^1 - L_k(\theta)$.} \rev{In this special case we obtain the additional result of monotonic convergence of decisions, which holds due to Assumption \ref{assumption2}.} 

\begin{minipage}{0.42\textwidth}
Once $\frac{\overline{\alpha}_a(t)}{\overline{\alpha}_b(t)}$ starts to increase, the corresponding one-shot solution $(\theta_{a}(t),\theta_b(t))$ also increases (Theorem \ref{theorem5}), meaning that $\theta_a(t)$ moves closer to $\theta_a^* = \delta_a$ and $\theta_b(t)$ moves further away from $\theta_b^* = \delta_b$ (solid arrows in Fig. \ref{fig3_1}).  Consequently, $L_a(\theta_a(t))$ and $D_{b}(\theta_b(t))$ decrease while $L_b(\theta_b(t))$ and $D_{a}(\theta_a(t))$ increase. Under both dynamics, $\pi_{a}(\theta_a(t))$ increases and $\pi_{b}(\theta_b(t))$ decreases, resulting in the increase of $\frac{\overline{\alpha}_a(t+1)}{\overline{\alpha}_b(t+1)}$; the feedback loop becomes self-reinforcing and representation disparity worsens.


\end{minipage}
\hspace{0.2cm}
\begin{minipage}{0.58\textwidth}
\includegraphics[trim={0cm 0cm 0cm 0cm},clip=true,width=0.495\textwidth]{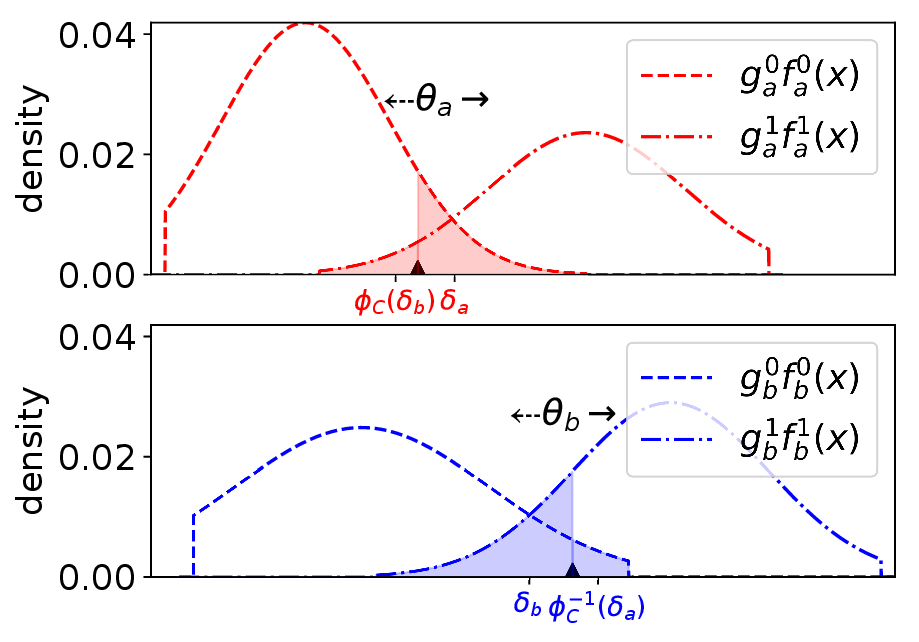}
\includegraphics[trim={0cm 0cm 0cm 0cm},clip=true, width=0.495\textwidth]{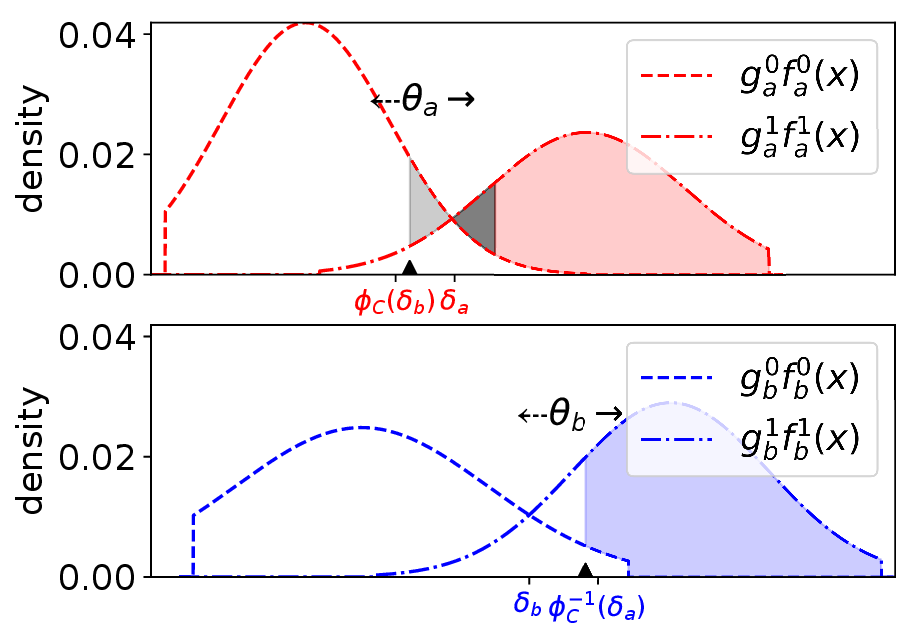}
\captionof{figure}{ Illustration of $L_{k}(\theta_k)$ and $D_{k}(\theta_k)$ w.r.t. $\theta_k$: Each black triangle represents the one-shot decision $\theta_k$; size of the colored area represents the value of $L_{k}(\theta_k)$ (left) or $D_{k}(\theta_k)$ (right). Note that for the right plot, there are two gray regions and the darker one is for compensating the lighter one thus they are of the same size; the smaller gray regions result in the larger $D_a(\theta_a)$.	\label{fig3_1}}
\end{minipage}

\subsection{Impact of decisions on reshaping feature distributions}\label{subsec:reshape}

\begin{minipage}{0.66\textwidth}
Our results so far show the potential adverse impact on group representation when imposing certain fairness criterion, while their underlying feature distributions are assumed fixed.  Below we examine what happens when decisions also affect feature distributions over time, i.e., $f_{k,t}(x)= g_{k,t}^1f_{k,t}^1(x) + g_{k,t}^0f_{k,t}^0(x)$, which is not captured by Theorem \ref{thm:MC}. We will focus on the dynamics $\pi_{k,t}(\theta_k) = \nu(L_{k,t}(\theta_k))$. Since $G_k^0$, $G_k^1$ may react differently to the same $\theta_k$, we consider two scenarios as illustrated in \rev{Fig. \ref{fig3_2}}, which shows the change in distribution from $t$ to $t+1$ when $G^1_k$ (resp. $G^0_k$) experiences the higher (resp. lower) loss at $t$ than $t-1$ (see Appendix \ref{app_reshape} for more detail): $\forall j\in\{0,1\}$, 
\end{minipage}
\hspace{0.2cm}
\begin{minipage}{0.34\textwidth}
	\includegraphics[trim={1cm 0cm 1cm 1cm},clip=true, width=\textwidth]{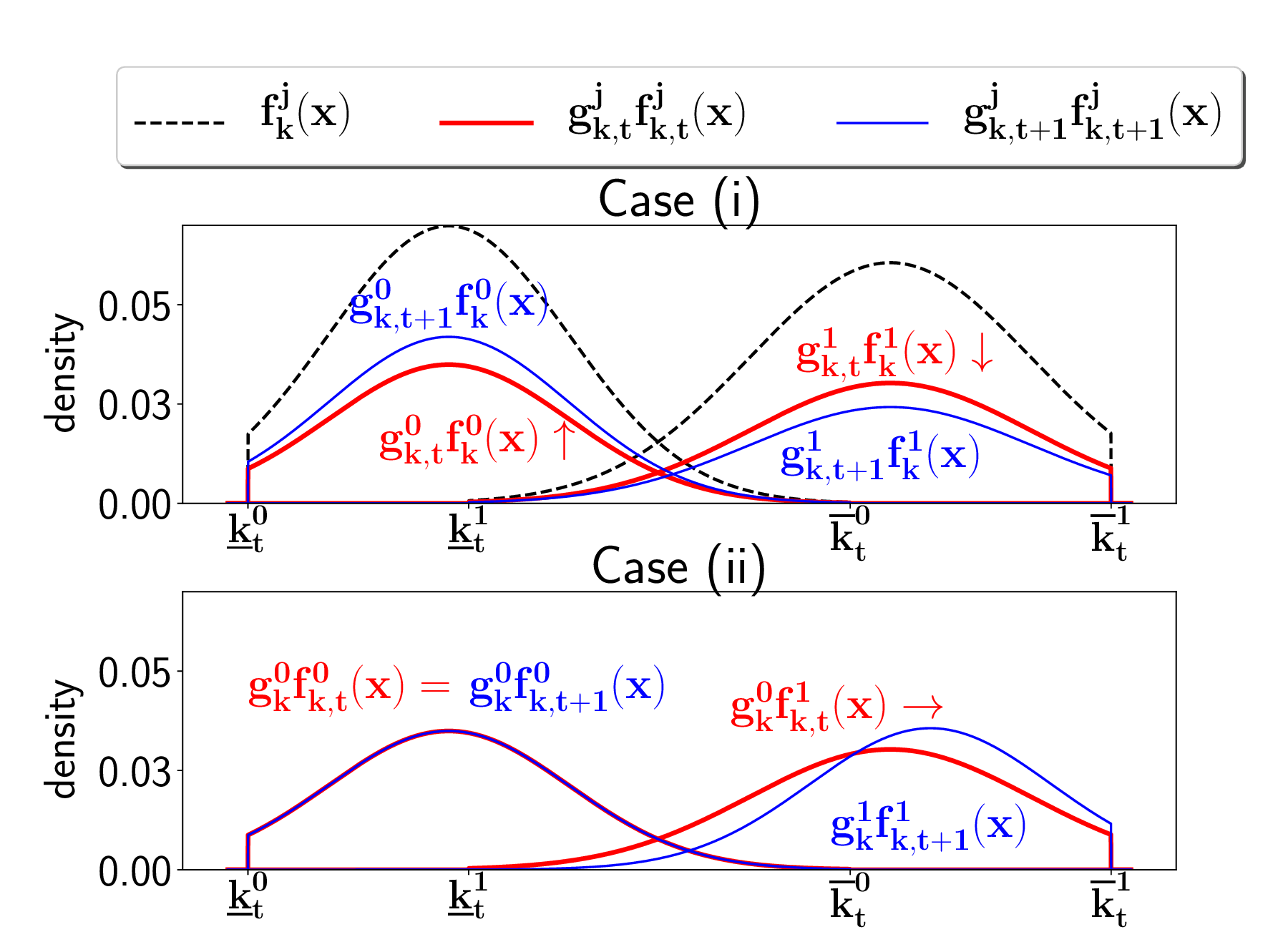}
	\captionof{figure}{ \mrev{Visualization of decisions shaping feature distributions.  \label{fig3_2}}}
\end{minipage}

\textbf{Case (i):}  $f_{k,t}^j(x) = f_{k}^j(x)$ remain fixed but $g_{k,t}^j$ changes over time given $G_k^j$'s retention determined by its perceived loss $L_{k,t}^j$,\footnote{Here $L_{k,t}^1(\theta_k) =
 	\int_{-\infty}^{\theta_k}f_{k,t}^1(x)dx$ and $
 L_{k,t}^0(\theta_k) =	\int_{\theta_k}^{\infty}f_{k,t}^0(x)dx$.
} In other words, for $i\in\{0,1\}$ and $t\geq 2$ such that $L_{k,t}^i(\theta_k(t)) < L_{k,t-1}^i(\theta_k(t-1))$, \rev{we have} $g_{k,t+1}^i>g_{k,t}^i$ and $g_{k,t+1}^{ -i}<g_{k,t}^{ -i}$, where $-i :=\{0,1\}\setminus \{i\}$.

\textbf{Case (ii):} $g_{k,t}^j = g_k^j$ but for subgroup $G_k^{i}$ that is less favored by the decision over time, its members make extra effort such that $f_{k,t}^{i}(x)$ skews toward the direction of lowering their losses.\footnote{Suppose Assumption \ref{assumption2}  holds for all $f_{k,t}^{j}(x)$ and their support does not change, then $f_{k,t}^{1}(x)$ and $f_{k,t}^{0}(x)$ overlap over $\mathcal{T}_k=[\underline{k}^1,\overline{k}^0]$, $\forall t$.}  In other words, for $i\in\{0,1\}$ and $t\geq 2$ such that $L_{k,t}^{i}(\theta_k(t)) > L_{k,t-1}^{i}(\theta_k(t-1))$, \rev{we have} $f_{k,t+1}^{i}(x) <  f_{k,t}^{i}(x), ~\forall x \in \mathcal{T}_k$, while  $f_{k,t+1}^{- i}(x) = f_{k,t}^{ -i}(x),~\forall x$, where $-i :=\{0,1\}\setminus \{i\}$.

In both cases, under the condition that $f_{k,t}(x)$ is relatively insensitive to the change in one-shot decisions, representation disparity can worsen and deterioration accelerates. The precise conditions are formally given in Conditions \ref{exacerbation} and \ref{accelerate} in Appendix \ref{app_reshape}, which describes the case 
where the change from $f_{k,t}(x)$ to $f_{k,t+1}(x)$ is sufficiently small while the change from $\frac{\overline{\alpha}_a(t)}{\overline{\alpha}_b(t)}$ to  $\frac{\overline{\alpha}_a(t+1)}{\overline{\alpha}_b(t+1)}$ and the resulting decisions from $\theta_k(t)$ to $\theta_k(t+1)$ are sufficiently large. These conditions hold in scenarios when the change in feature distributions induced by the one-shot decisions is a slow process.

\begin{theorem}\label{thm2}
[Exacerbation in representation disparity can accelerate] Consider the one-shot problem defined in \eqref{eq:opt} under either \texttt{Simple}, \texttt{EqOpt} or \texttt{StatPar} fairness criterion. 
Let the one-shot decision, representation disparity and retention rate at time $t$ be given by $\theta_k^f(t)$, $\frac{\overline{\alpha}_a^f(t)}{\overline{\alpha}_b^f(t)}$, and $\pi_{k,t}^f(\theta_k^f(t))$ \rev{when distribution $f_k(x)$ is fixed $\forall t$.  Let the same be denoted by $\theta_k^r(t)$, $\frac{\overline{\alpha}_a^r(t)}{\overline{\alpha}_b^r(t)}$, and $\pi_{k,t}^r(\theta_k^r(t))$ when $f_{k,t}(x)$ changes according to either case (i) or (ii) defined above.} Assume we start from the same distribution $f_{k,1}(x)=f_k(x)$. Under Conditions \ref{exacerbation} and \ref{accelerate} in Appendix \ref{app_reshape}, if $\pi_{a,1}^f(\theta_a^f(1))=\pi_{a,1}^r(\theta_a^r(1))\diamond \pi_{b,1}^f(\theta_b^f(1))=\pi_{b,1}^r(\theta_b^r(1))$, then $\frac{\overline{\alpha}_a^r(t+1)}{\overline{\alpha}_b^r(t+1)}\diamond\frac{\overline{\alpha}_a^r(t)}{\overline{\alpha}_b^r(t)}$ (disparity worsens) and $\frac{\overline{\alpha}_a^r(t+1)}{\overline{\alpha}_b^r(t+1)}\diamond\frac{\overline{\alpha}_a^f(t+1)}{\overline{\alpha}_b^f(t+1)}$ (accelerates), $\forall t$, where $\diamond$ represents either $``<" $or $``>"$. 
\end{theorem}

\subsection{Potential mitigation \& finding the proper fairness criterion from participation dynamics}\label{subsec:method}
The above results show that when the objective is to minimize the average loss over the entire population, applying commonly used and seemingly fair decisions at each time can exacerbate representation disparity over time under reasonable participation dynamics.
It highlights the fact that fairness has to be defined with a good understanding of how users are affected by the algorithm, and how they may react to it. For instance, consider the dynamics with $\pi_{k,t}(\theta_k) = \nu(L_{k,t}(\theta_k))$, then imposing \texttt{EqLos} fairness (Fig. \ref{fig1:d}) at each time step would sustain group representations, i.e., $\underset{t\rightarrow \infty}{\lim}\frac{\overline{\alpha}_a(t)}{\overline{\alpha}_b(t)} = \frac{\beta_a}{\beta_b}$, as we are essentially equalizing departure when equalizing loss. In contrast, under other fairness criteria the factors that are equalized do not match what drives departure, and different losses incurred to different groups cause significant change in group representation over time. 



In reality the true dynamics is likely a function of a mixture of factors given the application context, and a proper fairness constraint $\mathcal{C}$ should be adopted accordingly. Below we illustrate a method for finding the proper criterion from a general dynamics model defined below when $f_{k,t}(x) = f_k(x), \forall t$: 
\begin{eqnarray}\label{eq:dynamic_g}
N_k(t+1) = \Lambda (N_k(t),\{\pi_{k}^m(\theta_k(t))\}_{m=1}^M,\beta_k), ~\forall k\in \{a,b\},
\end{eqnarray}
where user retention in $G_k$ is driven by $M$ different factors $\{\pi_{k}^m(\theta_k(t))\}_{m=1}^M$ (e.g. accuracy, true positives, etc.) and each of them depends on decision $\theta_k(t)$. Constant $\beta_k$ is the intrinsic growth rate while the actual arrivals may depend on $\pi_{k}^m(\theta_k(t))$. The expected number of users at time $t+1$ depends on users at $t$ and new users; both may be effected by $\pi_{k}^m(\theta_k(t))$. This relationship is characterized by a general function $\Lambda$. Let $\Theta$ be the set of all  possible decisions.
\begin{assumption}\label{stable}
	$\exists (\theta_a,\theta_b)\in \Theta\times \Theta$ such that $\forall k\in \{a,b\}$, $\hat{N}_k = \Lambda (\hat{N}_k,\{\pi_{k}^m(\theta_k)\}_{m=1}^M,\beta_k)$ and $|\Lambda '(\hat{N}_k,\{\pi_{k}^m(\theta_k)\}_{m=1}^M,\beta_k) |<1$ hold for some $\hat{N}_k$, i.e., dynamics \eqref{eq:dynamic_g} under some decision pairs $(\theta_a,\theta_b)$ have stable fixed points, where $\Lambda '$ denotes the derivative of $\Lambda$ with respect to $N_k$.
\end{assumption}

To find the proper fairness constraint, let $\mathcal{C}$ be the set of decisions $(\theta_a,\theta_b)$ that can sustain group representation. It can be found via the following optimization problem; the set of feasible solutions is guaranteed to be non-empty under Assumption \ref{stable}. 
\begin{eqnarray*}
	\mathcal{C} = \argmin_{(\theta_a,\theta_b)} \Big|\frac{\tilde{N}_a}{\tilde{N}_b} - \frac{\beta_a}{\beta_b}\Big| \text{ s.t. }  \tilde{N}_k = \Lambda (\tilde{N}_k,\{\pi_k^m(\theta_k)\}_{m=1}^M,\beta_k) \in \mathbb{R}_+, \theta_k \in \Theta, \forall k\in\{a,b\}.
\end{eqnarray*}
\begin{minipage}{0.48\textwidth}
The idea is to first select decision pairs whose corresponding dynamics can lead to stable fixed points $(\tilde{N}_a, \tilde{N}_b)$; then among them select those that are best in sustaining group representation, which may or may not be unique.  Sometimes guaranteeing the perfect fairness can be unrealistic and a relaxed version is preferred, in which case all pairs $(\theta_a,\theta_b)$ satisfying $|\frac{\tilde{N}_a}{\tilde{N}_b} - \frac{\beta_a}{\beta_b}|\leq \min\{|\frac{\tilde{N}_a}{\tilde{N}_b} - \frac{\beta_a}{\beta_b}| \}+ \Delta$ constitute the $\Delta$-fair set. An example under dynamics $N_k(t+1) =N_k(t) \pi_{k}^2(\theta_k(t))+\beta_k\pi_{k}^1(\theta_k(t))$ is illustrated in Fig. \ref{fig:example}, where all curves with $\epsilon\leq \Delta\frac{\beta_b}{\beta_a}$ constitute $\Delta$-fair set (perfect fairness set is given by the deepest red curve with $\epsilon=0$). See Appendix \ref{app_example} for more details.
\end{minipage}
\hspace{0.2cm}
\begin{minipage}{0.5\textwidth}
	\centering   
	\includegraphics[width=0.48\textwidth]{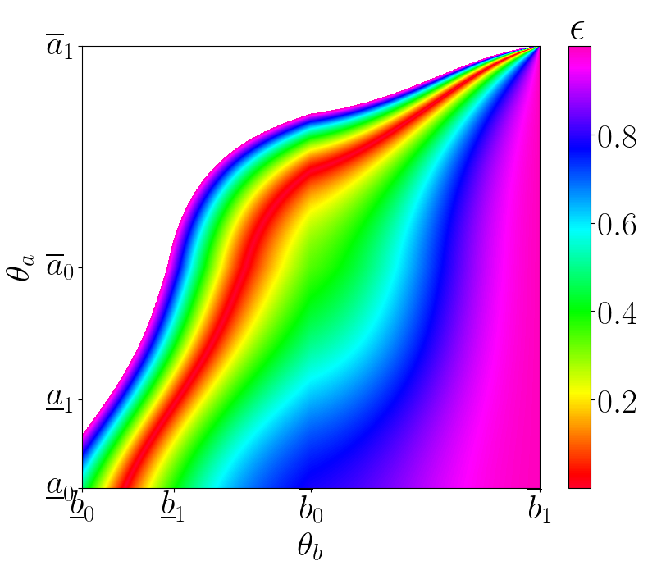}
\includegraphics[width=0.48\textwidth]{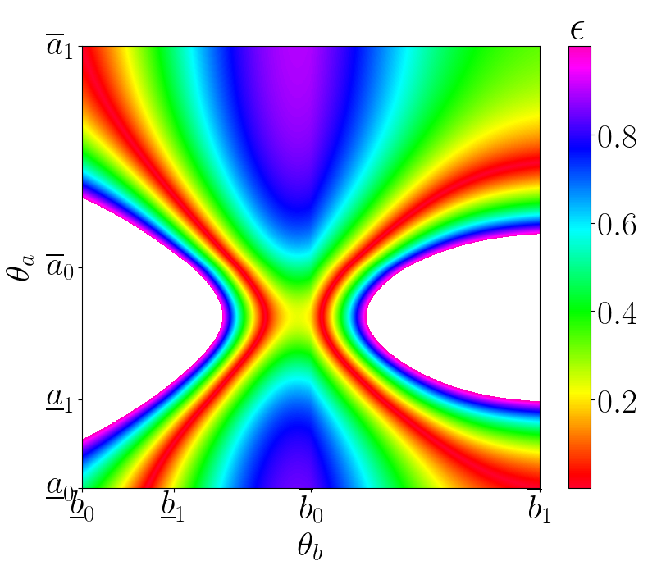}
	\captionof{figure}{Left plot: $\pi_{k}^2(\theta_k) =\nu( \int_{\theta_k}^{\infty}f_k(x)dx )$,
		$\pi_{k}^1(\theta_k) = \nu(L_k(\theta_k))$; right plot: $\pi_{k}^2(\theta_k) =\nu(L_k(\theta_k))$, $\pi_{k}^1(\theta_k) =1 $, and $\nu(x) = 1-x$. Value of each pair $(\theta_a,\theta_b)$ corresponds to $|\frac{\tilde{N}_a}{\tilde{N}_b}-\frac{\beta_a}{\beta_b}|$ measuring how well it can sustain the group representation. All points $(\theta_a,\theta_b)$ with the same value of $|\frac{\tilde{N}_a}{\tilde{N}_b} - \frac{\beta_a}{\beta_b}| = \frac{\beta_a}{\beta_b}\epsilon$ form a curve of the same color with $\epsilon \in[0,1]$ shown in the color bar.	\label{fig:example}}
\end{minipage}

\section{Experiments}\label{sec:result}
We first performed a set of experiments on synthetic data where every $G_k^j$, $k\in\{a,b\},j\in\{0,1\}$ follows
the truncated normal (Fig. \ref{fig2}) distributions. 
A sequence of one-shot fair decisions are used and group representation changes over time according to dynamics \eqref{eq:dynamic} with $\pi_k(\theta_k) = \nu(L_k(\theta_k))$.  
\rev{Parameter settings and more experimental results (e.g., sample paths, results under other dynamics and when feature distributions are learned from data) are presented in Appendix \ref{supp:experiment}.}

\begin{figure}[h]
	\centering   
	\subfigure[\texttt{Simple} fair]{\label{fig3:a}\includegraphics[trim={0cm 0cm 0cm 0cm},clip=true,width=0.245\textwidth]{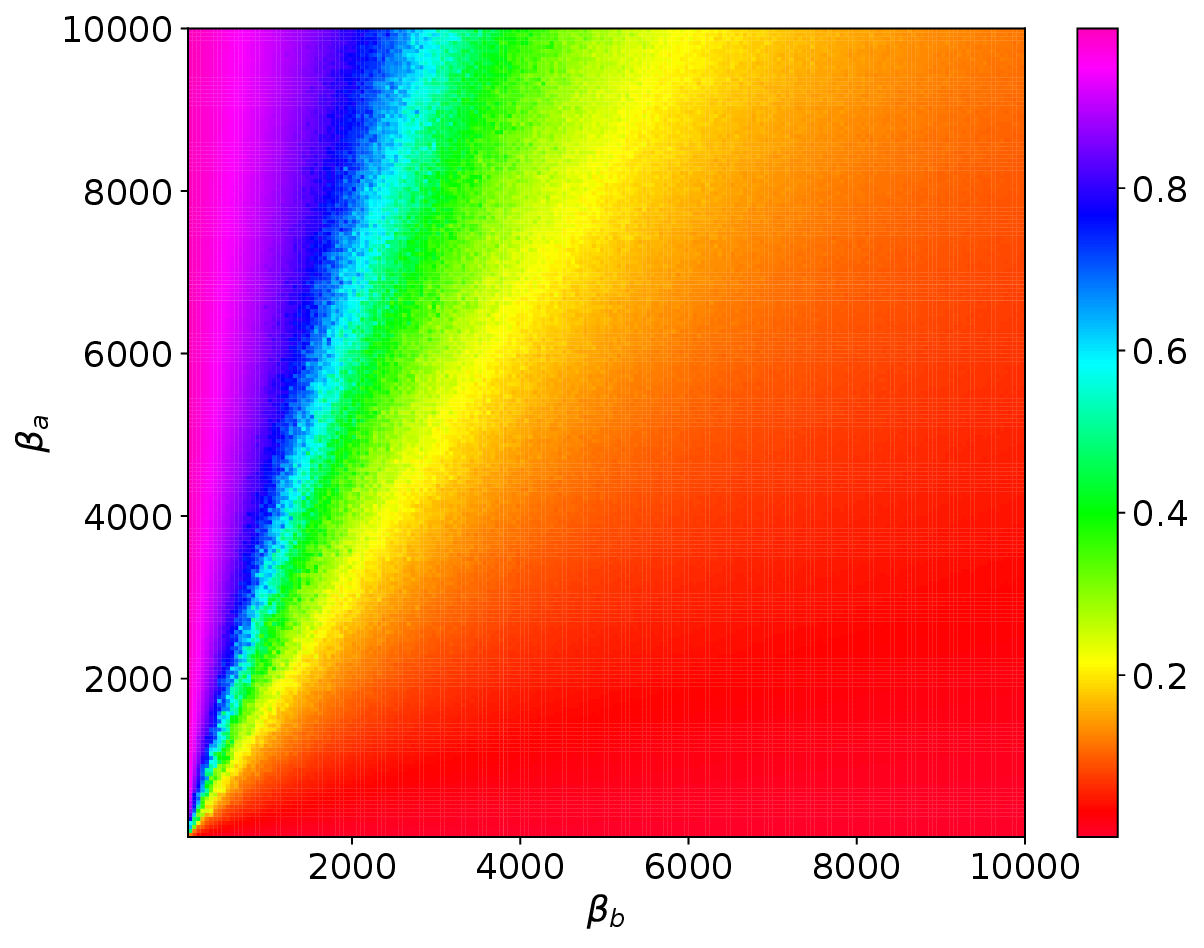}}
	\subfigure[\texttt{StatPar} fair]{\label{fig3:b}\includegraphics[trim={0cm 0cm 0cm 0cm},clip=true,width=0.245\textwidth]{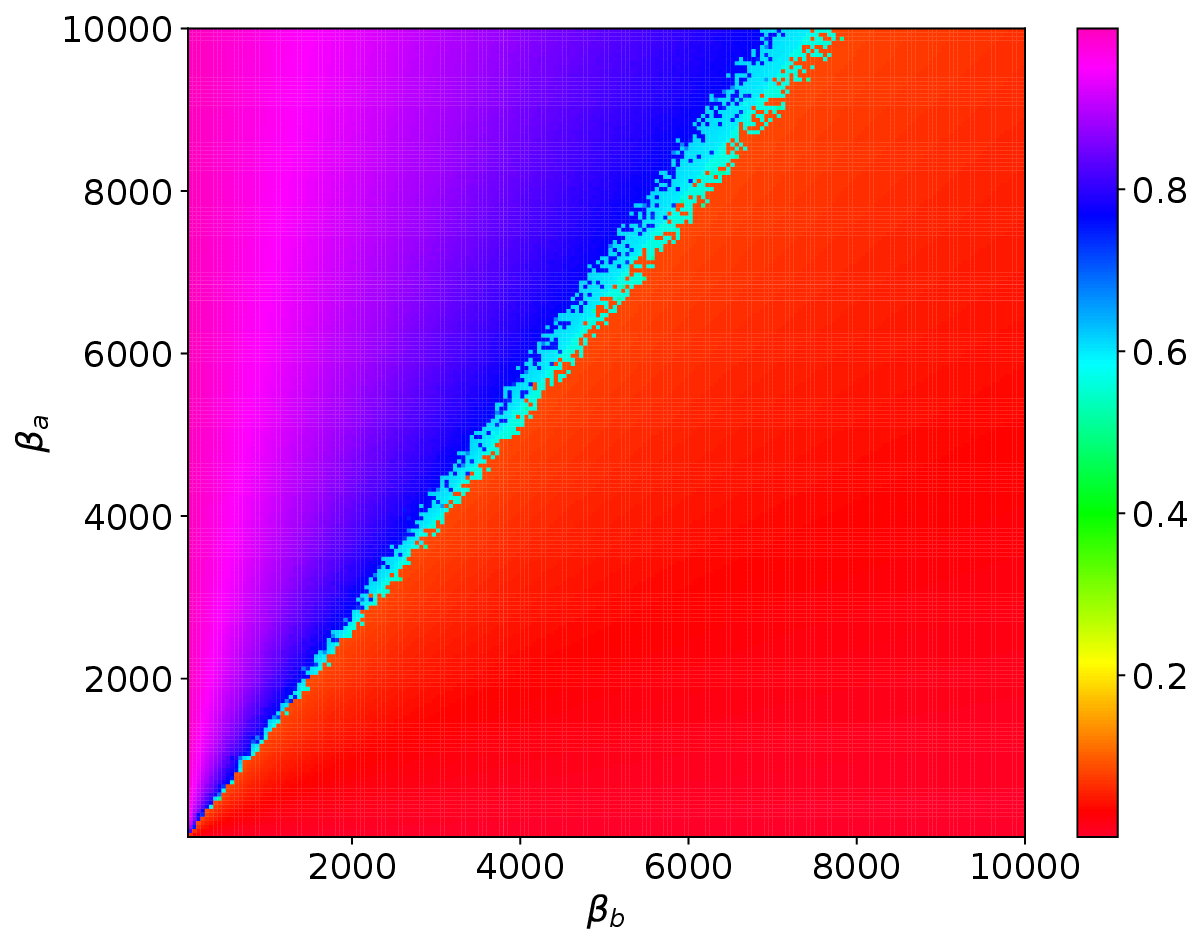}}
	\subfigure[\texttt{EqOpt} fair ]{\label{fig3:c}\includegraphics[trim={0cm 0cm 0cm 0cm},clip=true, width=0.245\textwidth]{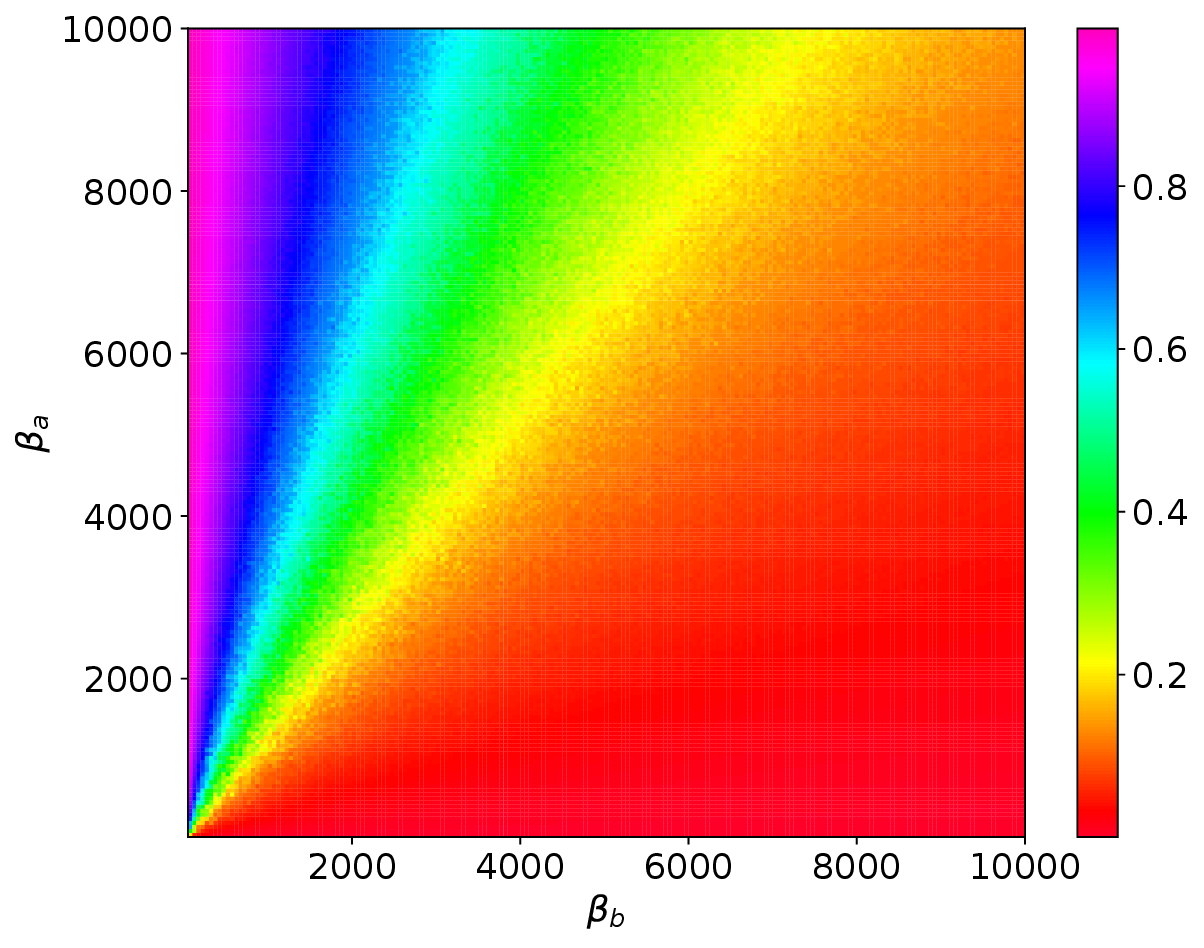}}
	\subfigure[\texttt{EqLos} fair ]{\label{fig3:d}\includegraphics[trim={0cm 0cm 0cm 0cm},clip=true,width=0.245\textwidth]{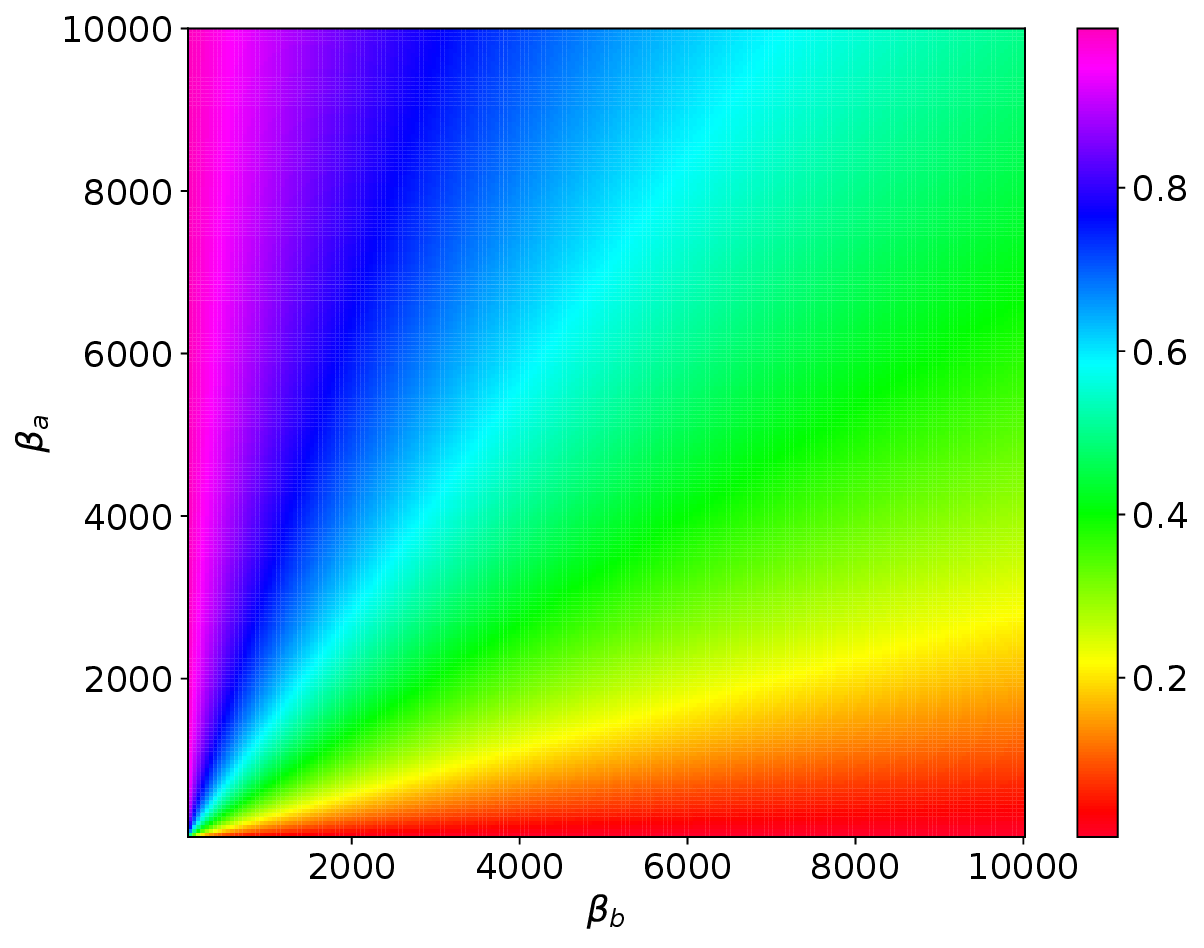}}
	\caption{Each dot in Fig. \ref{fig3:a}-\ref{fig3:d} represents the final group proportion $\lim_{t\rightarrow \infty}\overline{\alpha}_a(t)$ of one sample path under a pair of arriving rates $(\beta_a,\beta_b)$. If the group representation is sustained, then $\lim_{t\rightarrow \infty}\overline{\alpha}_a(t) = \frac{1}{1+\beta_b/\beta_a}$ for each pair of $(\beta_a,\beta_b)$, as shown in Fig. \ref{fig3:d} under \texttt{EqLos} fairness. However, under \texttt{Simple}, \texttt{StatPar} and \texttt{EqOpt} fairness, $\lim_{t\rightarrow \infty}\overline{\alpha}_a(t) = 1/( 1+\frac{\beta_b(1-\nu(L_a(\theta_a^{\infty})))}{\beta_a(1-\nu(L_b(\theta_b^{\infty})))})$.}
	\label{fig3}
\end{figure}

	Fig. \ref{fig3} illustrates the final group proportion (the converged state) $ \lim_{t\rightarrow \infty}\overline{\alpha}_a(t) $ as a function of the exogenous arrival sizes $\beta_a$ and $\beta_b$ under different fairness criteria.  With the exception of \texttt{EqLos} 
	
	\begin{minipage}{0.65\textwidth}
 fairness, group representation is severely skewed in the long run, with the system consisting mostly of $G_b$, even for scenarios when $G_a$ has larger arrival, i.e.,  $\beta_a>\beta_b$. Moreover, decisions under an inappropriate fairness criterion (\texttt{Simple}, \texttt{EqOpt} or \texttt{StatPar}) can result in poor robustness, where a minor change in $\beta_a$ and $\beta_b$ can result in very different representation in the long run (Fig. \ref{fig3:b}). 
	
	\vspace{0.7em}
	We also consider the dynamics presented in Fig. \ref{fig:example} and show the effect of $\Delta = \epsilon \frac{\beta_a}{\beta_b}$-fair decision found with method in Sec. \ref{subsec:method} on $\overline{\alpha}_a(t)$.  Each curve in Fig. \ref{fig5} represents a sample path under different $\epsilon$ where $(\theta_a(t),\theta_b(t))$ is from a small randomly selected subset of $\Delta$-fair set, $\forall t$ (to model the situation where perfect fairness is not feasible) and $\beta_a = \beta_b$. We observe that fairness is always violated at the beginning in lower plot even with small $\epsilon$. This is because the fairness set is found based on stable fixed points, which only concerns fairness in the long run. 
	
	\vspace{0.7em}
	We also trained binary classifiers over \textit{Adult} dataset \cite{Dua:2019} by
	minimizing empirical loss where features are individual data points such as sex, race, and nationality, and labels are their annual income ($\geq50k$ or $<50k$). Since the dataset does not reflect dynamics, we employ \eqref{eq:dynamic} with $\pi_k(\theta_k) = \nu(L_k(\theta_k))$ \mrev{and $\beta_a=\beta_b$}. We examine the monotonic convergence of representation disparity under \texttt{Simple}, \texttt{EqOpt} (equalized false positive/negative cost(FPC/FNC)) and \texttt{EqLos}, and consider cases where $G_a$, $G_b$ are distinguished by the three features mentioned above. \mrev{These results are shown in Fig. \ref{fig6}.} 
\end{minipage}
\hspace{0.1em}
\begin{minipage}{0.34\textwidth}
	\includegraphics[trim={0cm 0.3cm 0cm 0cm},clip=true, height=\textwidth]{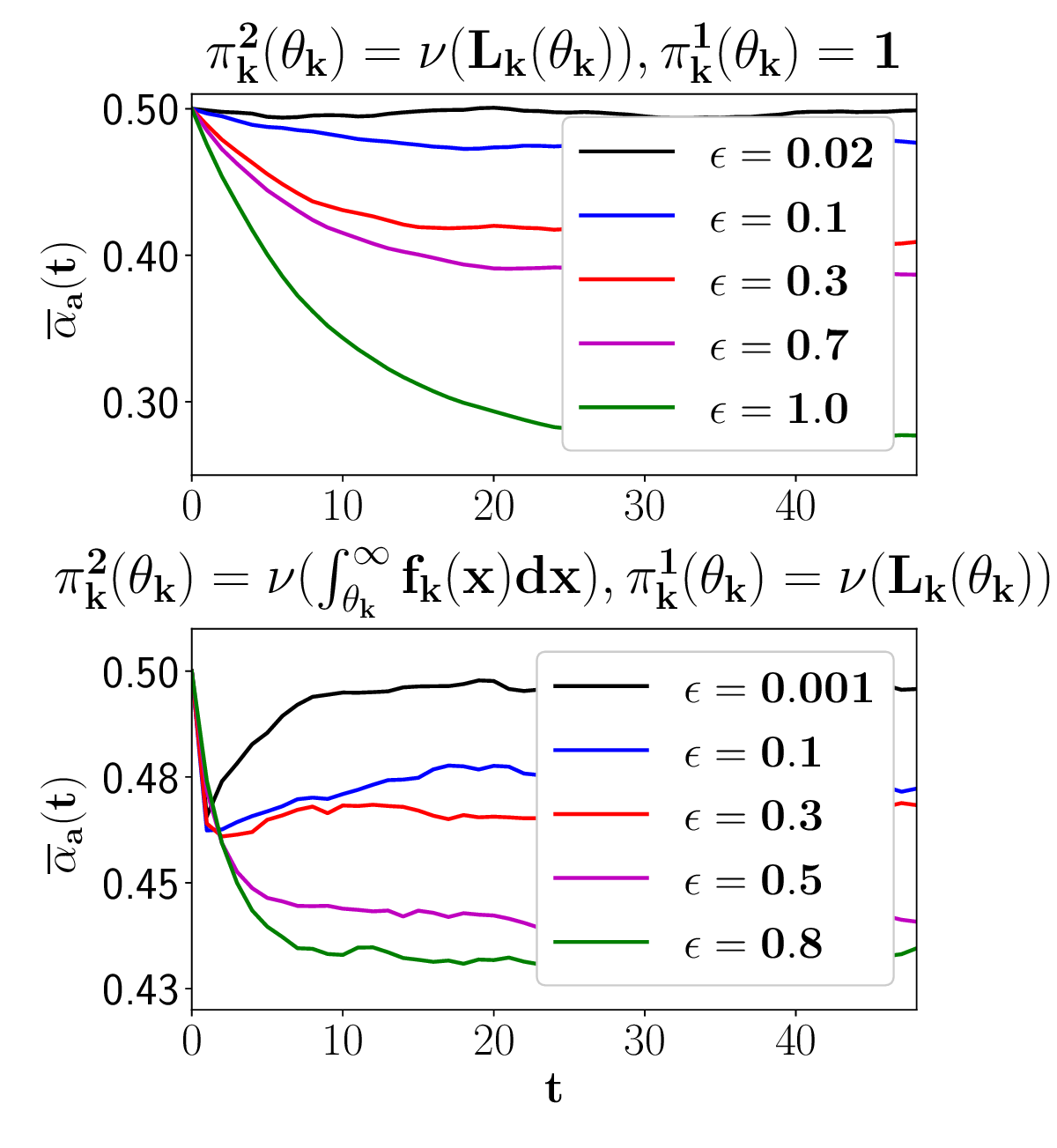}
	\captionof{figure}{ Effect of $\Delta$-fair decisions found with proposed method. \label{fig5}}
		\includegraphics[trim={0cm 0cm 2cm 0cm},clip=true,height=0.8\textwidth]{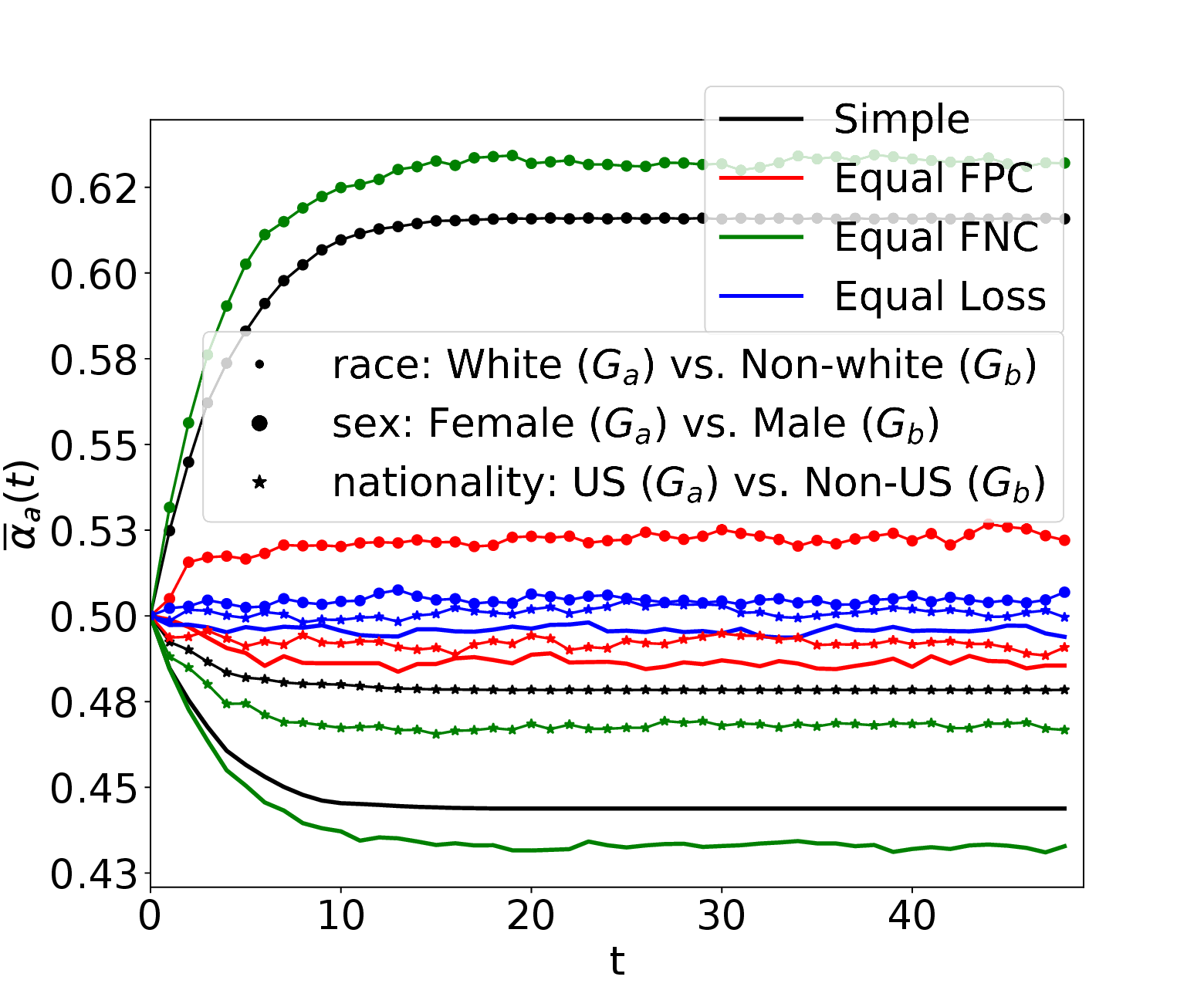}
	\captionof{figure}{Illustration of group representation disparity using \textit{Adult} dataset.    \label{fig6}}
\end{minipage}


\section{Conclusion}\label{sec:conclusion}
This paper characterizes the impact of fairness intervention on group representation in a sequential setting. We show that the representation disparity can easily get exacerbated over time under relatively mild conditions. Our results suggest that fairness has to be defined with a good understanding of participation dynamics. Toward this end, we develop a method of selecting a proper fairness criterion based on prior knowledge of participation dynamics. 
Note that we do not always have full knowledge of participation dynamics; modeling dynamics from real-world measurements and finding a proper fairness criterion based on the obtained model is a potential direction for future work.

\subsubsection*{Acknowledgments}
This work is supported by the NSF under grants CNS-1616575, CNS-1646019, CNS-1739517. 
The work of Cem Tekin was supported by BAGEP 2019 Award of the Science Academy. 

\bibliography{references}
\newpage
{\centering 
	
		\textbf{ \Large{Appendix}}}
\appendix
\section{Notation table}\label{app_table}
\begin{center}
\begin{table}[h]
	\centering
	\begin{tabular}{ p{3.3cm}| p{10cm}}
		\cline{1-2}
		  Notation& Description   \\ \cline{1-2}
		  $G_k$, $k\in\{a,b\}$& two demographic groups   \\
		  $G_k^j$, $j\in\{0,1\}$& subgroup with label $j$ in $G_k$  \\ 
		  $\alpha_k^j(t)$ & size of $G_k^j$ as a fraction of entire population at time $t$\\
		  $\overline{\alpha}_k(t)$ & size of $G_k$ as a fraction of entire population at time $t$, i.e., $\alpha_a^0(t)+\alpha_k^1(t)$\\
		  $g_{k,t}^j$& fraction of subgroup with label $j$ in $G_k$ at time $t$, i.e., $\text{Pr}(Y=j|K=k)=\alpha_k^j(t)/\overline{\alpha}_k(t)$ \\
		  $f_{k,t}^j(x)$& feature distribution of $G_k^j$at time $t$, i.e., $\text{Pr}(X=x|K=k, Y=j)$\\
		  $f_{k,t}(x)$ & feature distribution of $G_k$ at time $t$, i.e., $\text{Pr}(X=x|K=k)$ and $f_{k,t}(x) = g_{k,t}^1f_{k,t}^1(x)+g_{k,t}^0f_{k,t}^0(x)$\\
		  $h_{\theta}(x)$& decision rule parameterized by $\theta$\\
		  $\theta_k(t)$ & decision parameter for $G_k$ at time $t$\\
		  $\pmb{O}_t(\theta_a,\theta_b;\overline{\alpha}_a(t),\overline{\alpha}_b(t))$& objective of one-shot problem at time $t$ with group proportions $\overline{\alpha}_a(t),\overline{\alpha}_b(t)$ \\
		  $O_{k,t}(\theta_k)$ & sub-objective of $G_k$ at time $t$\\
		  $\Gamma_{\mathcal{C},t}(\theta_a,\theta_b)$ & a fairness constraint imposed on $\theta_a$ and $\theta_b$ for two groups at time $t$\\
		  $N_k(t)$& expected number of users from $G_k$ at time $t$\\
		  $\pi_{k,t}(\theta_k(t))$& retention rate of $G_k$ at time $t$ when imposing decision $\theta_k(t)$\\
		  $\beta_k$& number of exogenous arrivals to $G_k$ at every time step\\
		  $L_{k,t}(\theta_k)$&expected loss incurred to $G_k$ by taking decision $\theta_k$ at time $t$\\
		  $L_{k,t}^j(\theta_k)$&expected loss incurred to $G_k^j$ by taking decision $\theta_k$ at time $t$\\
		  $[\underline{k}_t^j,\overline{k}_t^j]$& bounded support of distribution $f_{k,t}^j(x)$\\
		  $\mathcal{T}_{k,t}$ & overlapping interval between $f_{k,t}^0(x)$ and $f_{k,t}^1(x)$ at time $t$, i.e.,  $[\underline{k}_t^1,\overline{k}_t^0]$\\
		  $\delta_{k,t}$& optimal decision for $G_k$ at time $t$ such that $\delta_{k,t} = \argmin_{\theta}L_{k,t}(\theta)$ and satisfies $g_{k,t}^1f_{k,t}^1(\delta_k) = g_{k,t}^0f_{k,t}^0(\delta_k)$\\
		  $\phi_{\mathcal{C},t}(\cdot)$& a increasing function determined by constraint $\Gamma_{\mathcal{C},t}(\theta_a,\theta_b)$ mapping $\theta_b$ to $\theta_a$, i.e., $\Gamma_{\mathcal{C},t}(\phi_{\mathcal{C},t}(\theta_b),\theta_b)$\\
		  $D_{k,t}(\theta_k)$ & intra-group disparity of $G_k$ at time $t$\\
		  $\pi_k^m(\theta_k)$& $m$th factor that drives user retention\\
		  $\Lambda(\cdot)$ & dynamics model specifying the relationship between $N_k(t+1)$ and $N_k(t+1)$, $\beta_k$, $\pi_k^m(\theta_k(t))$ \\\cline{1-2}

	\end{tabular}
\end{table}
\end{center}

\section{Related Work}\label{app_related}
The impact of fairness interventions on both individuals and society, and the fairness in sequential decision making have been studied in the literature. \cite{pmlr-v80-liu18c} constructs a one-step feedback model over two consecutive time steps and characterizes the impact of fairness criteria (statistical parity and equal of opportunity) on changing each individual’s feature and reshaping the entire population. Similarly, \cite{heidari2019on} proposes an effort-based measure of
unfairness and constructs an individual-level model characterizing how an individual responds to the decisions based on it. The impact on the entire group is then derived from it and the impacts of fairness intervention are examined. While both highlight the importance of temporal modeling in evaluating the fairness, their main focus is on the adverse impact on feature distribution, rather than on group representation disparity. In contrast, our work focuses on the latter but also considers the impact of reshaping feature distributions. Moreover, we formulate the long-term impact over infinite horizon while \cite{heidari2019on,pmlr-v80-liu18c} only inspect the impact over two steps. 

\cite{pmlr-v80-hashimoto18a} also considers a sequential framework where the user departure is driven by model accuracy. It adopts the objective of minimizing the loss of the group with the highest loss (instead of overall or average loss), which can prevent the extinction of any group from the system. It requires multiple demographic groups use the same model and does not adopt any fairness criterion. In contrast, we are more interested in the impact of various fairness criteria on representation disparity and if it is possible to sustain the group representation by imposing any fairness criterion. Other differences include the fact we consider the case when feature distributions are reshaped by the decisions (Section \ref{subsec:reshape}) and \cite{pmlr-v80-hashimoto18a} does not.

 \cite{kannan2019downstream} also constructs a two-stage model in the context of college admission, it shows that increasing admission rate of a group can increase the overall qualification for this group overtime.  \cite{hu2018short} describes a model in the context of labor market. They show that imposing the demographic parity constraint can incentivize under-represented groups to invest in education, which leads to a better long-term equilibrium.

Extensive studies on fairness in sequential decision making or online learning has been done \cite{blum2018preserving,dimitrakakis2019bayesian,heidari2018preventing,jabbari2017fairness,valera2018enhancing,zhang2014fairness}. Most of them focus on proposing appropriate fairness notions to improve the fairness-accuracy trade-off. To the best of our knowledge, none of them considers the impact of fairness criteria on group representation disparity.

\section{Proof of Theorem \ref{thm1}}

Theorem \ref{thm1} is proved based on the following Lemma.
\begin{lemma}\label{lemma4}

Let $a, b, z_a,z_b$ be real constants, where $a,b\in \mathbb{R}_+$ and  $z_a,z_b\in[0,1]$. If $b\geq a>1$, $z_b-z_a>\frac{1}{a}-\frac{1}{b}$ and $b<\frac{1}{1-z_b}$ are satisfied, then the following holds:
\begin{eqnarray}\label{eq:lemma3}
\frac{1+z_a+az_a^2}{1+z_b+bz_b^2} \leq \frac{1+az_a}{1+bz_b} 
\end{eqnarray}		
\end{lemma}

\begin{proof}
Re-organizing \eqref{eq:lemma3} gives the following:
\begin{eqnarray*}
(1+z_a+az_a^2)(1+bz_b ) &\leq  &(1+z_b+bz_b^2)(1+az_a)\\
bz_b + bz_az_b + z_a + abz_a^2z_b + az_a^2 &\leq &az_a + z_b + az_az_b+bz_b^2+abz_b^2z_a
\end{eqnarray*}
Proving \eqref{eq:lemma3} is equivalent to showing the following:
\begin{eqnarray*}
0 \leq (a-1)\frac{1}{z_b} + (1-b)\frac{1}{z_a} + b\frac{z_b}{z_a}-a\frac{z_a}{z_b} + \underbrace{a-b+ab(z_b-z_a)}_{\textbf{term 1}}
\end{eqnarray*}
Since $z_b-z_a>\frac{1}{a}-\frac{1}{b}$, $\textbf{term 1}>a-b+b-a = 0$ holds. Therefore, proving \eqref{eq:lemma3} is equivalent to showing:
\begin{eqnarray}\label{eq:lemma3_1}
az_a^2 + (1-a)z_a \leq bz_b^2+(1-b)z_b
\end{eqnarray}
Since $b< \frac{1}{1-z_b}$ holds, implying $z_b> 1-\frac{1}{b}$.

Define a function $g(z) = cz^2 + (1-c)z$, $z\in[0,1]$ under any constant $c>1$. The following holds:
\begin{eqnarray*}
g(1-\frac{1}{c}) = 0; & g(1) = 1&\\
g'(z) = 2cz + 1-c; & g'(1-\frac{1}{c}) = c- 1; &
g''(z) = 2c  
\end{eqnarray*} 

Since $g''(z)$ is a positive constant over $z\in[0,1]$, $g'(z)$ is strictly increasing and $g'(z)>0$ when $z\in(1-\frac{1}{c},1]$, thus $g(z)$ is increasing over $z\in(1-\frac{1}{c},1]$ from $0$ to $1$. 

Now consider two functions $g_a(z) = az^2 + (1-a)z $ and $g_b(z) = bz^2+(1-b)z$ with $z\in[0,1]$.
From the above analysis, $g_a(z)$ is increasing over  $(1-\frac{1}{a},1]$ from $0$ to $1$ and $g_b(z)$ is increasing over  $(1-\frac{1}{b},1]$ from $0$ to $1$.  Moreover, $1-\frac{1}{b}\geq1-\frac{1}{a}$ and $g_b''(z) = 2b \geq 2a = g_a''(z)$, i.e., the speed that $g_b(z)$ increases over  $(1-\frac{1}{b},1]$ is NOT slower than the speed that $g_a(z)$ increases over  $(1-\frac{1}{a},1]$.  Since $z_b-z_a>\frac{1}{a}-\frac{1}{b} = (1-\frac{1}{b})-(1-\frac{1}{a})$ and $z_b> 1-\frac{1}{b}$,
$g_a(z_a)\leq g_b(z_b)$ must hold.

Therefore, \eqref{eq:lemma3_1} is satisfied. Inequality \eqref{eq:lemma3} is proved.

\end{proof}

To simplify the notation, denote $\pi_{k,t} := \pi_{k,t}(\theta_k(t))$. We will only present the case when $\diamond := ``<"$, cases when $\diamond := ``>"$ and $\diamond := ``="$ can be derived similarly and are omitted.

To prove Theorem \ref{thm1}, we prove the following statement using induction:  If $ \pi_{a,1} < \pi_{b,1}$, then $\forall t$,  $\frac{\overline{\alpha}_a(t+1)}{\overline{\alpha}_b(t+1)} < \frac{\overline{\alpha}_a(t)}{\overline{\alpha}_b(t)}$ and $\pi_{a,t+1}<\pi_{a,t}<\pi_{b,t}<\pi_{b,t+1}$ hold under monotonicity condition. Moreover, $N_b(t)< \frac{\beta_b}{1-\pi_{b,t}}, \forall t$.

\textbf{Base Case: }

Since $\frac{N_a(1)}{N_b(1)} = \frac{\beta_a}{\beta_b}$. If $ \pi_{a,1} < \pi_{b,1}$, then $\frac{\overline{\alpha}_a(2)}{\overline{\alpha}_b(2)} = \frac{N_a(1)\pi_{a,1} + \beta_a}{N_b(1)\pi_{b,1}+\beta_b} < \frac{N_a(1)}{N_b(1)} = \frac{\overline{\alpha}_a(1)}{\overline{\alpha}_b(1)}$. Under monotonicity condition, it results in $\pi_{a,2}<\pi_{a,1}<\pi_{b,1}<\pi_{b,2}$. Moreover, since $N_b(2) = N_b(1)\pi_{b,1} + \beta_b >N_b(1)$, implying $N_b(1)< \frac{\beta_b}{1-\pi_{b,1}}$.

\textbf{Induction Step:}

 Suppose    $\frac{\overline{\alpha}_a(t+1)}{\overline{\alpha}_b(t+1)} < \frac{\overline{\alpha}_a(t)}{\overline{\alpha}_b(t)}\leq \frac{\beta_a}{\beta_b}$, $\pi_{a,t+1}<\pi_{a,t}<\pi_{b,t}<\pi_{b,t+1}$ and $N_b(t)<\frac{\beta_b}{1-\pi_{b,t}}$ hold at time $t\geq 1$. Show that for time step $t+1$, $\frac{\overline{\alpha}_a(t+2)}{\overline{\alpha}_b(t+2)} < \frac{\overline{\alpha}_a(t+1)}{\overline{\alpha}_b(t+1)}\leq \frac{\beta_a}{\beta_b}$, $\pi_{a,t+2}<\pi_{a,t+1}<\pi_{b,t+1}<\pi_{b,t+2}$ and $N_b(t+1)< \frac{\beta_b}{1-\pi_{b,t+1}}$  also hold.
 
  Denote $N_a(t) = c_a\beta_a$ and $N_b(t) = c_b\beta_b$. Since $N_k(t) = N_k(t-1)\pi_{k,t-1}+\beta_k>\beta_k,\forall t$, it holds that $c_a,c_b>1$.
  
  By hypothesis, $\frac{\overline{\alpha}_a(t)}{\overline{\alpha}_b(t)}\leq \frac{\beta_a}{\beta_b}$ implies that $c_b\geq c_a > 1$, and $N_b(t)<\frac{\beta_b}{1-\pi_{b,t}}$ implies that  $c_b<\frac{1}{1-\pi_{b,t}}$. Since $\frac{N_a(t+1)}{N_b(t+1)}=\frac{N_a(t)\pi_{a,t}+\beta_a}{N_b(t)\pi_{b,t}+\beta_b} =\frac{\beta_a}{\beta_b}\frac{c_a\pi_{a,t}+1}{c_b\pi_{b,t}+1}  <\frac{N_a(t)}{N_b(t)} = \frac{\beta_a}{\beta_b}\frac{c_a}{c_b}$, re-organizing it gives $\pi_{b,t} - \pi_{a,t} > \frac{1}{c_a} - \frac{1}{c_b}$. 
  
  By Lemma \ref{lemma4}, the following holds:
  $$\frac{N_a(t)\pi_{a,t}^2 + \beta_a(1+\pi_{a,t})}{N_b(t)\pi_{b,t}^2 + \beta_b(1+\pi_{b,t})} = \frac{\beta_a}{\beta_b}\frac{1+\pi_{a,t} +c_a\pi_{a,t} ^2}{1+\pi_{b,t}+c_b\pi_{b,t}^2} \leq  \frac{\beta_a}{\beta_b}\frac{1+c_a\pi_{a,t} }{1+c_b\pi_{b,t}} = \frac{N_a(t+1)}{N_b(t+1)}=\frac{\overline{\alpha}_a(t+1)}{\overline{\alpha}_b(t+1)}$$
  
  Since we suppose $\pi_{a,t+1}<\pi_{a,t}<\pi_{b,t}<\pi_{b,t+1}$, we have:
  $$ \frac{N_a(t)\pi_{a,t}^2 + \beta_a(1+\pi_{a,t})}{N_b(t)\pi_{b,t}^2 + \beta_b(1+\pi_{b,t})} > \frac{(N_a(t)\pi_{a,t}+\beta_a)\pi_{a,t+1}+\beta_a}{(N_b(t)\pi_{b,t}+\beta_b)\pi_{b,t+1}+\beta_b}= \frac{\overline{\alpha}_a(t+2)}{\overline{\alpha}_b(t+2)} $$

 It implies that $ \frac{\overline{\alpha}_a(t+2)}{\overline{\alpha}_b(t+2)}< \frac{\overline{\alpha}_a(t+1)}{\overline{\alpha}_b(t+1)} $.
 
By motonoticity condition, it results in $\pi_{a,t+2}<\pi_{a,t+1}<\pi_{b,t+1}<\pi_{b,t+2}$.

Moreover, $N_b(t+1) = N_b(t)\pi_{b,t} + \beta_b<\frac{\beta_b\pi_{b,t}}{1-\pi_{b,t}}+\beta_b = \frac{\beta_b}{1-\pi_{b,t}}<\frac{\beta_b}{1-\pi_{b,t+1}}$.

The statement holds for time $t+1$. This completes the proof.

\section{Proof of Theorem \ref{thm:MC}}

Without loss of generality, let $\frac{\widehat{ \overline{\alpha}}_a}{\widehat{ \overline{\alpha}}_b}< \frac{\widetilde{ \overline{\alpha}}_a}{\widetilde{ \overline{\alpha}}_b}$. Since $\pi_k(\theta_k) = h_k(O_k(\theta_k))$ with $h_k(\cdot)$ being a decreasing function, showing that $\widetilde{\mathbf{O}}$ and $\widehat{\mathbf{O}}$ satisfy Monotonicity condition is equivalent to showing that $O_a(\widehat{\theta}_a) > O_a(\widetilde{\theta}_a)$, $O_b(\widehat{\theta}_b) < O_b(\widetilde{\theta}_b)$. Under the condition that $O_k(\widehat{\theta}_k) \neq O_k(\widetilde{\theta}_k)$ for any possible $\widehat{\overline{\alpha}}_a \neq\widetilde{\overline{\alpha}}_a$ , prove by contradiction: suppose $O_a(\widehat{\theta}_a) < O_a(\widetilde{\theta}_a)$ holds, then $O_b(\widehat{\theta}_b) > O_b(\widetilde{\theta}_b)$ must also hold otherwise $(\widehat{\theta}_a, \widehat{\theta}_b)$ will be the solution to $\widetilde{\mathbf{O}}$. 

Because $(\widehat{\theta}_a, \widehat{\theta}_b)$ is the optimal solution to $\widehat{\mathbf{O}}$ and $(\widetilde{\theta}_a, \widetilde{\theta}_b)$ is the optimal solution to $\widetilde{\mathbf{O}}$, and $O_b(\widehat{\theta}_b) > O_b(\widetilde{\theta}_b)$, the following holds: 
\begin{eqnarray*}
	\widehat{\overline{\alpha}}_a O_a(\widehat{\theta}_a)+\widehat{\overline{\alpha}}_b O_b(\widehat{\theta}_b) \leq \widehat{\overline{\alpha}}_a O_a(\widetilde{\theta}_a)+\widehat{\overline{\alpha}}_b O_b(\widetilde{\theta}_b) \rightarrow \frac{O_a(\widehat{\theta}_a) - O_a(\widetilde{\theta}_a)}{O_b(\widetilde{\theta}_b) - O_b(\widehat{\theta}_b)} \geq \frac{\widehat{\overline{\alpha}}_b}{\widehat{\overline{\alpha}}_a}\\
	\widetilde{\overline{\alpha}}_a O_a(\widetilde{\theta}_a)+\widetilde{\overline{\alpha}}_b O_b(\widetilde{\theta}_b) \leq \widetilde{\overline{\alpha}}_a O_a(\widehat{\theta}_a)+\widetilde{\overline{\alpha}}_b O_b(\widehat{\theta}_b) \rightarrow \frac{O_a(\widehat{\theta}_a) - O_a(\widetilde{\theta}_a)}{O_b(\widetilde{\theta}_b) - O_b(\widehat{\theta}_b)} \leq \frac{\widetilde{\overline{\alpha}}_b}{\widetilde{\overline{\alpha}}_a}
\end{eqnarray*}

It implies that $\frac{\widehat{ \overline{\alpha}}_a}{\widehat{ \overline{\alpha}}_b}\geq \frac{\widetilde{ \overline{\alpha}}_a}{\widetilde{ \overline{\alpha}}_b}$, which is a contradiction. 

\section{Proof of Lemma \ref{lemma_new1}}\label{app_optimal}
Starting from Appendix \ref{app_optimal} until Appendix \ref{app_dynamic}, we simplify the notations by removing $t$ from subscript, i.e., $L_{k,t}(\theta_k) :=L_k(\theta_k) $, $g_{k,t}^j:=g_k^j$, $f_{k,t}(x) :=f_k(x)$, $f_{k,t}^j(x) :=f_k^j(x)$, $\underline{k}_t^j:=\underline{k}^j$, $\overline{k}_t^j:=\overline{k}^j$, $\phi_{\mathcal{C},t} :=\phi_{\mathcal{C}} $, $\Gamma_{\mathcal{C},t}:=\Gamma_{\mathcal{C}}$, $\delta_{k,t}:=\delta_k$, $\mathcal{T}_{k,t}:=\mathcal{T}_k$.

The loss for group $k$ can be written as $$L_k(\theta_k) = \int_{-\infty}^{\theta_k} g_k^1f_k^1(x)dx+ \int_{\theta_k}^{\infty} g_k^0f_k^0(x)dx=\begin{cases}
\int_{\theta_k}^{\overline{k}^0} g_k^0f_k^0(x)dx , \text{ if }\theta_k \in [\underline{k}^0,\underline{k}^1]\\
\int_{\theta_k}^{\overline{k}^0} g_k^0f_k^0(x)dx +\int_{\underline{k}^1}^{\theta_k} g_k^1f_k^1(x)dx, \text{ if }\theta_k \in [\underline{k}^1,\overline{k}^0]\\
\int_{\underline{k}^1}^{\theta_k} g_k^1f_k^1(x)dx , \text{ if }\theta_k \in [\overline{k}^0,\overline{k}^1]
\end{cases}$$ 

which is decreasing in $\theta_k$ over $[\underline{k}^0,\underline{k}^1]$ and increasing over $[\overline{k}^0,\overline{k}^1]$, the optimal solution $\theta_k^*\in[\underline{k}^1,\overline{k}^0] $. Taking derivative of $L_k(\theta_k)$ w.r.t. $\theta_k$ gives $\frac{d L_k(\theta_k)}{d \theta_k}= g_k^1f_k^1(\theta_k)-g_k^0f_k^0(\theta_k)$, which is strictly increasing over $[\underline{k}^1,\overline{k}^0] $ under Assumption \ref{assumption2}.

The optimal solution $\theta_k^* =  \argmin_{\theta_k}L_k(\theta_k)\in\{\underline{k}^1, \delta_k,\overline{k}^0\}$ can be thus found easily. Moreover, $L_k(\theta_k)$ is decreasing in $\theta_k$ over $[\underline{k}^0,\theta_k^* ]$ and increasing over $[\theta_k^* ,\overline{k}^1]$.

\section{Proof of Lemma \ref{lemma6}}

	Some notations are simplified by removing subscript $t$ as mentioned in Appendix \ref{app_optimal}.
	
	We proof this Lemma by contradiction.
	
	Let $\mathcal{V} = \{(\theta_a,\theta_b)|\theta_a\in [\phi_{\mathcal{C}}(\delta_b),\delta_a], \theta_b \in [\delta_b,\phi_{\mathcal{C}}^{-1}(\delta_a)], \Gamma_{\mathcal{C}}(\theta_a,\theta_b) = 0   \}$.
	
	Note that for \texttt{Simple}, \texttt{EqOpt}, \texttt{StatPar} fairness, for any $(\theta_a,\theta_b)$ and $(\theta_a',\theta_b')$ that satisfy constraints $\Gamma_{\mathcal{C}}(\theta_a,\theta_b)= 0$ and $\Gamma_{\mathcal{C}}(\theta_a',\theta_b')= 0$, $\theta_a \geq \theta_a'$ if and only if $\theta_b \geq \theta_b'$. 
	Suppose that $(\check{\theta}_a,\check{\theta}_b)$ satisfies $ \Gamma_{\mathcal{C}}(\check{\theta}_a,\check{\theta}_b) = 0$ and $(\check{\theta}_a,\check{\theta}_b) = \argmin_{\theta_a, \theta_b}\overline{\alpha}_aL_a(\theta_a)+\overline{\alpha}_bL_b(\theta_b) \notin \mathcal{V}$, then one of the following must hold: (1) $\check{\theta}_a < \phi_{\mathcal{C}}(\delta_b)$, $\check{\theta}_b < \delta_b$; (2)  $\check{\theta}_a > \delta_a$, $\check{\theta}_b > \phi_{\mathcal{C}}^{-1}(\delta_a)$. Consider two cases separately.
	
	(1) $\check{\theta}_a < \phi_{\mathcal{C}}(\delta_b)$, $\check{\theta}_b < \delta_b$
	
	Since $L_b(\check{\theta}_b)>L_b(\delta_b )$, $\forall \overline{\alpha}_a,\overline{\alpha}_b$, to satisfy $\overline{\alpha}_aL_a(\check{\theta}_a)+\overline{\alpha}_bL_b(\check{\theta}_b) < \overline{\alpha}_aL_a(\phi(\delta_b)) +  \overline{\alpha}_bL_b(\delta_b)$, $L_a(\check{\theta}_a)<L_a(\phi_{\mathcal{C}}(\delta_b))$ must hold. However, by Lemma \ref{lemma_new1}, $L_a(\theta_a)$ is strictly decreasing on $[\underline{a}^0,\delta_a]$ and strictly increasing on $[\delta_a,\overline{a}^1]$. Since $\check{\theta}_a < \phi_{\mathcal{C}}(\delta_b)<\delta_a$, this implies $L_a(\check{\theta}_a)>L_a(\phi_{\mathcal{C}}(\delta_b))$. Therefore, $(\check{\theta}_a,\check{\theta}_b)$ cannot be the optimal pair.
	
	(2) $\check{\theta}_a > \delta_a$, $\check{\theta}_b > \phi_{\mathcal{C}}^{-1}(\delta_a)$
	
	Since $L_a(\check{\theta}_a)>L_a(\delta_a )$,  $\forall \overline{\alpha}_a,\overline{\alpha}_b$, to satisfy $\overline{\alpha}_aL_a(\check{\theta}_a)+\overline{\alpha}_bL_b(\check{\theta}_b) < \overline{\alpha}_aL_a( \delta_a)+\overline{\alpha}_bL_b(\phi_{\mathcal{C}}^{-1}(\delta_a))$, $L_b(\check{\theta}_b)<L_b(\phi_{\mathcal{C}}^{-1}(\delta_a))$ must hold. However, by Lemma \ref{lemma_new1}, $L_b(\theta_b)$ is strictly decreasing on $[\underline{b}^0,\delta_b]$ and strictly increasing on $[\delta_b,\overline{b}^1]$. Since $\check{\theta}_b > \phi_{\mathcal{C}}^{-1}(\delta_a)>\delta_b$, this implies $L_b(\check{\theta}_b)>L_b(\phi_{\mathcal{C}}^{-1}(\delta_a))$. Therefore, $(\check{\theta}_a,\check{\theta}_b)$ cannot be the optimal pair.

\section{Proof of Theorem \ref{theorem5}}\label{proof:theorem5}
	Some notations are simplified by removing subscript $t$ as mentioned in Appendix \ref{app_optimal}.

Proof of Theorem \ref{theorem5} is based on the following Lemma.

\begin{lemma}\label{lemma8}
	Consider the one-shot problem \eqref{eq:opt} at some time step $t$, with group proportions given by 
	$\overline{\alpha}_a(t), \overline{\alpha}_b(t)$.  Under Assumption \ref{assumption2} the one-shot decision $(\theta_a(t),\theta_b(t))$ for this time step is unique and satisfies the following:   
	
	(1) Under \texttt{EqOpt} fairness: 
	\begin{itemize}[noitemsep,topsep=0pt]
		\item If $\theta_a(t) \in [\underline{a}^0,\underline{a}^1]$, $\theta_b(t) \in [\underline{b}^1,\overline{b}^0]$, then $\frac{\overline{\alpha}_a(t)}{\overline{\alpha}_b(t)} = (\frac{g_b^1}{g_b^0}\frac{f_b^1(\theta_b(t))}{f_b^0(\theta_b(t))} - 1)\frac{g_b^0}{g_a^0} $.
		\item If $\theta_a(t) \in [\underline{a}^1,\overline{a}^0]$, $\theta_b(t) \in [\underline{b}^1,\overline{b}^0]$, then 
		$\frac{\overline{\alpha}_a(t)}{\overline{\alpha}_b(t)} = \frac{ \frac{g_b^1}{g_b^0}\frac{f_b^1(\theta_b(t))}{f_b^0(\theta_b(t))}-1}{1-\frac{g_a^1}{g_a^0}\frac{f_a^1(\theta_a(t))}{f_a^0(\theta_a(t))}}\frac{g_b^0}{g_a^0} $.
	\end{itemize}
	
	(2) Under \texttt{StatPar} fairness: 
	\begin{itemize}[noitemsep,topsep=0pt]
		\item 	If $\theta_a(t) \in [\underline{a}^0,\underline{a}^1]$, $\theta_b(t) \in [\underline{b}^1,\overline{b}^0]$, then $\frac{\overline{\alpha}_a(t)}{\overline{\alpha}_b(t)} = 1-\frac{2}{\frac{g_b^1}{g_b^0}\frac{f_b^1(\theta_b(t))}{f_b^0(\theta_b(t))}+1}$ .
		\item If $\theta_a(t) \in [\underline{a}^1,\overline{a}^0]$, $\theta_b(t) \in [\underline{b}^1,\overline{b}^0]$, then $\frac{\overline{\alpha}_a(t)}{\overline{\alpha}_b(t)} = (1-\frac{2}{\frac{g_b^1}{g_b^0}\frac{f_b^1(\theta_b(t))}{f_b^0(\theta_b(t))}+1})(\frac{2}{1-\frac{g_a^1f_a^1(\theta_a(t))}{g_a^0f_a^0(\theta_a(t))}}-1 )$ .
		\item If $\theta_a(t) \in [\underline{a}^1,\overline{a}^0]$, $\theta_b(t) \in [\overline{b}^0,\overline{b}^1]$
		, then $\frac{\overline{\alpha}_a(t)}{\overline{\alpha}_b(t)} = \frac{2}{1-\frac{g_a^1f_a^1(\theta_a(t))}{g_a^0f_a^0(\theta_a(t))}}-1 $.
	\end{itemize}
	
	(3) Under \texttt{Simple} fairness:
	\begin{itemize}[noitemsep,topsep=0pt]
		\item If we further assume $\delta_a, \delta_b \in \mathcal{T}_a\cap \mathcal{T}_b$, then $\theta_a(t) = \theta_b(t) \in [ \underline{a}^1,\overline{b}^0]$ and
		$\frac{\overline{\alpha}_a(t)}{\overline{\alpha}_b(t)} = \frac{g_b^1f_b^1(\theta_b(t))-g_b^0f_b^0(\theta_b(t))}{g_a^0f_a^0(\theta_a(t))-g_a^1f_a^1(\theta_a(t))}$. 
	\end{itemize}
\end{lemma}

\begin{proof}

We focus on the case when $g_a^1f_a^1(\underline{a}^1)<g_a^0f_a^0(\underline{a}^1) ~\& ~g_a^1f_a^1(\overline{a}^0)>g_a^0f_a^0(\overline{a}^0)$ and $g_b^1f_b^1(\underline{b}^1)<g_b^0f_b^0(\underline{b}^1)~\& ~g_b^1f_b^1(\overline{b}^0)>g_b^0f_b^0(\overline{b}^0)$. That is, $\theta_k^* =\text{argmin}_{\theta}L_k(\theta)= \delta_k$ holds for $k\in\{a,b\}$.

Constraint $\Gamma_{\mathcal{C}}(\theta_a,\theta_b) = 0$ can be rewritten as $\theta_a = \phi_{\mathcal{C}}(\theta_b)$ for some strictly increasing function $\phi_{\mathcal{C}}$. The following holds:
\begin{eqnarray*}
\frac{d \phi_{\mathcal{C}}(\theta_b)}{d \theta_b} =- \frac{\frac{\partial \Gamma_\mathcal{C}(\theta_a,\theta_b)}{\partial \theta_b}}{\frac{\partial \Gamma_\mathcal{C}(\theta_a,\theta_b)}{\partial \theta_a}}\Big|_{\theta_a = \phi_{\mathcal{C}}(\theta_b)} = \begin{cases}
 \frac{f_b^0(\theta_b)}{f_a^0(\phi_{\mathcal{C}}(\theta_b))}, ~\mathcal{C} := \texttt{EqOpt }\\
 \frac{g_b^0f_b^0(\theta_b) + g_b^1f_b^1(\theta_b)}{g_a^0f_a^0(\phi_{\mathcal{C}}(\theta_b))+g_a^1f_a^1(\phi_{\mathcal{C}}(\theta_b))},~\mathcal{C} :=\texttt{StaPar } \\
 1, ~\mathcal{C} :=\texttt{Simple }
\end{cases}
\end{eqnarray*}

The one-shot problem can be expressed with only one variable, either $\theta_a$ or $\theta_b$. Here we express it in terms of $\theta_b$. At each round, decision
 maker finds $\theta_b(t) = \argmin_{\theta_b}L^t(\theta_b) = \overline{\alpha}_a(t) L_a(\phi_\mathcal{C} (\theta_b)) + \overline{\alpha}_b(t) L_b(\theta_b)$ and $\theta_a(t) = \phi_\mathcal{C} (\theta_b(t))$. Since $\phi_\mathcal{C}(\delta_b) < \delta_a$ ( $\phi_\mathcal{C}^{-1}(\delta_a) > \delta_b$), when $\mathcal{C}:=\texttt{StatPar}$, solution $(\theta_a(t),\theta_b(t))$ can be in one of the following three forms: (1) $\theta_a(t) \in [\underline{a}^0,\underline{a}^1]$, $\theta_b(t) \in [\underline{b}^1,\overline{b}^0]$; (2) $\theta_a(t) \in [\underline{a}^1,\overline{a}^0]$, $\theta_b(t) \in [\underline{b}^1,\overline{b}^0]$; (3) $\theta_a(t) \in [\underline{a}^1,\overline{a}^0]$, $\theta_b(t) \in [\overline{b}^0,\overline{b}^1]$. When $\mathcal{C}:=\texttt{EqOpt}$, solution $(\theta_a(t),\theta_b(t))$ can be either (1) or (2) listed above. In the following analysis, we simplify the notation $\phi_{\mathcal{C}}$ as $\phi$ when fairness criterion $\mathcal{C}$ is explicitly stated.
 For \texttt{EqOpt} and \texttt{StatPar} criteria, we consider each case separately. 

\textbf{Case 1: } $\theta_a(t) \in [\underline{a}^0,\underline{a}^1]$, $\theta_b(t) \in [\underline{b}^1,\overline{b}^0]$

Let $\theta_b^{\max} = \min\{\overline{b}^0,\phi_{\mathcal{C}}^{-1}(\underline{a}^1)\}$ be the maximum value $ \theta_b$ can take. $L^t(\theta_b) = \overline{\alpha}_b(t)\int_{\underline{b}^1}^{\theta_b}g_b^1f_b^1(x)-g_b^0f_b^0(x)dx - \overline{\alpha}_a(t)\int_{\underline{a}^0}^{\phi_{\mathcal{C}}(\theta_b)}g_a^0f_a^0(x)dx + \overline{\alpha}_a(t)g_a^0 + \overline{\alpha}_b(t)\int_{\underline{b}^1}^{\overline{b}^0}g_b^0f_b^0(x)dx$

Taking derivative w.r.t. $\theta_b$ gives $$\frac{d L^t(\theta_b)}{d \theta_b} = \overline{\alpha}_b(t)(g_b^1f_b^1(\theta_b)-g_b^0f_b^0(\theta_b)) - \overline{\alpha}_a(t) g_a^0f_a^0(\phi_{\mathcal{C}}(\theta_b))\frac{d \phi_{\mathcal{C}}(\theta_b)}{d \theta_b}.$$

1. ${\mathcal{C}} := \texttt{EqOpt} $

$\frac{d L^t(\theta_b)}{d \theta_b} = \overline{\alpha}_b(t)(g_b^1f_b^1(\theta_b)-g_b^0f_b^0(\theta_b)) - \overline{\alpha}_a(t) g_a^0f_b^0(\theta_b)$, since $g_b^1f_b^1(\theta_b)-g_b^0f_b^0(\theta_b)$ is increasing from negative to positive and $f_b^0(\theta_b)$ is decreasing over $[\underline{b}^1,\overline{b}^0]$, implying $\frac{d L^t(\theta_b)}{d \theta_b} $ is increasing over $[\underline{b}^1,\overline{b}^0]$. Based on the value of $\frac{\overline{\alpha}_a(t)}{\overline{\alpha}_b(t)}$,

$\bullet$ If $\frac{d L^t(\theta_b)}{d \theta_b}|_{\theta_b = {\theta}_b^{\max}} \geq 0$, then one-shot decision $\theta_b(t)$ satisfies $\frac{\overline{\alpha}_a(t)}{\overline{\alpha}_b(t)} = (\frac{g_b^1}{g_b^0}\frac{f_b^1(\theta_b(t))}{f_b^0(\theta_b(t))} - 1)\frac{g_b^0}{g_a^0} $ and is unique.

$\bullet$ If $\frac{d L^t(\theta_b)}{d \theta_b} < 0, \forall \theta_b \in [\underline{b}^1,{\theta}_b^{\max}]$, then $\theta_b(t) > {\theta}_b^{\max}$ and $(\theta_a(t),\theta_b(t))$ does not satisfy Case 1.

2. ${\mathcal{C}} := \texttt{StatPar} $

$\frac{d L^t(\theta_b)}{d \theta_b} = \overline{\alpha}_b(t)(g_b^1f_b^1(\theta_b)-g_b^0f_b^0(\theta_b)) - \overline{\alpha}_a(t)\frac{g_b^1f_b^1(\theta_b)+g_b^0f_b^0(\theta_b)}{1 + \frac{g_a^1f_a^1(\phi(\theta_b))}{g_a^0f_a^0(\phi(\theta_b))}} = 
(\overline{\alpha}_b(t)- \overline{\alpha}_a(t))g_b^1f_b^1(\theta_b) - (\overline{\alpha}_b(t)+ \overline{\alpha}_a(t))g_b^0f_b^0(\theta_b)$, where the last equality holds since $f_a^1(\phi(\theta_b)) = 0$ over $[\underline{a}^0,\underline{a}^1]$. Since $\frac{d L^t(\theta_b)}{d \theta_b}|_{\theta_b=\underline{b}^1}<0$, based on the value of $\frac{\overline{\alpha}_a(t)}{\overline{\alpha}_b(t)}$,

$\bullet$ If $\exists \theta_b'$ such that $\frac{d L^t(\theta_b)}{d \theta_b}|_{\theta_b = \theta_b'} \geq 0$, then one-shot decision $\theta_b(t)$ satisfies $\frac{\overline{\alpha}_a(t)}{\overline{\alpha}_b(t)} = 1-\frac{2}{\frac{g_b^1}{g_b^0}\frac{f_b^1(\theta_b(t))}{f_b^0(\theta_b(t))}+1}$ and is unique.

$\bullet$ If $\frac{d L^t(\theta_b)}{d \theta_b} < 0, \forall \theta_b \in [\underline{b}^1,{\theta}_b^{\max}]$, then $ \theta_b(t) > {\theta}_b^{\max}$ and $(\theta_a(t),\theta_b(t))$ does not satisfy Case 1.

\textbf{Case 2:} $\theta_a(t) \in [\underline{a}^1,\overline{a}^0]$, $\theta_b(t) \in [\underline{b}^1,\overline{b}^0]$

Let $\theta_b^{\max} = \min\{\overline{b}^0,\phi_\mathcal{C}^{-1}(\overline{a}^0) \}$ and $\theta_b^{\min} = \max\{\underline{b}^1,\phi_\mathcal{C}^{-1}(\underline{a}^1)\}$ be the maximum and minimum value that $\theta_b$ can take respectively. $L^t(\theta_b) = \overline{\alpha}_b(t)\int_{\underline{b}^1}^{\theta_b}g_b^1f_b^1(x)-g_b^0f_b^0(x)dx + \overline{\alpha}_a(t)\int_{\underline{a}^1}^{\phi_\mathcal{C}(\theta_b)}g_a^1f_a^1(x)-g_a^0f_a^0(x)dx + \overline{\alpha}_b(t)\int_{\underline{b}^1}^{\overline{b}^0}g_b^0f_b^0(x)dx  + \overline{\alpha}_a(t)\int_{\underline{a}^1}^{\overline{a}^0}g_a^0f_a^0(x)dx$

Taking derivative w.r.t. $\theta_b$ gives 
\begin{eqnarray*}
\frac{d L^t(\theta_b)}{d \theta_b} = \overline{\alpha}_b(t)(g_b^1f_b^1(\theta_b)-g_b^0f_b^0(\theta_b)) + \overline{\alpha}_a(t)(g_a^1f_a^1(\phi_\mathcal{C}(\theta_b))-g_a^0f_a^0(\phi_\mathcal{C}(\theta_b)))\frac{d \phi_\mathcal{C}(\theta_b)}{d \theta_b}.
\end{eqnarray*}

1. ${\mathcal{C}} := \texttt{EqOpt} $

$\frac{d L^t(\theta_b)}{d \theta_b} = ((g_a^1\frac{f_a^1(\phi(\theta_b))}{f_a^0(\phi(\theta_b))}-g_a^0)\overline{\alpha}_a(t)-g_b^0\overline{\alpha}_b(t))f_b^0(\theta_b) + g_b^1f_b^1(\theta_b)\overline{\alpha}_b(t)$. Since $\frac{d L^t(\theta_b)}{d \theta_b}|_{\theta_b = \theta_b^{\max}}>0$, based on $\frac{\overline{\alpha}_a(t)}{\overline{\alpha}_b(t)}$,

$\bullet$ If $\exists \theta_b'$ such that $\frac{d L^t(\theta_b)}{d \theta_b}|_{\theta_b = \theta_b'} \leq 0$, then one-shot decision $\theta_b(t)$ satisfies $\frac{\overline{\alpha}_a(t)}{\overline{\alpha}_b(t)} = \frac{1 - \frac{g_b^1}{g_b^0}\frac{f_b^1(\theta_b(t))}{f_b^0(\theta_b(t))}}{\frac{g_a^1}{g_a^0}\frac{f_a^1(\phi(\theta_b(t)))}{f_a^0(\phi(\theta_b(t)))}-1}\frac{g_b^0}{g_a^0} $ and is unique.

$\bullet$ If $\frac{d L^t(\theta_b)}{d \theta_b} > 0, \forall \theta_b \in  [\theta_b^{\min},\theta_b^{\max}]$, then $ \theta_b(t)<{\theta}_b^{\min}$ and $(\theta_a(t),\theta_b(t))$ does not satisfy Case 2.

2. ${\mathcal{C}} := \texttt{StatPar} $ 

$\frac{d L^t(\theta_b)}{d \theta_b} = \overline{\alpha}_b(t)(g_b^1f_b^1(\theta_b)-g_b^0f_b^0(\theta_b)) + \overline{\alpha}_a(t)(g_b^0f_b^0(\theta_b) + g_b^1f_b^1(\theta_b))\frac{g_a^1f_a^1(\phi(\theta_b))-g_a^0f_a^0(\phi(\theta_b)) }{g_a^1f_a^1(\phi(\theta_b))+g_a^0f_a^0(\phi(\theta_b))}$. 

$\bullet$ If $\exists \theta_b(t)$ such that $\frac{d L^t(\theta_b)}{d \theta_b}|_{\theta_b = \theta_b(t)} = 0$, then it satisfies
$\frac{\overline{\alpha}_a(t)}{\overline{\alpha}_b(t)} = (1-\frac{2}{\frac{g_b^1}{g_b^0}\frac{f_b^1(\theta_b(t))}{f_b^0(\theta_b(t))}+1})(\frac{2}{1-\frac{g_a^1f_a^1(\phi(\theta_b(t)))}{g_a^0f_a^0(\phi(\theta_b(t)))} }-1 )$ 
and is unique. 

$\bullet$ If $\frac{d L^t(\theta_b)}{d \theta_b} > 0, \forall \theta_b \in  [\theta_b^{\min},\theta_b^{\max}]$, then $\theta_b(t) < {\theta}_b^{\min}$ and $(\theta_a(t),\theta_b(t))$ does not satisfy Case 2.

$\bullet$ If $\frac{d L^t(\theta_b)}{d \theta_b} < 0, \forall \theta_b \in  [\theta_b^{\min},\theta_b^{\max}]$, then $ \theta_b(t) > {\theta}_b^{\max}$ and $(\theta_a(t),\theta_b(t))$ does not satisfy Case 2.

\textbf{Case 3:} $\theta_a(t) \in [\underline{a}^1,\overline{a}^0]$, $\theta_b(t) \in [\overline{b}^0,\overline{b}^1]$

Express $L^t(\theta_a,\theta_b)$ as function of $\theta_a$, the analysis will be similar to Case 1. 

Let $\theta_a^{\min} = \max\{\underline{a}^1,\phi_{\mathcal{C}}(\overline{b}^0)\}$ be the minimum value $ \theta_a$ can take.

$L^t(\theta_a) = \overline{\alpha}_a(t)\int_{\underline{a}^1}^{\theta_a}g_a^1f_a^1(x)-g_a^0f_a^0(x)dx + \overline{\alpha}_b(t)\int_{\underline{b}^1}^{\phi_{\mathcal{C}}^{-1}(\theta_a)}g_b^1f_b^1(x)dx + \overline{\alpha}_a(t)\int_{\underline{a}^1}^{\overline{a}^0}g_a^0f_a^0(x)dx$

Taking derivative w.r.t. $\theta_a$ gives $$\frac{d L^t(\theta_a)}{d \theta_a} = \overline{\alpha}_a(t)(g_a^1f_a^1(\theta_a)-g_a^0f_a^0(\theta_a)) + \overline{\alpha}_b(t) g_b^1f_b^1(\phi_{\mathcal{C}}^{-1}(\theta_a))\frac{d \phi_{\mathcal{C}}^{-1}(\theta_a)}{d \theta_a},$$where ${\mathcal{C}} := \texttt{StatPar} $.

	$\frac{d L^t(\theta_a)}{d \theta_a} = \overline{\alpha}_a(t)(g_a^1f_a^1(\theta_a)-g_a^0f_a^0(\theta_a)) + \overline{\alpha}_b(t)\frac{g_a^1f_a^1(\theta_a)+g_a^0f_a^0(\theta_a)}{1 + \frac{g_b^0f_b^0(\phi^{-1}(\theta_a))}{g_b^0f_b^0(\phi^{-1}(\theta_a))}} =  \overline{\alpha}_a(t)(g_a^1f_a^1(\theta_a)-g_a^0f_a^0(\theta_a)) + \overline{\alpha}_b(t)(g_a^1f_a^1(\theta_a)+g_a^0f_a^0(\theta_a) )$, where the last equality holds since $f_b^0(\phi^{-1}(\theta_a)) = 0$ over $[\overline{b}^0,\overline{b}^1]$. Since $\frac{d L^t(\theta_b)}{d \theta_b}|_{\theta_b=\overline{a}^0}>0$, based on the value of $\frac{\overline{\alpha}_a(t)}{\overline{\alpha}_b(t)}$,

$\bullet$ If $\exists \theta_a'$ such that $\frac{d L^t(\theta_a)}{d \theta_a}|_{\theta_a = \theta_a'} \leq 0$, then one-shot decision $\theta_a(t)$ satisfies
$\frac{\overline{\alpha}_a(t)}{\overline{\alpha}_b(t)} = \frac{2}{1-\frac{g_a^1f_a^1(\theta_a(t))}{g_a^0f_a^0(\theta_a(t))}}-1 $ and is unique. 

$\bullet$  If $\frac{d L^t(\theta_a)}{d \theta_a} > 0, \forall \theta_a \in [\underline{b}^1,{\theta}_a^{\min}]$, then $ \theta_a(t) < {\theta}_a^{\min}$ and $(\theta_a(t),\theta_b(t))$ does not satisfy Case 3.

Now consider the case when ${\mathcal{C}} := \texttt{Simple} $, where $\theta_a(t) = \theta_b(t) = \theta(t)$. Since $\delta_a>\delta_b$, suppose that both $\delta_a, \delta_b \in \mathcal{T}_a \cap \mathcal{T}_b$ and according to Lemma \ref{lemma6}, there could be only one case: $\theta(t) \in [ \underline{a}^1,\overline{b}^0]$.  

Taking derivative w.r.t. $\theta$ gives $$\frac{d L^t(\theta)}{d \theta} = \overline{\alpha}_b(t)(g_b^1f_b^1(\theta)-g_b^0f_b^0(\theta)) + \overline{\alpha}_a(t)(g_a^1f_a^1(\theta)-g_a^0f_a^0(\theta)).$$ 

$\frac{d L^t(\theta)}{d \theta}$ is increasing from negative to positive over $[\delta_b,\delta_a]$, $\exists \theta(t)$ such that  $\frac{d L^t(\theta)}{d \theta}|_{\theta = \theta(t)} = 0$, and it satisfies $\frac{\overline{\alpha}_a(t)}{\overline{\alpha}_b(t)} = \frac{g_b^1f_b^1(\theta(t))-g_b^0f_b^0(\theta(t))}{g_a^0f_a^0(\theta(t))-g_a^1f_a^1(\theta(t))}$.

\end{proof}

By Lemma \ref{lemma6}, $\theta_a(t)\in [\phi_{\mathcal{C}}(\delta_b),\delta_a], \theta_b(t) \in [\delta_b,\phi_{\mathcal{C}}^{-1}(\delta_a)]$ hold. Under Assumption \ref{assumption2}, $f_b^1f_b^1(\theta_b)\geq f_b^0f_b^0(\theta_b)$ for $\theta_b\in[\delta_b,\overline{b}^0]$, $f_a^1f_a^1(\theta_a)\leq f_a^0f_a^0(\theta_a)$ for $\theta_a\in[\underline{a}^1,\delta_a]$. Moreover, $f_k^1(x)$ is increasing and $f_k^0(x)$ is decreasing over $\mathcal{T}_k$. According to Lemma \ref{lemma8}, for each case, function $ \Psi_{\mathcal{C}}(\theta_a(t),\theta_b(t))$ is  increasing in $\theta_a(t)$ and $\theta_b(t)$.

\section{Proof of Theorem \ref{proposition1}}\label{app_dynamic}

	Some notations are simplified by removing subscript $t$ as mentioned in Appendix \ref{app_optimal}.
	
Note that $f_{k,t}(x) = f_k(x)$ is fixed. Consider two one-shot problems under the same distributions at two consecutive time steps with group representation disparity $\frac{\widetilde{\overline{\alpha}}_a}{\widetilde{\overline{\alpha}}_b}$ and $\frac{\widehat{\overline{\alpha}}_a}{\widehat{\overline{\alpha}}_b}$ respectively. Let $(\widetilde{\theta}_a,\widetilde{\theta}_b)$ and $(\widehat{\theta}_a,\widehat{\theta}_b)$ be the corresponding solutions.

According to Lemma \ref{lemma6}, $\widetilde{\theta}_a, \widehat{\theta}_a\in[\phi_{\mathcal{C}}(\delta_b),\delta_a]$, $\widetilde{\theta}_b,\widehat{\theta}_b\in [\delta_b,\phi_{\mathcal{C}}^{-1}(\delta_a)]$ hold. Suppose $\frac{\widetilde{\overline{\alpha}}_a(t)}{\widetilde{\overline{\alpha}}_b(t)}>\frac{\widehat{\overline{\alpha}}_a}{\widehat{\overline{\alpha}}_b}$. By Theorem \ref{theorem5}, it implies that $\widetilde{\theta}_a>\widehat{\theta}_a$, $\widetilde{\theta}_b>\widehat{\theta}_b$. 

Consider the dynamics with $\pi_k(\theta_k) = \nu(L_k(\theta_k))$, since $L_k(\theta_k)$ is decreasing over $[\underline{k}^0,\delta_k]$ and increasing over $[\delta_k,\overline{k}^1]$, the larger one-shot decisions $\theta_a$, $\theta_b$ would result in the larger retention rate $\pi_a(\theta_a)$ and the smaller $\pi_b(\theta_b)$ as $\nu(\cdot)$ is strictly decreasing. Therefore, $\pi_a(\widetilde{\theta}_a)>\pi_a(\widehat{\theta}_a)$ and $\pi_b(\widetilde{\theta}_b)<\pi_b(\widehat{\theta}_b)$. Hence, Monotonicity condition is satisfied. 

Consider the dynamics with $\pi_k(\theta_k) = w(D_k(\theta_k))$ where $D_k(\theta_k) = \int_{\theta_k}^{\infty}g_k^1f_k^1(x) - g_k^0f_k^0(x)dx$. The following holds for $G_a$ and $G_b$:
\begin{eqnarray*}
D_a(\theta_a) = \int_{\delta_a}^{\infty}g_a^1f_a^1(x) - g_a^0f_a^0(x)dx +  \int_{\theta_a}^{\delta_a}g_a^1f_a^1(x) - g_a^0f_a^0(x)dx\\
D_b(\theta_b) = \int_{\delta_b}^{\infty}g_b^1f_b^1(x) - g_b^0f_b^0(x)dx - \int_{\delta_b}^{\theta_b}g_b^1f_b^1(x) - g_b^0f_b^0(x)dx
\end{eqnarray*}
Since $g_a^1f_a^1(x) \leq g_a^0f_a^0(x)$ for $x\leq \delta_a$ and $g_b^1f_b^1(x) \geq g_b^0f_b^0(x)$ for $x\geq \delta_b$, the larger $\theta_a$, $\theta_b$ will thus result in the larger $\pi_a(\theta_a)$ and smaller $\pi_b(\theta_b)$ as $w(\cdot)$ is strictly increasing. Therefore, $\pi_a(\widetilde{\theta}_a)>\pi_a(\widehat{\theta}_a)$ and $\pi_b(\widetilde{\theta}_b)<\pi_b(\widehat{\theta}_b)$. Hence, Monotonicity condition is satisfied.

Combine with Theorem \ref{thm1}, $\frac{\overline{\alpha}_a(t)}{\overline{\alpha}_a(t)} $ changes monotonically. By Theorem \ref{theorem5}, the corresponding one-shot fair decision $(\theta_a(t),\theta_b(t))$ also converges monotonically.

\section{Proof of Theorem \ref{thm2}}\label{app_reshape}
\subsection{Lemmas}
To begin, we first introduce some lemmas for two cases. Lemma \ref{lemma_reshape} and \ref{lemma_reshape1} show that under the same group representation $\overline{\alpha}_a$, $\overline{\alpha}_b$, the impact of reshaping distributions on the resulting one-shot decisions. Lemma \ref{lemma_reshapeC} and \ref{lemma_reshape1C} demonstrate a sufficient condition on feature distributions and one-shot decisions of two problems such that their expected losses satisfy certain conditions. The proof of these lemmas are presented in Appendix \ref{app_lemmas}.

\textbf{Case (i): }$f_{k,t}(x) = g_{k,t}^1f_{k}^1(x) + g_{k,t}^0f_{k}^0(x) $:

Fraction of subgroup $G_k^j$ over $G_k$ changes according to change of their own perceived loss $L^j_k$, i.e., for
$i\in\{0,1\}$ such that $L_{k,t}^i(\theta_k(t))<L_{k,t-1}^i(\theta_k(t-1))$, $g_{k,t}^i>g_{k,t-1}^i$ and $g_{k,t}^{- i}<g_{k,t-1}^{-i}$.

\begin{lemma}\label{lemma_reshape}
	Let $(\widehat{\theta}_a,\widehat{\theta}_b)$, $(\widetilde{\theta}_a,\widetilde{\theta}_b)$ be two pairs of decisions under any of $\texttt{EqOpt}, \texttt{StatPar}, \texttt{Simple}$ fairness criteria such that $\widehat{\Psi}_{\mathcal{C}}(\widehat{\theta}_a,\widehat{\theta}_b) = \widetilde{\Psi}_{\mathcal{C}}(\widetilde{\theta}_a,\widetilde{\theta}_b)$, where functions $\widehat{\Psi}_{\mathcal{C}}$, $\widetilde{\Psi}_{\mathcal{C}}$ have the form given in Table \ref{table1} and are defined under feature distributions $\widehat{f}_{k}(x) = \widehat{g}_{k}^1f_k^1(x)+\widehat{g}_{k}^0f_k^0(x)$, $\widetilde{f}_{k}(x) = \widetilde{g}_{k}^1f_k^1(x)+\widetilde{g}_{k}^0f_k^0(x)$ respectively $\forall k\in\{a,b\}$. If $\widehat{g}_{k}^1<\widetilde{g}_{k}^1$ and $\widehat{g}_{k}^0>\widetilde{g}_{k}^0$, then $\widehat{\theta}_k> \widetilde{\theta}_k$ will hold $\forall k\in\{a,b\}$.
\end{lemma}

\begin{lemma}\label{lemma_reshapeC}
	Consider two one-shot problems defined in \eqref{eq:opt} with objectives $\widetilde{\pmb{O}}(\theta_a,\theta_b;\widetilde{\overline{\alpha}}_{a}, \widetilde{\overline{\alpha}}_{b})$ and $\widehat{\pmb{O}}(\theta_a,\theta_b;\widehat{\overline{\alpha}}_{a}, \widehat{\overline{\alpha}}_{b})$, where $\widetilde{\pmb{O}}$ is defined over distributions $\widetilde{f}_{k}(x) = \widetilde{g}_{k}^0f_{k}^0(x)+\widetilde{g}_{k}^1f_{k}^1(x)$ and $\widehat{\pmb{O}}$ is defined over distributions $\widehat{f}_{k}(x) = \widehat{g}_{k}^0f_{k}^0(x)+\widehat{g}_{k}^1f_{k}^1(x)$, $k\in\{a,b\}$. Let $(\widetilde{\theta}_{a},\widetilde{\theta}_{b})$, $(\widehat{\theta}_{a},\widehat{\theta}_{b})$ be the corresponding one-shot decisions under any of \texttt{Simple}, \texttt{EqOpt} or \texttt{StatPar} fairness criteria. For any $\widehat{g}_{k}^0 + \widehat{g}_{k}^1 = 1$ and  $\widetilde{g}_{k}^0 + \widetilde{g}_{k}^1 = 1$ such that $\widehat{g}_{k}^0 > \widetilde{g}_{k}^0$, $\widehat{g}_{k}^1 < \widetilde{g}_{k}^1$, $\forall k\in\{a,b\}$, if $\widehat{\theta}_{a}>\widetilde{\theta}_{a}$ and $\widehat{\theta}_{b}>\widetilde{\theta}_{b}$, then $\widehat{L}_{a}(\widehat{\theta}_{a})<\widetilde{L}_{a}(\widetilde{\theta}_{a})$ and $\widehat{L}_b(\widehat{\theta}_{b})>\widetilde{L}_{b}(\widetilde{\theta}_{b})$ can be satisfied under the following condition: 
	\begin{eqnarray}\label{condition1}
	|\Delta g_k (\widetilde{L}_{k}^0(\widetilde{\theta}_{k})-\widetilde{L}_{k}^1(\widetilde{\theta}_{k}))| < |\int_{\widetilde{\theta}_{k}}^{\widehat{\theta}_{k}} \widehat{g}_{k}^0f_k^0(x) - \widehat{g}_{k}^1f_k^1(x )   dx|, ~\forall k\in\{a,b\}
	\end{eqnarray}
	where $\Delta g_k = |\widehat{g}_{k}^0 -\widetilde{g}_{k}^0 | = |\widehat{g}_{k}^1 -\widetilde{g}_{k}^1 |$.
\end{lemma}

Note that Condition \eqref{condition1} can be satisfied when: (1) $\Delta g_k$ is sufficiently small; and (2) the difference in the decision $\widehat{\theta}_{k} - \widetilde{\theta}_{k} $ is sufficiently large, which can be achieved if $\widehat{\overline{\alpha}}_{k}$ and $\widetilde{\overline{\alpha}}_{k}$ are quite different.

\textbf{Case (ii): }$f_{k,t}(x) = g_{k}^1f_{k,t}^1(x) + g_{k}^0f_{k,t}^0(x) $

Suppose $L_{k,t}^1(\theta_k(t))>L_{k,t-1}^1(\theta_k(t-1))$, i.e., $G_k^1$ is less and less favored by the decision over time, then users from $G_k^1$ will make additional effort to improve their features so that $f_{k,t}^1(x)$ will skew toward the direction of higher feature value, i.e., $f^1_{k,t+1}(x)<f^1_{k,t}(x)$ for $x$ with smaller value ($x\in \mathcal{T}_k$) while $G_k^0$ is assumed to be unaffected, i.e.,  $f^0_{k,t+1}(x)=f^0_{k,t}(x)$. Similar statements hold when $\theta_k(t)<\theta_k(t-1)$ and $G_k^0$ is less and less favored. Moreover, assume that Assumption \ref{assumption2} holds for any reshaped distributions and the support of $f_{k,t}^1(x)$ and $f_{k,t}^0(x)$ do not change over time.   


$\forall t$, let $f_{k,t}^0(x)$ and $f_{k,t}^1(x)$ overlap over $\mathcal{T}_k:=[\underline{k}^1,\overline{k}^0]$.

\begin{lemma}\label{lemma_reshape1}
	Let $(\widehat{\theta}_a,\widehat{\theta}_b)$, $(\widetilde{\theta}_a,\widetilde{\theta}_b)$ be two pairs of decisions under any of $\texttt{EqOpt}, \texttt{StatPar}, \texttt{Simple}$ fairness criteria such that $\widehat{\Psi}_{\mathcal{C}}(\widehat{\theta}_a,\widehat{\theta}_b) = \widetilde{\Psi}_{\mathcal{C}}(\widetilde{\theta}_a,\widetilde{\theta}_b)$, where functions $\widehat{\Psi}_{\mathcal{C}}$, $\widetilde{\Psi}_{\mathcal{C}}$ have the form given in Table \ref{table1} and are defined under feature distributions $\widehat{f}_{k}(x) = g_{k}^1\widehat{f}_{k}^1(x)+g_{k}^0\widehat{f}_{k}^0(x)$, $\widetilde{f}_{k}(x) = g_{k}^1\widetilde{f}_{k}^1(x)+g_{k}^0\widetilde{f}_{k}^0(x)$ respectively $\forall k\in\{a,b\}$. If $\widehat{f}_{k}^0(x) =\widetilde{f}_{k}^0(x)$ and $\widehat{f}_{k}^1(x) < \widetilde{f}_{k}^1(x)$, $\forall x \in\mathcal{T}_k$, then $\widehat{\theta}_k> \widetilde{\theta}_k $ will hold $\forall k\in\{a,b\}$.
\end{lemma}

\begin{lemma}\label{lemma_reshape1C}
	Consider two one-shot problems defined in \eqref{eq:opt} with objectives $\widetilde{\pmb{O}}(\theta_a,\theta_b;\widetilde{\overline{\alpha}}_{a},\widetilde{ \overline{\alpha}}_{b})$ and $\widehat{\pmb{O}}(\theta_a,\theta_b;\widehat{\overline{\alpha}}_{a},\widehat{ \overline{\alpha}}_{b})$, where $\widetilde{\pmb{O}}$ is defined over distributions $\widetilde{f}_{k}(x) = g_{k}^0\widetilde{f}_{k}^0(x)+g_{k}^1\widetilde{f}_{k}^1(x)$ and $\widehat{\pmb{O}}$ is defined over distributions $\widehat{f}_{k}(x) = g_{k}^0\widehat{f}_{k}^0(x)+g_{k}^1\widehat{f}_{k}^1(x)$, $k\in\{a,b\}$. Let $(\widetilde{\theta}_{a},\widetilde{\theta}_{b})$, $(\widehat{\theta}_{a},\widehat{\theta}_{b})$ be the corresponding one-shot decisions under any of \texttt{Simple}, \texttt{EqOpt} or \texttt{StatPar} fairness criteria. For any distributions $\widetilde{f}_{k}^1$, $\widehat{f}_{k}^1$ increasing over $\mathcal{T}_k$ and $\widetilde{f}_{k}^0$, $\widehat{f}_{k}^0$ decreasing over $\mathcal{T}_k$ such that $\widehat{f}_{k}^1(x) <\widetilde{f}_{k}^1(x)$ over $\mathcal{T}_k$ and $\widehat{f}_{k}^0(x) =\widetilde{f}_{k}^0(x)={f}_{k}^0(x), \forall x$, $\forall k\in\{a,b\}$. if $\widehat{\theta}_{a}>\widetilde{\theta}_{a}$ and $\widehat{\theta}_{b}>\widetilde{\theta}_{b}$, then $\widehat{L}_{a}(\widehat{\theta}_{a})<\widetilde{L}_{a}(\widetilde{\theta}_{a})$ holds. Moreover, $\widehat{L}_{b}(\widehat{\theta}_{b})>\widetilde{L}_{b}(\widetilde{\theta}_{b})$ can be satisfied under the following condition: 
	\begin{eqnarray}\label{condition2}
	\Delta f_b^1 g_b^1(\max\{\widetilde{\theta}_{b},\widehat{\delta}_{b}\}-\underline{b}^1) < \int_{\max\{\widetilde{\theta}_{b},\widehat{\delta}_{b}\}}^{\widehat{\theta}_{b}} g_b^1\widehat{f}_{b}^1(x)-g_b^0\widehat{f}_{b}^0(x)dx 
	\end{eqnarray}
	where 
	$\Delta f_b^1 = \max_{x\in[\underline{b}^1,\max\{\widetilde{\theta}_{b},\widehat{\delta}_{b}\}]}|\widehat{f}_{b}^1(x)-\widetilde{f}_{b}^1(x)|$ and $\widehat{\delta}_{b}$ is defined such that $g_{b}^0\widehat{f}_{b}^0(\widehat{\delta}_{b} ) = g_{b}^1\widehat{f}_{b}^1(\widehat{\delta}_{b} ) $.
\end{lemma}

Note that Condition \eqref{condition2} can be satisfied when: (1) $\Delta f_b^1$ is sufficiently small, which makes $\widehat{\delta}_{b}$ close to $\widetilde{\delta}_{b}$ and $\widetilde{\theta}_{b} = \max\{\widetilde{\theta}_{b},\widehat{\delta}_{b}\}$ is more likely to hold; and (2) the difference in the decision $\widehat{\theta}_{b} - \widetilde{\theta}_{b} $ is sufficiently large, which can be achieved if $\widehat{\overline{\alpha}}_{k}$ and $\widetilde{\overline{\alpha}}_{k}$ are quite different.

\subsection{Sufficient conditions}

Below we formally state the sufficient condition under which Theorem \ref{thm2} can hold. 

\begin{con}\label{exacerbation}
	[Sufficient condition for exacerbation]  Condition \ref{exacerbation} is satisfied if the following holds:
	\begin{itemize}
		\item under \textbf{Case (i)}: Condition \eqref{condition1} is satisfied for objectives $\pmb{O}_{t}$ and $\pmb{O}_{t+1}$, $\forall t\geq 2$, i.e., 
		$$|\Delta g_{k,t+1} (L_{k,t}^0(\theta_{k}^r(t))-L_{k,t}^1(\theta_{k}^r(t)))| < |\int_{\theta_{k}^r(t)}^{\theta_{k}^r(t+1)} g_{k,t+1}^0f_k^0(x) - g_{k,t+1}^1f_k^1(x )  dx|, k\in\{a,b\}$$
		with $\Delta g_{k,t+1} = |g_{k,t+1}^j - g_{k,t}^j| , j\in\{0,1\}$.
		\item under \textbf{Case (ii)}: Condition \eqref{condition2} is satisfied for objectives $\pmb{O}_{t}$ and $\pmb{O}_{t+1}$, $\forall t\geq 2$, i.e., 
		$$\Delta f_{b,t+1}^1 g_b^1(\max\{\theta_b^r(t),\delta_{b,t+1}\}-\underline{b}^1) < \int_{\max\{\theta_b^r(t),\delta_{b,t+1}\}}^{\theta_b^r(t+1)} g_b^1f_{b,t+1}^1(x)-g_b^0f_{b,t+1}^0(x)dx $$
		with $\Delta f_{b,t+1}^1= \max_{x\in[\underline{b}^1,\max\{\theta_b^r(t),\delta_{b,t+1}\}]}|f_{b,t+1}^1(x)-f_{b,t}^1(x)|$.
	\end{itemize}
\end{con}

\begin{con}\label{accelerate}
	[Sufficient condition for acceleration of exacerbation]  
	
	Let $\pmb{O}^f_t := \pmb{O}^f_t(\theta_a,\theta_b;\overline{\alpha}_a^f(t),\overline{\alpha}_b^f(t)) $ be the objective of the one-shot problem at time $t$ for the case when distributions are fixed over time. Condition \ref{accelerate} is satisfied if the following holds:
	\begin{itemize}
		\item under \textbf{Case (i)}: Condition \eqref{condition1} is satisfied for objectives $\pmb{O}_{t}$ and $\pmb{O}_{t}^f$, $\forall t\geq 2$, i.e., 
		$$|\Delta g_{k,t} (L_{k,t}^0(\theta_{k}^f(t))-L_{k,t}^1(\theta_{k}^f(t)))| < |\int_{\theta_{k}^f(t)}^{\theta_{k}^r(t)} g_{k,t}^0f_k^0(x) - g_{k,t}^1f_k^1(x )  dx|, k\in\{a,b\}$$
		with $\Delta g_{k,t} = g_{k,t}^j - g_{k,1}^j , j\in\{0,1\}$.
		\item under \textbf{Case (ii)}: Condition \eqref{condition2} is satisfied for objectives $\pmb{O}_{t}$ and $\pmb{O}_{t}^f$, $\forall t\geq 2$, i.e., 
		$$\Delta f_{b,t}^1 g_b^1(\max\{\theta_b^f(t),\delta_{b,t}\}-\underline{b}^1) < \int_{\max\{\theta_b^f(t),\delta_{b,t}\}}^{\theta_b^r(t)} g_b^1f_{b,t}^1(x)-g_b^0f_{b,t}^0(x)dx $$
		with $\Delta f_{b,t}^1= \max_{x\in[\underline{b}^1,\max\{\theta_b^f(t),\delta_{b,t}\}]}|f_{b,t}^1(x)-f_{b,1}^1(x)|$.
	\end{itemize}
\end{con}

Note that Condition \ref{exacerbation} is likely to be satisfied when changing the decision from $\theta_k(t)$ to $\theta_k(t+1)$ results in: (i) a minor change of $f_{k,t+1}(x)$ from $f_{k,t}(x)$; or/and (ii) a significant change of representation disparity $\frac{\overline{\alpha}_a(t+1)}{\overline{\alpha}_b(t+1)}$ from $\frac{\overline{\alpha}_a(t)}{\overline{\alpha}_b(t)}$ so that $|\theta_k^r(t+1)-\theta_k^r(t)|$ is sufficiently large.

Condition \ref{accelerate} is likely to be satisfied if for any time step, (i) the change of $f_{k,t}(x)$ is minor as compared to the fixed distribution, i.e., $f_{k,1}(x)$ at time $t=1$; or/and (ii) the resulting decisions at same time under two schemes are quite different, i.e., $|\theta_k^f(t)-\theta_k^r(t)|$ is sufficiently large.

In other words, both requires that $f_{k,t}(x)$ is relatively insensitive to the change of one-shot decisions, and this applies to scenarios where the impact of reshaping distributions is considered as a slow process, e.g., change of credit score takes time and is a slow process.

\subsection{Proof of main theorem}


If $f_{k,t}(x) = f_k(x)$ is fixed $\forall t$, then the relationship between $\frac{\overline{\alpha}_a^f(t)}{\overline{\alpha}_b^f(t)}$ and one-shot solutions $(\theta_a^f(t),\theta_b^f(t))$ follows $\frac{\overline{\alpha}_a^f(t)}{\overline{\alpha}_b^f(t)} = \Psi_{\mathcal{C},1}(\theta_a^f(t),\theta_b^f(t)) ,\forall t$. If $f_{k,t}(x)$ varies over time
, then $\frac{\overline{\alpha}_a^r(t)}{\overline{\alpha}_b^r(t)} = \Psi_{\mathcal{C},t}(\theta_a^r(t),\theta_b^r(t)) ,\forall t$. We consider that distributions start to change after individuals feel the change of perceived decisions, i.e., $f_{k,t}(x)$ begins to change at time $t=3$. In the following $\forall k\in\{a,b\}$, $\theta_k^f(t)= \theta_k^r(t)= \theta_k(t)$, $\pi_{k,t}^f(\theta_k^f(t)) =\pi_{k,t}^r(\theta_k^r(t)) =\pi_{k,t}(\theta_k(t)) $ for $t=1,2$ and $\frac{\overline{\alpha}_a^f(t)}{\overline{\alpha}_b^f(t)}=\frac{\overline{\alpha}_a^r(t)}{\overline{\alpha}_b^r(t)}=\frac{\overline{\alpha}_a(t)}{\overline{\alpha}_b(t)}$ for $t=1,2,3$.

Start from $t=1$, if $(\theta_a(1),\theta_b(1))$ satisfies $\pi_{a,1}(\theta_a(1))>\pi_{b,1}(\theta_b(1))$, then $\frac{\overline{\alpha}_a(2)}{\overline{\alpha}_b(2)}>\frac{\overline{\alpha}_a(1)}{\overline{\alpha}_b(1)}$ and $\theta_k(2)>\theta_k(1)$ holds $\forall k\in\{a,b\}$, implying $\pi_{a,2}(\theta_a(2))>\pi_{a,1}(\theta_a(1))>\pi_{b,1}(\theta_b(1))>\pi_{b,2}(\theta_b(2))$ (\textit{$\pmb{O}_1$ and $\pmb{O}_2$ satisfy monotonicity condition}) and $\frac{\overline{\alpha}_a(3)}{\overline{\alpha}_b(3)}>\frac{\overline{\alpha}_a(2)}{\overline{\alpha}_b(2)}$.  Moreover, the change of decisions begins to reshape the feature distributions in the next time step. 

Consider two ways of reshaping distributions: \textbf{Case (i)} and \textbf{Case (ii)}. For both cases, show that as long as the change of distribution from $f_{k,t-1}(x)$ to $f_{k,t}(x)$ is relatively small w.r.t. the change of decision from $\theta_k(t-2)$ to $\theta_k(t-1)$ (formally stated in Condition \ref{exacerbation} and Condition \ref{accelerate}), the following can hold for any time step $t\geq 3$: \textit{(i)} $\pmb{O}_t$ and $\pmb{O}_{t+1}$ satisfy monotonicity condition: $\pi_{a,t+1}^r(\theta_a^r(t+1)) > \pi_{a,t}^r(\theta_a^r(t))$, $\pi_{b,t}^r(\theta_b^r(t)) > \pi_{b,t+1}^r(\theta_b^r(t+1))$ hold when $\frac{\overline{\alpha}_a^r(t+1)}{\overline{\alpha}_b^r(t+1)}>\frac{\overline{\alpha}_a^r(t)}{\overline{\alpha}_b^r(t)}$; \textit{(ii)} group representation disparity changes faster than case when distributions are fixed, i.e., $\frac{\overline{\alpha}_a^r(t)}{\overline{\alpha}_b^r(t)}\geq\frac{\overline{\alpha}_a^f(t)}{\overline{\alpha}_b^f(t)}, \forall t$.

Since $\theta_k(2)>\theta_k(1)$, within the same group $G_k$, subgroup $G_k^1$ (resp. $G_k^0$) experiences the higher (resp. lower) loss at time $t=2$ than $t=1$. Consider two types of change $\forall k\in\{a,b\}$:

 $\bullet$  \textbf{Case (i)}: $g_{k,3}^1<g_{k,2}^1 = g_{k,1}^1$ and $g_{k,3}^0>g_{k,2}^0 = g_{k,1}^0$.

 $\bullet$ \textbf{Case (ii)}: $f_{k,{3}}^0(x) = f_{k,{2}}^0(x)= f_{k,{1}}^0(x), \forall x$ and $f_{k,{3}}^1(x) < f_{k,{2}}^1(x)=f_{k,{1}}^1(x)$$,\forall x \in\mathcal{T}_k$.

Prove the following by induction under Condition \ref{exacerbation} and \ref{accelerate} (on the sensitivity of $f_{k,t}(x)$ w.r.t. the change of decisions): 
For $t>3$, $\frac{\overline{\alpha}_a^r(t+1)}{\overline{\alpha}_b^r(t+1)}>\frac{\overline{\alpha}_a^f(t+1)}{\overline{\alpha}_b^f(t+1)}$ and $\frac{\overline{\alpha}_a^r(t+1)}{\overline{\alpha}_b^r(t+1)}>\frac{\overline{\alpha}_a^r(t)}{\overline{\alpha}_b^r(t)}$ hold, and $\forall k\in\{a,b\}$:

 $\bullet$ \textbf{Case (i)}: $g_{k,t+1}^1<g_{k,t}^1<g_{k,1}^1$ and $g_{k,t+1}^0>g_{k,t}^0>g_{k,1}^0$ are satisfied.

 $\bullet$ \textbf{Case (ii)}: $f_{k,{t+1}}^0(x) = f_{k,{t}}^0(x)=f_{k,{1}}^0(x), \forall x$ and $f_{k,{t+1}}^1(x) < f_{k,{t}}^1(x)<f_{k,{1}}^1(x)$$,\forall x \in\mathcal{T}_k$.

\textbf{Base case:}

$\Psi_{\mathcal{C},t}$ are defined under feature distributions $f_{k,t}(x) =  g_{k,t}^1f_{k,t}^1(x)+g_{k,t}^0f_{k,t}^0(x)$, $\forall k\in\{a,b\}$. Define a pair $(\tilde{\theta}_a,\tilde{\theta}_b) $ such that the following holds:

$\frac{\overline{\alpha}_a(3)}{\overline{\alpha}_b(3)} = \Psi_{\mathcal{C},1}(\theta_a^f(3),\theta_b^f(3)) = \Psi_{\mathcal{C},3}(\theta_a^r(3),\theta_b^r(3))= \Psi_{\mathcal{C},2}(\tilde{\theta}_a,\tilde{\theta}_b)> \Psi_{\mathcal{C},2}(\theta_a^r(2),\theta_b^r(2)) = \frac{\overline{\alpha}_a(2)}{\overline{\alpha}_b(2)} $.

Then, we have $\forall k\in\{a,b\}$:

 $\bullet$ \textbf{Case (i)}: As $g_{k,3}^1<g_{k,2}^1=g_{k,1}^1$ and $g_{k,3}^0>g_{k,2}^0=g_{k,1}^0$, by Lemma \ref{lemma_reshape}, $\theta_k^r(3) > \theta_k^f(3)=\tilde{\theta}_k$ holds. 

 $\bullet$ \textbf{Case (ii)}: As $f_{k,{3}}^0(x) = f_{k,{2}}^0(x)=f_{k,{1}}^0(x), \forall x$ and $f_{k,{3}}^1(x) < f_{k,{2}}^1(x)< f_{k,{1}}^1(x)$$,\forall x \in\mathcal{T}_k$, by Lemma \ref{lemma_reshape1}, $\theta_k^r(3) > \theta_k^f(3)=\tilde{\theta}_k$ holds.

By Theorem \ref{theorem5}, $\tilde{\theta}_k> \theta_k^r(2)$ holds. It implies that $\theta_k^r(3) > \theta_k^f(3)$ and $\theta_k^r(3) >\theta_k^r(2)$. 

Consider dynamics with $\pi_{k,t}(\theta_k(t)) = \nu(L_{k,t}(\theta_{k}(t)))$. The following statements hold:

(1) Under Condition \ref{exacerbation}, $L_{a,3}(\theta_{a}^r(3))<L_{a,2}(\theta_{a}^r(2))$ and $L_{b,3}(\theta_{b}^r(3))>L_{b,2}(\theta_{b}^r(2))$ hold, implying $\pi_{a,3}^r(\theta_a^r(3)) > \pi_{a,2}^r(\theta_a^r(2))> \pi_{b,2}^r(\theta_b^r(2)) > \pi_{b,3}^r(\theta_b^r(3))$ and $\frac{\overline{\alpha}_a^r(4)}{\overline{\alpha}_b^r(4)}>\frac{\overline{\alpha}_a^r(3)}{\overline{\alpha}_b^r(3)}$. 

(2) Under Condition \ref{accelerate}, $L_{a,3}(\theta_{a}^r(3))<L_{a,3}(\theta_{a}^f(3))$ and $L_{b,3}(\theta_{b}^r(3))>L_{b,3}(\theta_{b}^f(3))$ hold, implying $\pi_{a,3}^r(\theta_a^r(3)) > \pi_{a,3}^f(\theta_a^f(3))> \pi_{b,3}^f(\theta_b^f(3)) > \pi_{b,3}^r(\theta_b^r(3))$ and  $\frac{\overline{\alpha}_a^r(4)}{\overline{\alpha}_b^r(4)}>\frac{\overline{\alpha}_a^f(4)}{\overline{\alpha}_b^f(4)}$.

(3)  $G_k^1$ (resp. $G_k^0$) experiences the higher (resp. lower) loss at $t=3$ than $t=2$, i.e., $L_{k,3}^1(\theta_k^r(3))>L_{k,2}^1(\theta_k^r(2))$ and $L_{k,3}^0(\theta_k^r(3))<L_{k,2}^0(\theta_k^r(2))$,

 $\bullet$ \textbf{Case (i)}: $g_{k,4}^1<g_{k,3}^1<g_{k,1}^1$ and  $g_{k,4}^0>g_{k,3}^0>g_{k,1}^0$ hold. 

 $\bullet$ \textbf{Case (ii)}: $f_{k,{4}}^0(x) = f_{k,{3}}^0(x)= f_{k,{1}}^0(x), \forall x$ and $f_{k,{4}}^1(x) < f_{k,{3}}^1(x)< f_{k,{1}}^1(x)$$,\forall x \in\mathcal{T}_k$ hold.

\textbf{Induction step:}

Suppose at time $t>3$, $\frac{\overline{\alpha}_a^r(t+1)}{\overline{\alpha}_b^r(t+1)}>\frac{\overline{\alpha}_a^f(t+1)}{\overline{\alpha}_b^f(t+1)}$ and $\frac{\overline{\alpha}_a^r(t+1)}{\overline{\alpha}_b^r(t+1)}>\frac{\overline{\alpha}_a^r(t)}{\overline{\alpha}_b^r(t)}$ hold, and $\forall k\in\{a,b\}$:

 $\bullet$ \textbf{Case (i)}: $g_{k,t+1}^1<g_{k,t}^1<g_{k,1}^1$ and $g_{k,t+1}^0>g_{k,t}^0>g_{k,1}^0$ are satisfied.

 $\bullet$ \textbf{Case (ii)}: $f_{k,{t+1}}^0(x) = f_{k,{t}}^0(x)=f_{k,{1}}^0(x), \forall x$ and $f_{k,{t+1}}^1(x) < f_{k,{t}}^1(x)<f_{k,{1}}^1(x)$$,\forall x \in\mathcal{T}_k$.

Then consider time step $t+1$.

Define pairs $(\tilde{\theta}_a,\tilde{\theta}_b) $ and $(\hat{\theta}_a,\hat{\theta}_b) $ such that the following holds:
\begin{eqnarray*}
	\frac{\overline{\alpha}_a^r(t+1)}{\overline{\alpha}_b^r(t+1)} =  \Psi_{\mathcal{C},t+1}(\theta_a^r(t+1),\theta_b^r(t+1)) 
	>\begin{cases}
		\frac{\overline{\alpha}_a^f(t+1)}{\overline{\alpha}_b^f(t+1)} = \Psi_{\mathcal{C},1}(\theta_a^f(t+1),\theta_b^f(t+1))= \Psi_{\mathcal{C},t+1}(\tilde{\theta}_a,\tilde{\theta}_b)	\\ \frac{\overline{\alpha}_a^r(t)}{\overline{\alpha}_b^r(t)} =\Psi_{\mathcal{C},t}(\theta_a^r(t),\theta_b^r(t)) = \Psi_{\mathcal{C},t+1}(\hat{\theta}_a,\hat{\theta}_b)
	\end{cases}
\end{eqnarray*}
According to the hypothesis, Under \textbf{Case (i)}, $\tilde{\theta}_k> \theta_k^f(t+1)$ and $\hat{\theta}_k> \theta_k^r(t)$ hold by Lemma \ref{lemma_reshape}. Under \textbf{Case (ii)}, $\tilde{\theta}_k> \theta_k^f(t+1)$ and $\hat{\theta}_k> \theta_k^r(t)$ hold by Lemma \ref{lemma_reshape1}. 
By Theorem \ref{theorem5}, $\theta_k^r(t+1)>\tilde{\theta}_k$  and $\theta_k^r(t+1)>\hat{\theta}_k$ hold. It implies that $\theta_k^r(t+1)> \theta_k^f(t+1)$ and $\theta_k^r(t+1)> \theta_k^r(t)$.

(1) Under Condition \ref{exacerbation}, $L_{a,t+1}(\theta_{a}^r(t+1))<L_{a,t}(\theta_{a}^r(t))$ and $L_{b,t+1}(\theta_{b}^r(t+1))>L_{b,t}(\theta_{b}^r(t))$hold, implying $\pi_{a,t+1}^r(\theta_a^r(t+1)) > \pi_{a,t}^r(\theta_a^r(t))> \pi_{b,t}^r(\theta_b^r(t)) > \pi_{b,t+1}^r(\theta_b^r(t+1))$ and $\frac{\overline{\alpha}_a^r(t+1)}{\overline{\alpha}_b^r(t+1)}>\frac{\overline{\alpha}_a^r(t)}{\overline{\alpha}_b^r(t)}$: $\pmb{O}_t$ and $\pmb{O}_{t+1}$ satisfy monotonicity condition and representation disparity get exacerbated. 

(2) Under Condition \ref{accelerate}, $L_{a,t+1}(\theta_{a}^r(t+1))<L_{a,t+1}(\theta_{a}^f(t+1))$ and $L_{b,t+1}(\theta_{b}^r(t+1))>L_{b,t+1}(\theta_{b}^f(t+1))$ hold, implying $\pi_{a,t+1}^r(\theta_a^r(t+1)) > \pi_{a,t+1}^f(\theta_a^f(t+1))> \pi_{b,t+1}^f(\theta_b^f(t+1)) > \pi_{b,t+1}^r(\theta_b^r(t+1))$ and thus $\frac{\overline{\alpha}_a^r(t+1)}{\overline{\alpha}_b^r(t+1)}>\frac{\overline{\alpha}_a^f(t+1)}{\overline{\alpha}_b^f(t+1)}$: the discrepancy between retention rates of two demographic groups is larger at each time compared to the case when distributions are fixed, and if the disparity get exacerbated, this exacerbation is accelerated under the reshaping.

(3)  $G_k^1$ (resp. $G_k^0$) experiences the higher (resp. lower) loss at $t+1$ than $t$,  i.e., $L_{k,t+1}^1(\theta_k^r(t+1))>L_{k,t}^1(\theta_k^r(t))$ and $L_{k,t+1}^0(\theta_k^r(t+1))<L_{k,t}^0(\theta_k^r(t))$. Therefore, 

 $\bullet$ \textbf{Case (i)}: $g_{k,t+2}^1<g_{k,t+1}^1<g_{k,1}^1$ and  $g_{k,t+2}^0>g_{k,t+1}^0>g_{k,1}^0$ hold. 

 $\bullet$ \textbf{Case (ii)}: $f_{k,{t+2}}^0(x) = f_{k,{t+1}}^0(x)=f_{k,{1}}^0(x), \forall x$ and $f_{k,{t+2}}^1(x) < f_{k,{t+1}}^1(x)< f_{k,{1}}^1(x)$$,\forall x \in\mathcal{T}_k$ hold.

Proof is completed.

The case  if $\pi_{a,1}(\theta_a(1))<\pi_{b,1}(\theta_b(1))$ can be proved similarly and is omitted.

\section{Proof of Lemmas for Theorem \ref{thm2}}\label{app_lemmas}
\subsection{Proof of Lemma \ref{lemma_reshape}}
	
	$f_k^0(x)$ and $f_k^1(x)$ overlap over $\mathcal{T}_k:=[\underline{k}^1,\overline{k}^0]$.
	
	1.  $\mathcal{C} := \texttt{StatPar}$
	
	To satisfy $\widehat{\Psi}_{\texttt{StatPar}}(\widehat{\theta}_a,\widehat{\theta}_b) = \widetilde{\Psi}_{\texttt{StatPar}}(\widetilde{\theta}_a,\widetilde{\theta}_b)$, $\frac{\widehat{g}_{k}^1f_k^1(\widehat{\theta}_k)}{\widehat{g}_{k}^0f_k^0(\widehat{\theta}_k)} = \frac{\widetilde{g}_{k}^1f_k^1(\widetilde{\theta}_k)}{\widetilde{g}_{k}^0f_k^0(\widetilde{\theta}_k)} $ should hold. Under Assumption \ref{assumption2}, both $ \frac{\widehat{g}_{k}^1f_k^1(\cdot)}{\widehat{g}_{k}^0f_k^0(\cdot)}$ and $ \frac{\widetilde{g}_{k}^1f_k^1(\cdot)}{\widetilde{g}_{k}^0f_k^0(\cdot)}$ are strictly increasing over $\mathcal{T}_k$. Since $\forall k\in\{a,b\}$, there is $\frac{\widehat{g}_{k}^1f_k^1(\theta_k)}{\widehat{g}_{k}^0f_k^0(\theta_k)} < \frac{\widetilde{g}_{k}^1f_k^1(\theta_k)}{\widetilde{g}_{k}^0f_k^0(\theta_k)},~ \forall \theta_k\in\mathcal{T}_k$. For all three possibilities in Table \ref{table1}, $\widehat{\theta}_k > \widetilde{\theta}_k$ holds $\forall k\in\{a,b\}$. 
	
	2. $\mathcal{C} := \texttt{EqOpt}$
	
	Since $\widetilde{L}_{a}^0(\theta_a) = \widetilde{L}_{b}^0(\theta_b)$ and $\widehat{L}_{a}^0(\theta_a) = \widehat{L}_{b}^0(\theta_b)$ always hold for any $(\theta_a,\theta_b)$ satisfying \texttt{EqOpt} criterion, when change of $\widehat{g}_{k}^0$ (or $\widetilde{g}_{k}^0$) is determined by $\theta_k$ only via $\widehat{L}_{k}^0(\theta_k)$ (or $\widetilde{L}_{k}^0(\theta_k)$), both $\frac{\widehat{g}_{b}^0}{\widehat{g}_{a}^0} = 1$ and $\frac{\widetilde{g}_{b}^0}{\widetilde{g}_{a}^0} = 1$ are satisfied. To satisfy $\widehat{\Psi}_{\texttt{EqOpt}}(\widehat{\theta}_a,\widehat{\theta}_b) = \widetilde{\Psi}_{\texttt{EqOpt}}(\widetilde{\theta}_a,\widetilde{\theta}_b)$, $\frac{\widehat{g}_{k}^1f_k^1(\widehat{\theta}_k)}{\widehat{g}_{k}^0f_k^0(\widehat{\theta}_k)} = \frac{\widetilde{g}_{k}^1f_k^1(\widetilde{\theta}_k)}{\widetilde{g}_{k}^0f_k^0(\widetilde{\theta}_k)} $ should hold, which is same as the condition that should be satisfied in case when $\mathcal{C} := \texttt{StatPar}$. Rest of the proof is thus same as $\texttt{StatPar}$ case and is omitted.

	3. $\mathcal{C} := \texttt{Simple}$

	\texttt{Simple} fairness criterion requires that $\widehat{\theta}_a=\widehat{\theta}_b=\widehat{\theta}$ and $\widetilde{\theta}_a=\widetilde{\theta}_b=\widetilde{\theta}$. In order to satisfy $\widehat{\Psi}_{\texttt{Simple}}(\widehat{\theta}_a,\widehat{\theta}_b) = \widetilde{\Psi}_{\texttt{Simple}}(\widetilde{\theta}_a,\widetilde{\theta}_b)$, $\frac{\widehat{g}_{b}^1f_b^1(\widehat{\theta})-\widehat{g}_{b}^0f_b^0(\widehat{\theta})}{\widehat{g}_{a}^0f_a^0(\widehat{\theta})-\widehat{g}_{a}^1f_a^1(\widehat{\theta})} = \frac{\widetilde{g}_{b}^1f_b^1(\widetilde{\theta})-\widetilde{g}_{b}^0f_b^0(\widetilde{\theta})}{\widetilde{g}_{a}^0f_a^0(\widetilde{\theta})-\widetilde{g}_{a}^1f_a^1(\widetilde{\theta})}$ should hold. Under Assumption \ref{assumption2}, both $\frac{\widehat{g}_{b}^1f_b^1(\cdot)-\widehat{g}_{b}^0f_b^0(\cdot)}{\widehat{g}_{a}^0f_a^0(\cdot)-\widehat{g}_{a}^1f_a^1(\cdot)}$ and $ \frac{\widetilde{g}_{b}^1f_b^1(\cdot)-\widetilde{g}_{b}^0f_b^0(\cdot)}{\widetilde{g}_{a}^0f_a^0(\cdot)-\widetilde{g}_{a}^1f_a^1(\cdot)}$ are strictly increasing over $\mathcal{T}_k$. Since $\forall k\in\{a,b\}$, there is $ \frac{\widehat{g}_{b}^1f_b^1(\theta)-\widehat{g}_{b}^0f_b^0(\theta)}{\widehat{g}_{a}^0f_a^0(\theta)-\widehat{g}_{a}^1f_a^1(\theta)}< \frac{\widetilde{g}_{b}^1f_b^1(\theta)-\widetilde{g}_{b}^0f_b^0(\theta)}{\widetilde{g}_{a}^0f_a^0(\theta)-\widetilde{g}_{a}^1f_a^1(\theta)}$, $\forall \theta \in \mathcal{T}_a\cap \mathcal{T}_b$, implying that $\widehat{\theta}>\widetilde{\theta}$.

\subsection{Proof of Lemma \ref{lemma_reshapeC}}

	Define $\Delta L_k^j = |\widehat{L}_{k}^j(\widehat{\theta}_{k}) - \widetilde{L}_{k}^j(\widetilde{\theta}_{k}) |, j\in\{0,1\}$.
	Rewrite $\widehat{g}_{k}^0 = \widetilde{g}_{k}^0+ \Delta g_k $ and $\widehat{g}_{k}^1 = \widetilde{g}_{k}^1- \Delta g_k $.  For $k\in\{a,b\}$, $\widehat{\theta}_{k}>\widetilde{\theta}_{k}$ holds, which implies that $  \widehat{L}_{k}^1(\widehat{\theta}_{k}) = \widetilde{L}_{k}^1(\widetilde{\theta}_{k}) + \Delta L_k^1$ and $  \widehat{L}_{k}^0(\widehat{\theta}_{k}) = \widetilde{L}_{k}^0(\widetilde{\theta}_{k}) - \Delta L_k^0$. Therefore, 
	\begin{eqnarray*}
		\widehat{L}_{k}(\widehat{\theta}_{k}) - 	\widetilde{L}_{k}(\widetilde{\theta}_{k}) =  \Delta g_k (\widetilde{L}_{k}^0(\widetilde{\theta}_{k}) -\widetilde{L}_{k}^1(\widetilde{\theta}_{k}) ) - ( \widehat{g}_{k}^0\Delta L_k^0 -\widehat{g}_{k}^1\Delta L_k^1), k\in\{a,b\} 
	\end{eqnarray*}
	since 
	\begin{eqnarray*}
		\Delta L_k^1 = \int_{\widetilde{\theta}_{k}}^{\widehat{\theta}_{k}}f_k^1(x)dx ; ~\Delta L_k^0 = \int_{\widetilde{\theta}_{k}}^{\widehat{\theta}_{k}}f_k^0(x)dx 
	\end{eqnarray*}
	Define $\widehat{\delta}_{k}$ such that $\widehat{g}_{k}^0f_k^0(\widehat{\delta}_{k} ) = \widehat{g}_{k}^1f_k^1(\widehat{\delta}_{k} ) $, then $\widehat{g}_{a}^0f_a^0(x) > \widehat{g}_{a}^1f_a^1(x)$ when $x<\widehat{\delta}_{a}$ and $\widehat{g}_b^0f_b^0(x) < \widehat{g}_b^1f_b^1(x)$ when $x>\widehat{\delta}_{b}$. By Lemma \ref{lemma6}, $\widehat{\theta}_{a}< \widehat{\delta}_{a}$ and $\widehat{\theta}_{b}> \widehat{\delta}_{b}$ hold, implying
	\begin{eqnarray*}
		\widehat{g}_{k}^0 \Delta L_k^0 - \widehat{g}_{k}^1 \Delta L_k^1=\int_{\widetilde{\theta}_{k}}^{\widehat{\theta}_{k}} \widehat{g}_{k}^0f_k^0(x) - \widehat{g}_{k}^1f_k^1(x )   dx  \begin{cases}
			> 0, ~k = a\\
			<0,  ~k = b
		\end{cases}
	\end{eqnarray*}
	If $|\Delta g_k (\widetilde{L}_{k}^0(\widetilde{\theta}_{k})-\widetilde{L}_{k}^1(\widetilde{\theta}_{k}))| < |\int_{\widetilde{\theta}_{k}}^{\widehat{\theta}_{k}} \widehat{g}_{k}^0f_k^0(x) - \widehat{g}_{k}^1f_k^1(x )   dx|$ holds, then the sign of $	\widehat{L}_{k}(\widehat{\theta}_{k}) - 	\widetilde{L}_{k}(\widetilde{\theta}_{k})$ is determined by the sign of  $\widehat{g}_{k}^1\Delta L_k^1 -\widehat{g}_{k}^0\Delta L_k^0 $. We have $ \widehat{L}_{a}(\widehat{\theta}_{a}) <	\widetilde{L}_{a}(\widetilde{\theta}_{a})$ and $ \widehat{L}_{b}(\widehat{\theta}_{b}) >	\widetilde{L}_{b}(\widetilde{\theta}_{b})$.

\subsection{Proof of Lemma \ref{lemma_reshape1}}

	1.  $\mathcal{C} := \texttt{StatPar}$ or $\mathcal{C} := \texttt{EqOpt}$
	
	To satisfy $\widehat{\Psi}_{\texttt{StatPar}}(\widehat{\theta}_a,\widehat{\theta}_b) = \widetilde{\Psi}_{\texttt{StatPar}}(\widetilde{\theta}_a,\widetilde{\theta}_b)$ or $\widehat{\Psi}_{\texttt{EqOpt}}(\widehat{\theta}_a,\widehat{\theta}_b) = \widetilde{\Psi}_{\texttt{EqOpt}}(\widetilde{\theta}_a,\widetilde{\theta}_b)$, $  \frac{g_{k}^1\widetilde{f}_{k}^1(\widetilde{\theta}_k)}{g_{k}^0\widetilde{f}_{k}^0(\widetilde{\theta}_k)} =\frac{g_{k}^1\widehat{f}_{k}^1(\widehat{\theta}_k)}{g_{k}^0\widehat{f}_{k}^0(\widehat{\theta}_k)} < \frac{g_{k}^1\widetilde{f}_{k}^1(\widehat{\theta}_k)}{g_{k}^0\widetilde{f}_{k}^0(\widehat{\theta}_k)} $ should hold. Under Assumption \ref{assumption2}, $\frac{g_{k}^1\widetilde{f}_{k}^1(\cdot)}{g_{k}^0\widetilde{f}_{k}^0(\cdot)}$ is strictly increasing over $\mathcal{T}_k$. $ \widehat{\theta}_k> \widetilde{\theta}_k$ has to be satisfied.

	2. $\mathcal{C} := \texttt{Simple}$

	\texttt{Simple} fairness criterion requires that $\widehat{\theta}_a=\widehat{\theta}_b=\widehat{\theta}$ and $\widetilde{\theta}_a=\widetilde{\theta}_b=\widetilde{\theta}$. In order to satisfy $\widehat{\Psi}_{\texttt{Simple}}(\widehat{\theta}_a,\widehat{\theta}_b) = \widetilde{\Psi}_{\texttt{Simple}}(\widetilde{\theta}_a,\widetilde{\theta}_b)$, $ \frac{g_{b}^1\widetilde{f}_{b}^1(\widetilde{\theta})-g_{b}^0\widetilde{f}_{b}^0(\widetilde{\theta})}{g_{a}^0\widetilde{f}_{a}^0(\widetilde{\theta})-g_{a}^1\widetilde{f}_{a}^1(\widetilde{\theta})} =\frac{g_{b}^1\widehat{f}_{b}^1(\widehat{\theta})-g_{b}^0\widehat{f}_{b}^0(\widehat{\theta})}{g_{a}^0\widehat{f}_{a}^0(\widehat{\theta})-g_{a}^1\widehat{f}_{a}^1(\widehat{\theta})}< \frac{g_{b}^1\widetilde{f}_{b}^1(\widehat{\theta})-g_{b}^0\widetilde{f}_{b}^0(\widehat{\theta})}{g_{a}^0\widetilde{f}_{a}^0(\widehat{\theta})-g_{a}^1\widetilde{f}_{a}^1(\widehat{\theta})}$ should hold. Under Assumption \ref{assumption2}, 
	$\frac{{g}_{b}^1\widetilde{f}_b^1(\cdot)-{g}_{b}^0\widetilde{f}_b^0(\cdot)}{{g}_{a}^0\widetilde{f}_a^0(\cdot)-{g}_{a}^1\widetilde{f}_a^1(\cdot)}$ is strictly increasing over $\mathcal{T}_k$. For $\widehat{\theta},\widetilde{\theta}\in\mathcal{T}_a\cap \mathcal{T}_b$, $ \widehat{\theta} > \widetilde{\theta}$ has to be satisfied.

\subsection{Proof of Lemma \ref{lemma_reshape1C}}

	Define $\widehat{\delta}_{k}$ such that $g_{k}^0\widehat{f}_{k}^0(\widehat{\delta}_{k}) = g_{k}^1\widehat{f}_{k}^1(\widehat{\delta}_{k}) $. Then, $g_{a}^0\widehat{f}_{a}^0(x) > g_{a}^1\widehat{f}_{a}^1(x)$ when $x<\widehat{\delta}_{a} $ and $g_{b}^0\widehat{f}_{b}^0(x) < g_{b}^1\widehat{f}_{b}^1(x)$ when $x>\widehat{\delta}_{b} $.
	
	Since $\widehat{\theta}_{k}>\widetilde{\theta}_{k}$, we have
	\begin{eqnarray*}
		\widehat{L}_{k}^0(\widehat{\theta}_{k}) - \widetilde{L}_{k}^0(\widetilde{\theta}_{k}) &=& -\int_{\widetilde{\theta}_{k}}^{\widehat{\theta}_{k}} \widetilde{f}_{k}^0(x)dx=-\int_{\widetilde{\theta}_{k}}^{\widehat{\theta}_{k}} \widehat{f}_{k}^0(x)dx\\
		\widehat{L}_{k}^1(\widehat{\theta}_{k}) - \widetilde{L}_{k}^1(\widetilde{\theta}_{k}) &= & \int_{\widetilde{\theta}_{k}}^{\widehat{\theta}_{k}} \widehat{f}_{k}^1(x)dx  -\int_{\underline{k}^1}^{\widetilde{\theta}_{k}} (\widetilde{f}_{k}^1(x)-  \widehat{f}_{k}^1(x))dx
	\end{eqnarray*}
	Therefore, 
	\begin{eqnarray*}
		\widehat{L}_{k}(\widehat{\theta}_{k}) - \widetilde{L}_{k}(\widetilde{\theta}_{k}) = \int_{\widetilde{\theta}_{k}}^{\widehat{\theta}_{k}} g_k^1\widehat{f}_{k}^1(x)-g_k^0\widehat{f}_{k}^0(x)dx - g_k^1\int_{\underline{k}^1}^{\widetilde{\theta}_{k}} (\widetilde{f}_{k}^1(x)-  \widehat{f}_{k}^1(x))dx
	\end{eqnarray*}
	
	since $\widetilde{\theta}_{a}<\widehat{\theta}_{a}<\widehat{\delta}_{a}$,  $\int_{\widetilde{\theta}_{a}}^{\widehat{\theta}_{a}} g_a^1\widehat{f}_{a}^1(x)-g_a^0\widehat{f}_{a}^0(x)dx < 0$ holds. Since $\widetilde{f}_{a}^1(x)>  \widehat{f}_{a}^1(x)$ for $x\in \mathcal{T}_a$, we have $g_a^1\int_{\underline{a}^1}^{\widetilde{\theta}_{a}} (\widetilde{f}_{a}^1(x)-  \widehat{f}_{a}^1(x))dx > 0$. Therefore, $\widehat{L}_{a}(\widehat{\theta}_{a}) < \widetilde{L}_{a}(\widetilde{\theta}_{a}) $.
	
	When $k = b$, there are two possibilities: (i) $\widetilde{\theta}_{b} < \widehat{\delta}_{b}<\widehat{\theta}_{b}$; (ii) $\widehat{\delta}_{b}<\widetilde{\theta}_{b} <\widehat{\theta}_{b}$.
	
	For case (i), 
	\begin{eqnarray*}
		\widehat{L}_{b}(\widehat{\theta}_{b}) - \widetilde{L}_{b}(\widetilde{\theta}_{b}) &=& \underbrace{ \int_{\widehat{\delta}_{b}}^{\widehat{\theta}_{b}} g_b^1\widehat{f}_{b}^1(x)-g_b^0\widehat{f}_{b}^0(x)dx }_{\textbf{term 1}}+ \underbrace{\int_{\widetilde{\theta}_{b}}^{\widehat{\delta}_{b}} g_b^1\widehat{f}_{b}^1(x)-g_b^0\widehat{f}_{b}^0(x)dx }_{\textbf{term 2}}\\&+& \underbrace{g_b^1\int_{\underline{b}^1}^{\widetilde{\theta}_{b}} ( \widehat{f}_{b}^1(x)-\widetilde{f}_{b}^1(x))dx}_{\textbf{term 3}}
	\end{eqnarray*}
	Since $ \widetilde{\delta}_{b}<\widetilde{\theta}_{b} < \widehat{\delta}_{b}$ and $\widehat{f}_{b}^0(x)=\widetilde{f}_{b}^0(x)$, for $x\in[ \widetilde{\theta}_{b} ,\widehat{\delta}_{b}]$, $g_b^1\widehat{f}_{b}^1(x)-g_b^1\widetilde{f}_{b}^1(x)<g_b^1\widehat{f}_{b}^1(x)-g_b^0\widehat{f}_{b}^0(x) <0$, we have $0>\textbf{term 2} +  \textbf{term 3} >  g_b^1\int_{\underline{b}^1}^{\widehat{\delta}_{b}} ( \widehat{f}_{b}^1(x)-\widetilde{f}_{b}^1(x))dx$. 
	
	Define $\Delta_1= \max_{x\in[\underline{b}^1,\widehat{\delta}_{b}]}|\widehat{f}_{b}^1(x)-\widetilde{f}_{b}^1(x)|$. Since $\textbf{term 1} > 0$, $\widehat{L}_{b}(\widehat{\theta}_{b}) > \widetilde{L}_{b}(\widetilde{\theta}_{b})$ holds only if the following condition is satisfied:
	\begin{eqnarray*}
		\Delta_1 g_b^1(\widehat{\delta}_{b}-\underline{b}^1) < \int_{\widehat{\delta}_{b}}^{\widehat{\theta}_{b}} g_b^1\widehat{f}_{b}^1(x)-g_b^0\widehat{f}_{b}^0(x)dx 
	\end{eqnarray*}
	
	For case (ii), 
	\begin{eqnarray*}
		\widehat{L}_{b}(\widehat{\theta}_{b}) - \widetilde{L}_{b}(\widetilde{\theta}_{b}) &=& \underbrace{ \int_{\widetilde{\theta}_{b}}^{\widehat{\theta}_{b}} g_b^1\widehat{f}_{b}^1(x)-g_b^0\widehat{f}_{b}^0(x)dx }_{\textbf{term 1}}+ \underbrace{g_b^1\int_{\underline{b}^1}^{\widetilde{\theta}_{b}} ( \widehat{f}_{b}^1(x)-\widetilde{f}_{b}^1(x))dx}_{\textbf{term 2}}
	\end{eqnarray*}
	Define $\Delta_2= \max_{x\in[\underline{b}^1,\widetilde{\theta}_{b}]}|\widehat{f}_{b}^1(x)-\widetilde{f}_{b}^1(x)|$. Similar to case (i), $\widehat{L}_{b}(\widehat{\theta}_{b}) > \widetilde{L}_{b}(\widetilde{\theta}_{b})$ holds only if the following condition is satisfied:
	\begin{eqnarray*}
		\Delta_2 g_b^1(\widetilde{\theta}_{b}-\underline{b}^1) < \int_{\widetilde{\theta}_{b}}^{\widehat{\theta}_{b}} g_b^1\widehat{f}_{b}^1(x)-g_b^0\widehat{f}_{b}^0(x)dx 
	\end{eqnarray*}
	Combine two cases, let $\Delta f_b^1 = \max_{x\in[\underline{b}^1,\max\{\widetilde{\theta}_{b},\widehat{\delta}_{b}\}]}|\widehat{f}_{b}^1(x)-\widetilde{f}_{b}^1(x)|$, $\widehat{L}_{b}(\widehat{\theta}_{b}) > \widetilde{L}_{b}(\widetilde{\theta}_{b})$ holds only if the following condition is satisfied:
	\begin{eqnarray*}
		\Delta f_b^1 g_b^1(\max\{\widetilde{\theta}_{b},\widehat{\delta}_{b}\}-\underline{b}^1) < \int_{\max\{\widetilde{\theta}_{b},\widehat{\delta}_{b}\}}^{\widehat{\theta}_{b}} g_b^1\widehat{f}_{b}^1(x)-g_b^0\widehat{f}_{b}^0(x)dx 
	\end{eqnarray*}

\section{More on examples of finding proper fairness constraints from dynamics}\label{app_example}

\begin{example}
	\text{[Linear first order model]} is given by $N_k(t+1) =N_k(t) \pi_{k}^2(\theta_k(t))+\beta_k\pi_{k}^1(\theta_k(t))$. This is a general form of dynamics \eqref{eq:dynamic} where the arrivals can also depend on the decision. When $\pi_{k}^1(\theta_k(t))=1$, then dynamics model will be reduced to \eqref{eq:dynamic}.  $\tilde{N}_k  = \frac{\beta_k\pi_{k}^1(\theta_k)}{1 - \pi_{k}^2(\theta_k)}$ is the stable fixed point if $\pi_{k}^2(\theta_k)<1$ holds. Since $|\frac{\tilde{N}_a}{\tilde{N}_b}-\frac{\beta_a}{\beta_b}| = \frac{\beta_a}{\beta_b}|\frac{\pi_{a}^1(\theta_a)}{\pi_{b}^1(\theta_b)}\frac{1 -\pi_{b}^2(\theta_b)}{1 -\pi_{a}^2(\theta_a)}-1 |$, solution  pair $(\theta_a^*,\theta_b^*)$ should satisfy $\frac{\pi_{a}^1(\theta_a^*)}{1 -\pi_{a}^2(\theta_a^*)}=\frac{\pi_{b}^1(\theta_b^*)}{1 -\pi_{b}^2(\theta_b^*)}$. The constraint set that can sustain the group representation is given by: $$\mathcal{C} = \{(\theta_a,\theta_b)|(\theta_a,\theta_b)\in\Theta\times \Theta, \frac{\pi_{a}^1(\theta_a)}{1 -\pi_{a}^2(\theta_a)}=\frac{\pi_{b}^1(\theta_b)}{1 -\pi_{b}^2(\theta_b)}, \pi_{a}^2(\theta_a)<1, \pi_{b}^2(\theta_b)<1\}.$$
\end{example}
	Consider the case where departure is driven by positive rate $\pi_{k}^2(\theta_k) =\nu( \int_{\theta_k}^{\infty}f_k(x)dx )$ and arrival is driven by error rate $\pi_{k}^1(\theta_k) = \nu(g_k^0\int_{\theta_k}^{\infty}f_k^0(x)dx + g_k^1\int_{-\infty}^{\theta_k}f_k^1(x)dx) = \nu(L_k(\theta_k))$ where $\nu(\cdot)$ is a strictly decreasing function. This can be applied in lending scenario, where an applicant will stay as long as he/she gets the loan (positive rate) regardless of his/her qualification. Since an unqualified applicant who is issued the loan cannot repay, his/her credit score will be decreased which lowers the chance to get a loan in the future \cite{pmlr-v80-liu18c}. Therefore, users may decide whether to apply for a loan based on the error rate. 
	
	In Fig. \ref{fig:example}, $\Delta$-fair set is illustrated for the case when $f_k^j(x), ~k\in\{a,b\}, j\in\{0,1\}$ is truncated normal distributed with parameters $[\sigma_a^0,\sigma_a^1,\sigma_b^0,\sigma_b^1] = [5,6,6,5]$, $[\underline{k}^0,\underline{k}^1,\overline{k}^0,\overline{k}^1] = [5,11,20,35]$, $[\mu_k^0,\mu_k^1] = [10,25]$ for $k\in\{a,b\}$ and $\nu(x)=1-x$. The left heat map illustrates the $\Delta$-fair set for the dynamics model mentioned above. On the other hand, the right heat map illustrates the dynamics model introduced in Section \ref{subsec:dynamic} where the departure is driven by model accuracy, i.e., $\pi_{k}^2(\theta_k) = \nu(L_k(\theta_k))$ and $\pi_{k}^1(\theta_k) =1$. Here, $x$-axis and $y$-axis represent $\theta_b$ and $\theta_a$ respectively. Each pair $(\theta_a,\theta_b)$ has a corresponding value of  $|\frac{\tilde{N}_a}{\tilde{N}_b}-\frac{\beta_a}{\beta_b}|$ measuring how well it can sustain the group representation. The colored area illustrates all the pairs such that $|\frac{\tilde{N}_a}{\tilde{N}_b}-\frac{\beta_a}{\beta_b}|\leq \frac{\beta_a}{\beta_b}$.  All $(\theta_a,\theta_b)$ pairs that have the same value of $|\frac{\tilde{N}_a}{\tilde{N}_b}-\frac{\beta_a}{\beta_b}| =  \frac{\beta_a}{\beta_b}\epsilon$ form a curve of the same color, where the corresponding value of $\epsilon\in[0,1]$ is shown in the color bar. $\Delta$-fair set is the union of all curves with $\epsilon \leq \Delta \frac{\beta_b}{\beta_a}$.

\begin{example}
\text{[Quadratic first order model]} is given by $N_k(t+1) =(N_k(t))^2 \pi_k^1(\theta_k(t))+\beta_k$. $\tilde{N}_k = \frac{1}{2\pi_k^1(\theta_k)} - \sqrt{\frac{1}{4(\pi_k^1(\theta_k))^2} - \frac{\beta_k}{\pi_k^1(\theta_k)}}$ is the stable fixed point if $\pi_k^1(\theta_a)<\frac{1}{4\beta_k}$ holds. Since $|\frac{\tilde{N}_a}{\tilde{N}_b}-\frac{\beta_a}{\beta_b}| = \frac{\beta_a}{\beta_b}|\frac{\beta_b\pi_b^1(\theta_b)}{\beta_a\pi_a^1(\theta_a)}\frac{1-\sqrt{1-4\beta_a\pi_a^1(\theta_a)}}{1-\sqrt{1-4\beta_b\pi_b^1(\theta_b)}}-1 |$, then $\beta_a\pi_a^1(\theta_a^{*}) = \beta_b\pi_b^1(\theta_b^{*})$ should be satisfied.  The constraint set that can sustain the group representation is given by$$\mathcal{C} = \{(\theta_a,\theta_b)|(\theta_a,\theta_b)\in\Theta\times \Theta, \beta_a\pi_a^1(\theta_a)=\beta_b\pi_b^1(\theta_b), \pi_a^1(\theta_a)<\frac{1}{4\beta_a}, \pi_b^1(\theta_b)<\frac{1}{4\beta_b}\}.$$
\end{example}

\section{Supplementary Material for the Experiments}\label{supp:experiment}
\subsection{Parameter settings}\label{supp:param}
$f_k^j(x)$ follows the truncated normal distribution, the supports of $f_k^j(x), k\in\{a,b\},j\in\{0,1\}$ are $[\underline{a}^0,\underline{a}^1,\overline{a}^0,\overline{a}^1] = [-8,5,19,35]$, $[\underline{b}^0,\underline{b}^1,\overline{b}^0,\overline{b}^1] = [-6,25,9,43]$, with the means $[\mu_a^0,\mu_a^1,\mu_b^0,\mu_b^1] = [4,20,8,27]$ and standard deviations $[\sigma_a^0,\sigma_a^1,\sigma_b^0,\sigma_b^1] = [5,6,3,6]$. The label proportions are $g_a^0 = 0.4$, $g_b^0 = 0.6$. The dynamics \eqref{eq:dynamic} uses $\nu(x) = 1-x$.

\subsection{Illustration of convergence of sample paths}

\begin{figure}[h]
	\centering  
	\subfigure[Group proportion]{\label{fig4:b}\includegraphics[trim={0cm 0cm 0cm 0cm},clip=true,width=0.45\textwidth]{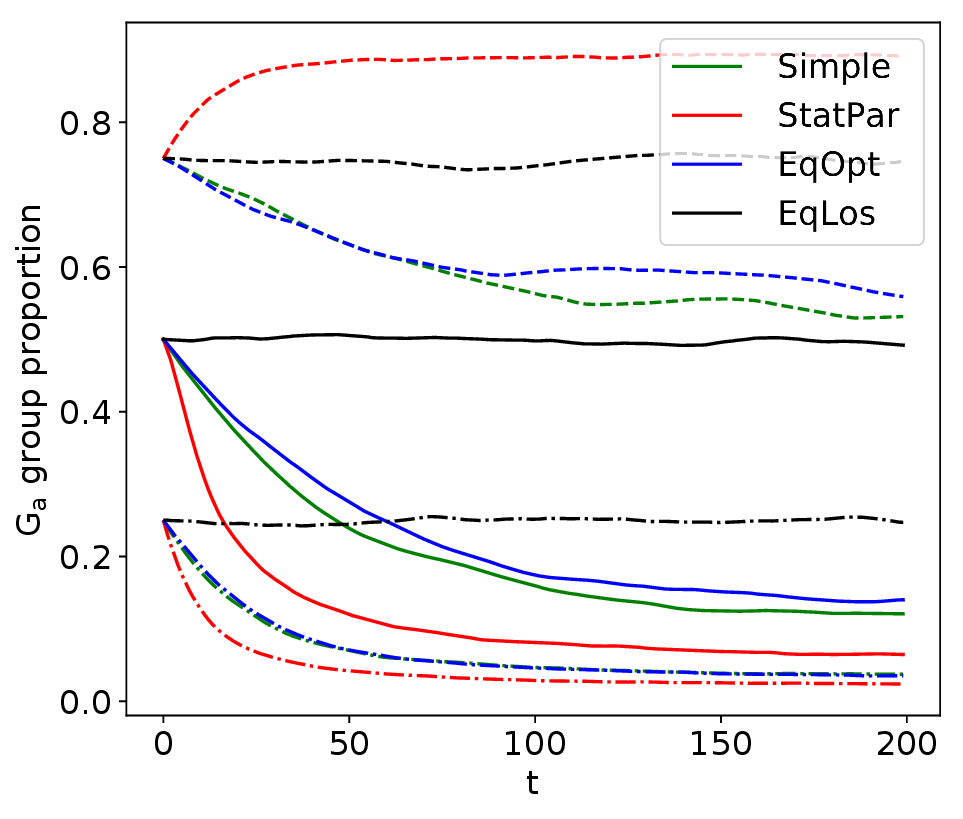}}
	\subfigure[Average total loss ]{\label{fig4:c}\includegraphics[trim={0cm 0cm 0cm 0cm},clip=true,width=0.45\textwidth]{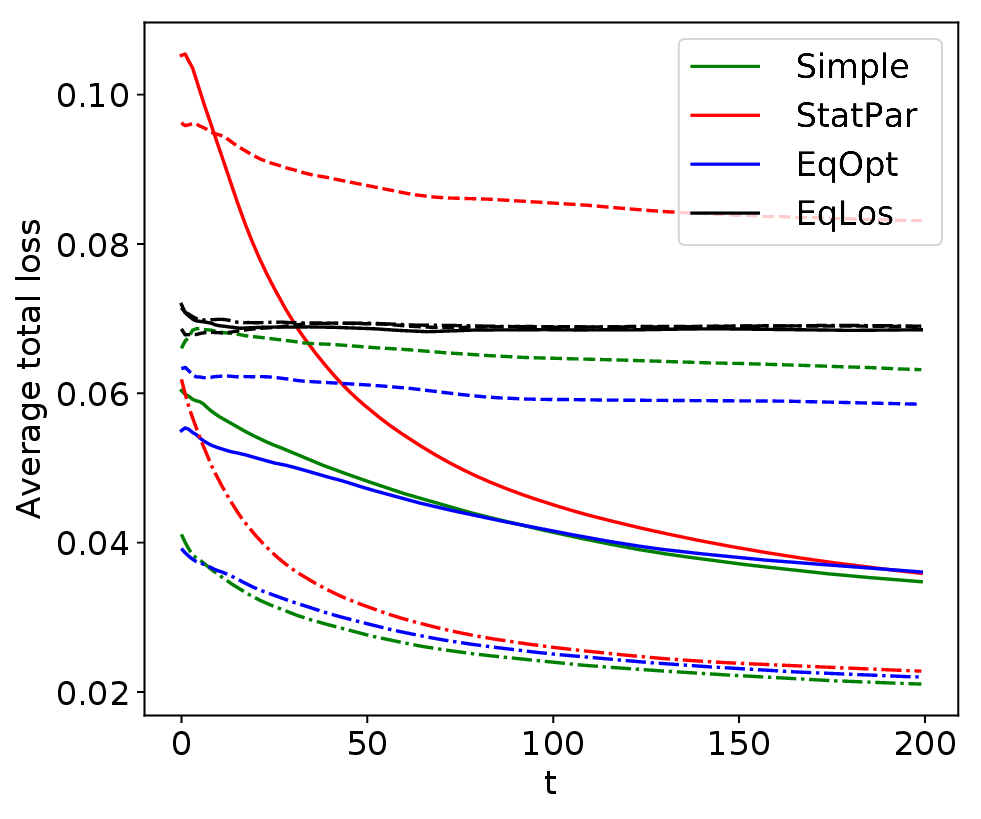}}
	\caption{Sample paths for truncated normal example under different fairness criteria when $\beta_a + \beta_b = 20000$. 
		Group proportion $\overline{\alpha}_a(t)$ and average total loss are shown in Fig.\ref{fig4:b}\ref{fig4:c} respectively: solid lines are for the case $\beta_a = \beta_b$, dashed lines for $\beta_a = 3\beta_b$, and dotted dashed lines for $\beta_a = \beta_b/3$.}
	\label{fig4}
\end{figure}

Fig. \ref{fig4} shows sample paths of the group proportion and average total loss using one-shot fair decisions and different combinations of 
$\beta_a$, $\beta_b$ under dynamics with $\pi_{k,t}(\cdot) = \nu(L_{k,t}(\cdot))$. In all cases convergence is reached (we did not include the decisions $\theta_k(t)$ but convergence holds there as well).  In particular, under \texttt{EqLos} fairness, the group representation is sustained throughout the horizon. By contrast, under other fairness constraints, even a ``major'' group (one with a larger arrival $\beta_k$) can be significantly marginalized over time (blue/green dashed line in Fig. \ref{fig4:b}). This occurs when the loss of the minor group happens to be smaller than that of the major group, which is determined by feature distributions of the two groups (see Fig. \ref{fig55}). Whenever this is the case, the one-shot fair decision will seek to increase the minor group's proportion in order to drive down the average loss. 

\begin{figure}[h]
	\centering  
	\subfigure[Feature distributions illustration]{\label{fig55:a}\includegraphics[trim={0cm 0cm 0cm 0cm},clip=true,width=0.45\textwidth]{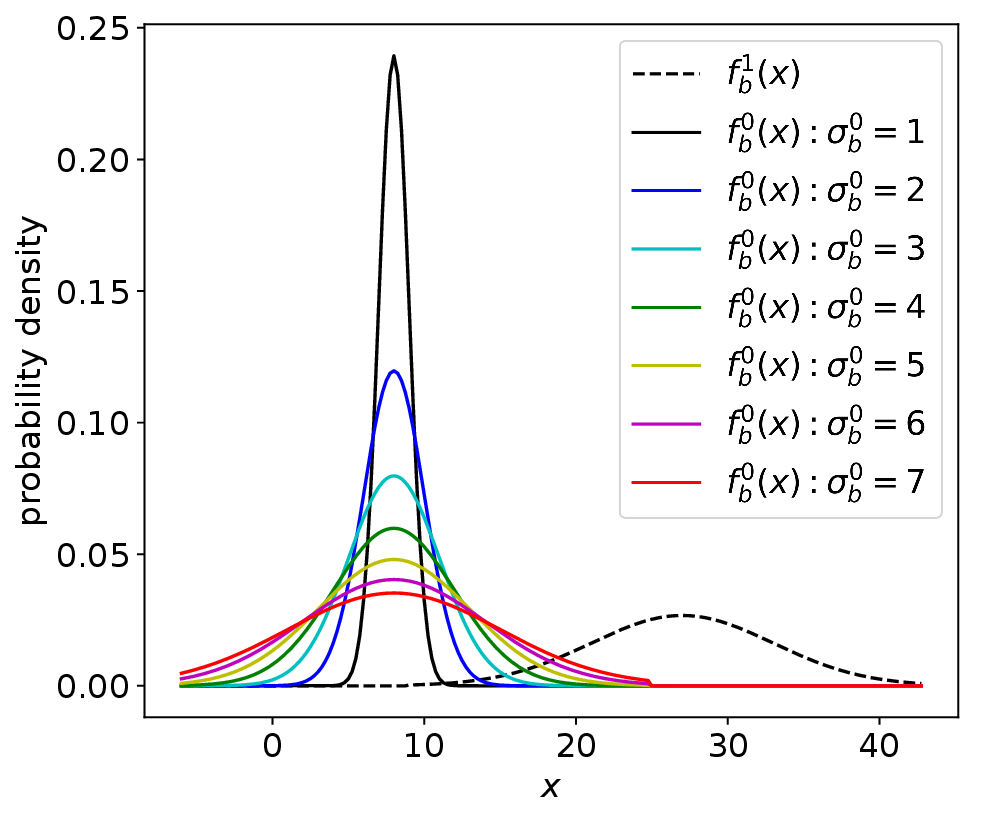}}
	\subfigure[Group proportion $\beta_a=\beta_b$ ]{\label{fig55:b}\includegraphics[trim={0cm 0cm 0cm 0cm},clip=true,width=0.45\textwidth]{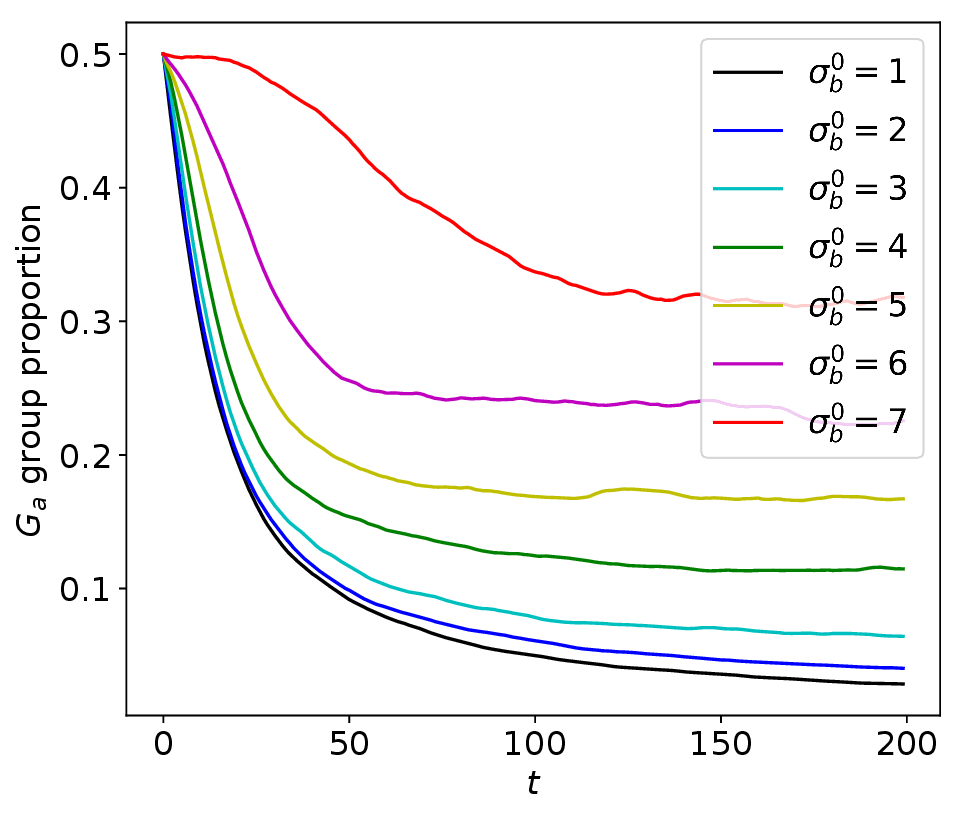}}
	\caption{Change $f_b^0(x)$ by varying $\sigma_b^0 \in \{1,2,3,4,5,6,7\}$. As $\sigma_b^0$ increases, the overlap area with $f_b^1(x)$ also increases as shown in Fig. \ref{fig55:a}.  Fig. \ref{fig55:b} shows the result under \texttt{StatPar} fairness. Given $\theta_a(t)$, the larger $\sigma_b^0$ results in the larger $L_b(\theta_b(t))$ and thus the smaller $G_b$'s retention rate. 
	}
	\label{fig55}
\end{figure}

\subsection{Dynamics driven by other factors}
To sustain the group representation, the key point is that the fairness definition should match the factors that drive user departure and arrival. If adopt different dynamic models, different fairness criteria should be adopted. Two examples with different dynamics and the performance of four fairness criteria are demonstrated in Fig. \ref{fig10}.

\begin{figure}[h]
	\centering   
	\subfigure[Users from $G_k$ are driven by false negative rate]{\label{fig10:b}\includegraphics[width=0.45\textwidth]{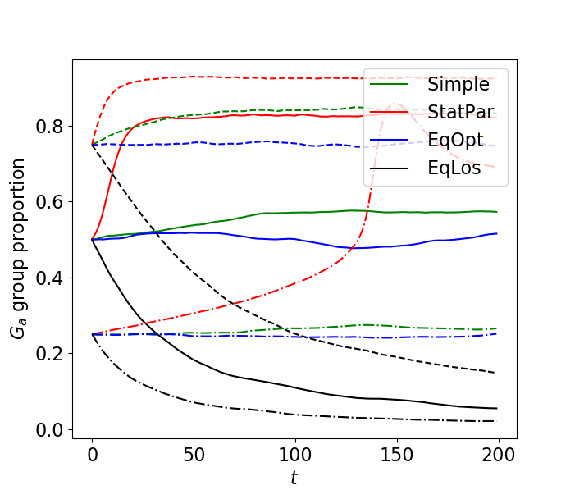}}
	\subfigure[Users from $G_k^j$ are driven by their own perceived loss ]{\label{fig10:a}\includegraphics[trim={0cm 0cm 0cm 0cm},clip=true,width=0.43\textwidth]{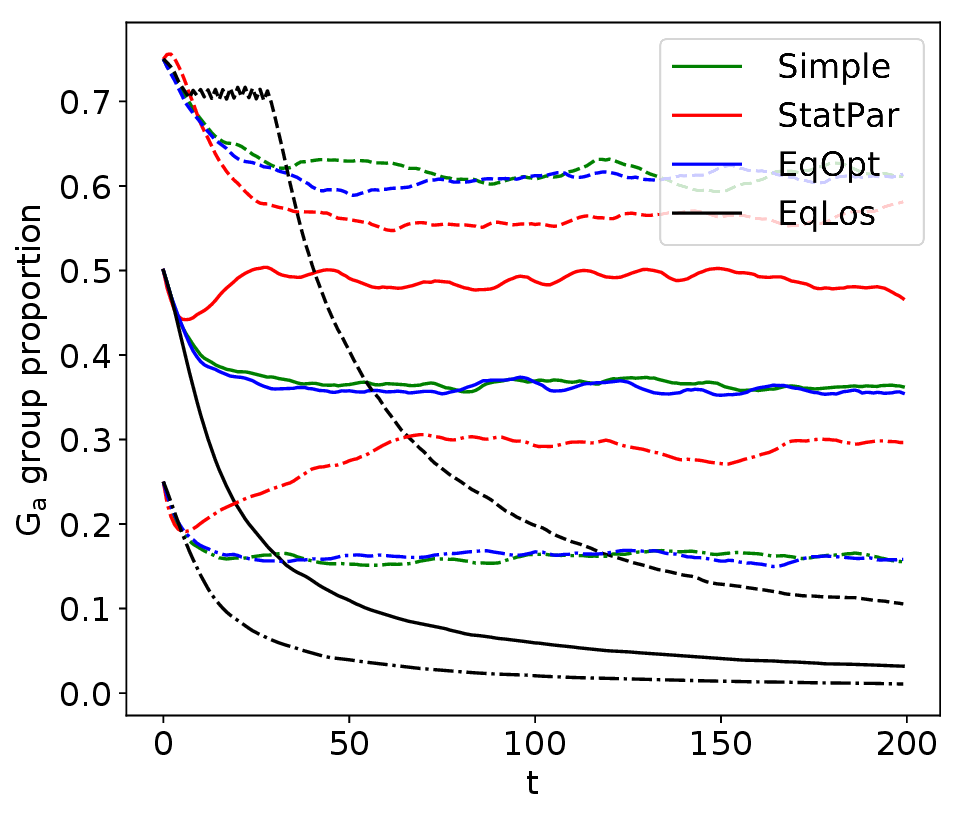}}
	\rev{	\caption{Sample paths under different dynamic models: Three cases are demonstrated including $\beta_a = \beta_b$ (solid curves); $\beta_a = 3\beta_b$ (dashed curves); $\beta_a = \beta_b/3$ (dotted dash curves). Fig. \ref{fig10:b} illustrates the model where the user departure is driven by false negative rate: $N_k(t+1) = N_k(t)\nu(\text{FN}_k(\theta_k(t))) + \beta_k$,  with $\text{FN}_k(\theta_k(t)) = \int_{\theta_k(t)}^{\infty}f_k^0(x)dx $. Under this model \texttt{EqOpt} is better at maintaining representation. 
			Fig. \ref{fig10:a} illustrates the model where the users from each sub-group $G_k^j$ are driven by their own perceived loss: $N_k^j(t+1) = N_k^j(t)\nu(L_k^j(\theta_k(t))) +g_k^j\beta_k$, with $L_k^j(\theta_k)$ being false positives for $j=0$ and false negatives for $j=1$.  Under this model none of the four criteria can maintain group representation. 
		}
		\label{fig10}}
\end{figure}

\subsection{When distributions are learned from users in the system}

If $f_k^j(x)$ is unknown to the decision maker and the decision is learned from users in the system, then as users leave the system  the decision can be more inaccurate and the exacerbation could potentially get more severe. In order to illustrate this, we first modify the dynamic model such that the users' arrivals are also effected by the model accuracy,\footnote{The size of one group can decrease in this case, while the size of two groups is always increasing for the dynamic \eqref{eq:dynamic}. } i.e., $N_k(t+1) = (N_k(t)+\beta_k )\nu(L_k(\theta_k(t))) $. We compare the performance of two cases: \textit{(i)} the Bayes optimal decisions are applied in every round; and \textit{(ii)} decisions in $(t+1)$th round are learned from the remaining users in $t$th round.  The empirical results are shown in Fig. \ref{fig7} where each solid curve (resp. dashed curve) is a sample path of case \textit{(i)} (resp. case \textit{(ii)}). Although $\beta_a=\beta_b$, $G_b$ suffers a smaller loss at the beginning and starts to dominate the overall objective gradually. It results in the less and less users from group $G_a$ than $G_b$ in the sample pool and the model trained from minority group $G_a$ suffers an additional loss due to its insufficient samples. In contrast, as $G_b$ dominates more in the objective and its loss may be decreased compared with the case \textit{(i)} (See Fig. \ref{fig7:c}). As a consequence, the exacerbation in group representation disparity gets more severe (See Fig. \ref{fig7:a}).

\begin{figure}[h]
	\centering   
	\subfigure[Group proportion]{\label{fig7:a}\includegraphics[trim={0cm 0cm 0cm 0cm},clip=true,width=0.32\textwidth]{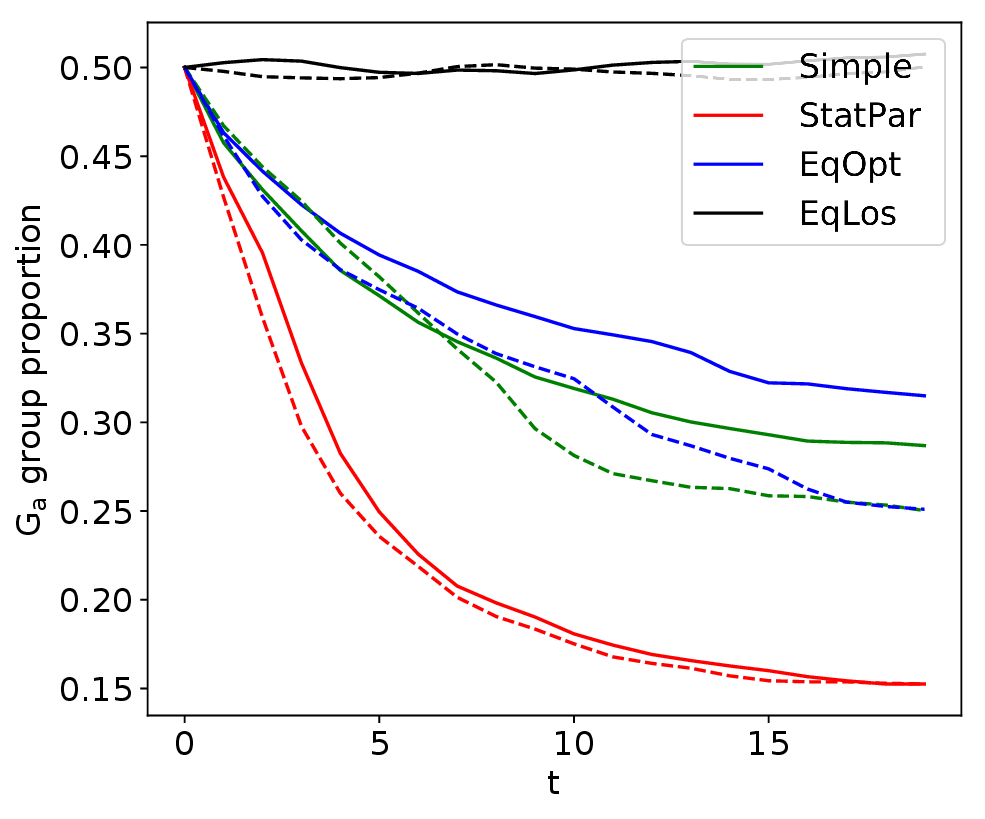}}
	\subfigure[$G_a$'s total population]{\label{fig7:b}\includegraphics[trim={0cm 0.1cm 0cm 0.2cm},clip=true,width=0.33\textwidth]{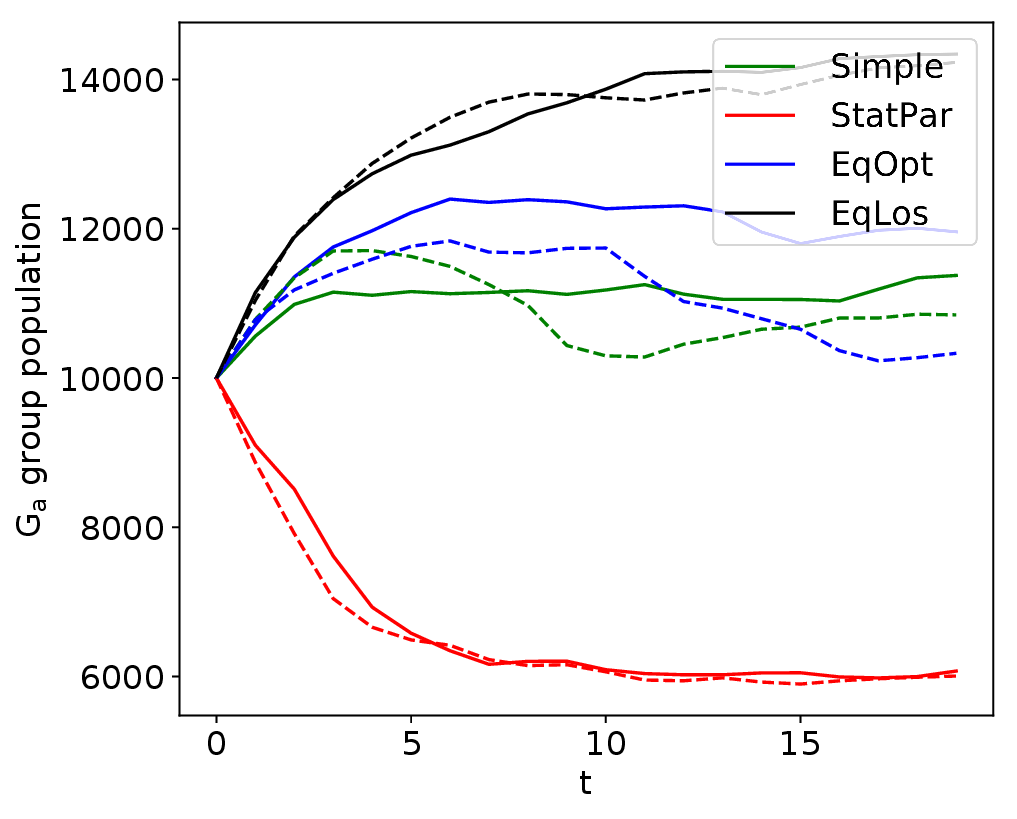}}
	\subfigure[$G_b$'s total population]{\label{fig7:c}\includegraphics[trim={0cm 0.1cm 0cm 0.2cm},clip=true,width=0.33\textwidth]{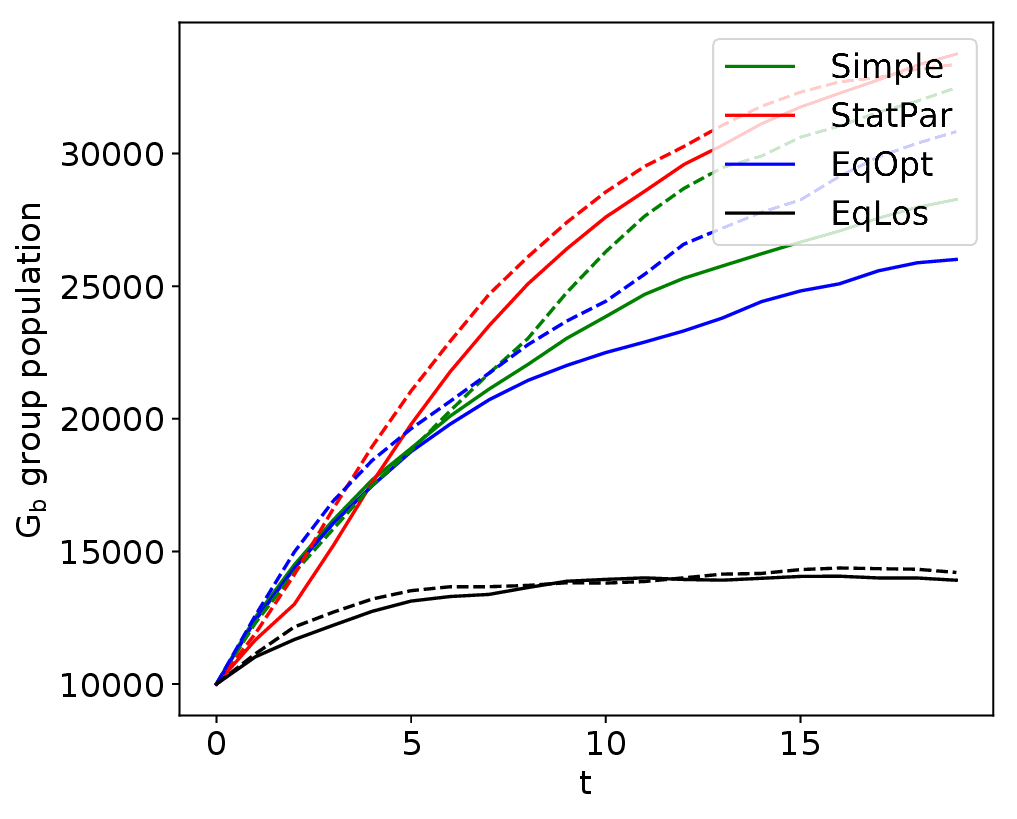}}
	\rev{\caption{Impact of the classifier's quality: dashed curves represent the results for decisions learned from users (case \textit{(ii)}), solid curves represent the results for Bayes optimal decisions (case \textit{(i)}). It shows the exacerbation of group disparity get more severe under case \textit{(ii)} for \texttt{Simple}, \texttt{EqOpt} and \texttt{StatPar} criteria.}
		\label{fig7}}
\end{figure}

\end{document}